%% file: main.tex
\documentclass{article}
\usepackage{iclr2026_conference,times}

\usepackage[hyphens]{url}
\usepackage{graphicx}
\usepackage{caption}

\input{packages}

\title{A Stochastic--Geometric Theory of Scaling Laws in Grokking}

\preprintcopy

\author{
R\'ois\'in Luo\thanks{Correspondence to: \texttt{roisincrtai@gmail.com}} \\
Research Ireland -- Centre for Research Training in AI\\
University of Galway \\
\And
Christian Gagn\'e \\
Universit\'e Laval \\
Canada--CIFAR AI Chair \\
Mila -- Qu\'ebec AI Institute
\And
Jonas Ngnaw\'e \\
Universit\'e Laval \\
Mila -- Qu\'ebec AI Institute
\And
Ihsan Ullah \\
Visual Intelligence Lab\\
University of Galway
\And
Karyn Morrissey \\
University of Galway
}

\begin{document}

\maketitle

\begin{abstract}
Delayed generalization (\ie~grokking) refers to the phenomenon in which a neural network fits its training data early in training but only begins to generalize after a prolonged delay, often through an abrupt transition. Despite extensive empirical study, its underlying mechanism remains poorly understood. In this work, we first theoretically characterize a shell--core topological configuration of the reachable solution space induced by Adam's optimization dynamics with weight-shrinkage regularization, supported by empirical evidence. This optimization-induced topological configuration gives rise to grokking. In model's parameter space, random initialization solutions concentrate on a thin outer spherical shell, enclosing another spherical shell of memorization solutions, which in turn contains a core corresponding to the generalization solutions. Leveraging stopping-time theory, we then analyze the geometry of this topological configuration and the solution transition time at which optimization trajectories escape the memorization manifold and first reach the boundary of the generalization manifold. Our theoretical analysis derives grokking scaling laws for the learning rate, batch size, and $\ell_2$ regularization coefficient, which are further validated through experiments and shown to recover results from prior literature.


\end{abstract}

\input{text}

\ificlrfinal

\section*{Acknowledgments}

This research was financially supported in part by \textbf{Taighde \'Eireann} -- Research Ireland under Grant No. 18/CRT/6223, and by the J.E. Cairnes School of Business \& Economics, University of Galway, Ireland. It was also supported in part by computational resources and services provided by \textbf{Calcul Qu\'ebec} (\url{calculquebec.ca}) and the \textbf{Digital Research Alliance of Canada} (\url{alliancecan.ca}). Partial computational support was also provided by \textbf{Taighde \'Eireann} -- Research Ireland under Grant No. SFI/12/RC/2289\_P2 and the Insight Research Ireland Centre for Data Analytics. The authors gratefully acknowledge Prof. Karyn Morrissey of the J.E. Cairnes School of Business \& Economics, University of Galway, Ireland, for the support. For the purpose of open access, the author has applied a CC BY public copyright licence to any Author Accepted Manuscript version arising from this submission.
\fi

\bibliographystyle{iclr2026_conference}
\bibliography{references}

\appendix

\input{appendix}

\end{document}

%% file: packages.tex
\usepackage{titletoc}

\usepackage{mathrsfs}
\usepackage{physics}
\usepackage{subcaption}
\usepackage{amsmath}
\usepackage{amsthm}
\usepackage{amssymb}
\usepackage{booktabs}

\usepackage{amssymb}
\usepackage{textcomp}

\usepackage{xcolor}

\usepackage[
    colorlinks=true,
    linkcolor=blue,
    citecolor=blue,
    urlcolor=blue
]{hyperref}

\newcommand{\ie}{\emph{i.e.}}
\newcommand{\eg}{\emph{e.g.}}
\newcommand{\etc}{etc}

\theoremstyle{plain}

\newtheorem{theorem}{Theorem}
\newtheorem{lemma}{Lemma}

\theoremstyle{definition}
\newtheorem{observation}{Observation}
\theoremstyle{remark}
\newtheorem{remark}{Remark}

\usepackage{tikz}
\usetikzlibrary{positioning,arrows.meta,calc}
\usetikzlibrary{arrows.meta, decorations.pathmorphing, positioning, calc}
\usetikzlibrary{arrows.meta,calc,decorations.pathreplacing}
\usetikzlibrary{tikzmark,calc,arrows.meta}

%% file: text.tex
\section{Introduction}












Neural networks trained on noise-free, highly structured learning tasks have been observed to exhibit an \emph{epiphany} phenomenon known as delayed generalization, or grokking~\citep{power2022grokking}. In these settings, models rapidly achieve near-zero training loss, often early in training, yet fail to generalize for an extended period before abruptly transitioning to strong test performance. This behavior has been reported across a range of tasks with exact underlying structure \citep{HwangPark2026}, such as modular arithmetic \citep{power2022grokking,zimingliuNIPS2022grokking,zhong2023clock}, algorithmic reasoning \citep{nanda2023progress}, and group-theoretic learning \citep{stander2024grokking,NotsawoDumasRabusseau2026}. Moreover, prior work has shown that the onset of grokking is highly sensitive to hyperparameter choices \citep{zhong2023clock}, including the fraction of training data, learning rate, batch size, and regularization coefficient~\citep{power2022grokking,zimingliuNIPS2022grokking}. Despite extensive empirical investigation, however, its underlying mechanism remains poorly understood.

\begin{figure*}[t!]
    \centering

    \begin{subfigure}[t]{0.32\linewidth}
        \centering
        \includegraphics[width=\linewidth]{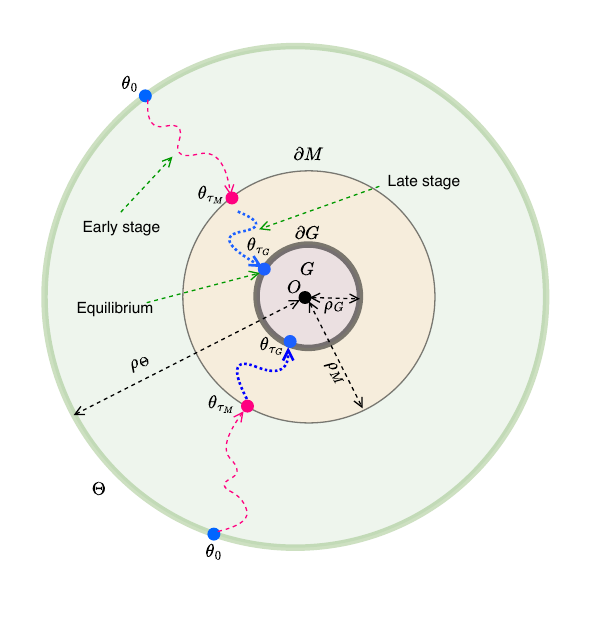}
        \caption{Weight-Shrinkage Grokking}
        \label{fig:topological_prior_illustration}
    \end{subfigure}
    \begin{subfigure}[t]{0.32\linewidth}
        \centering
        \includegraphics[width=\linewidth]{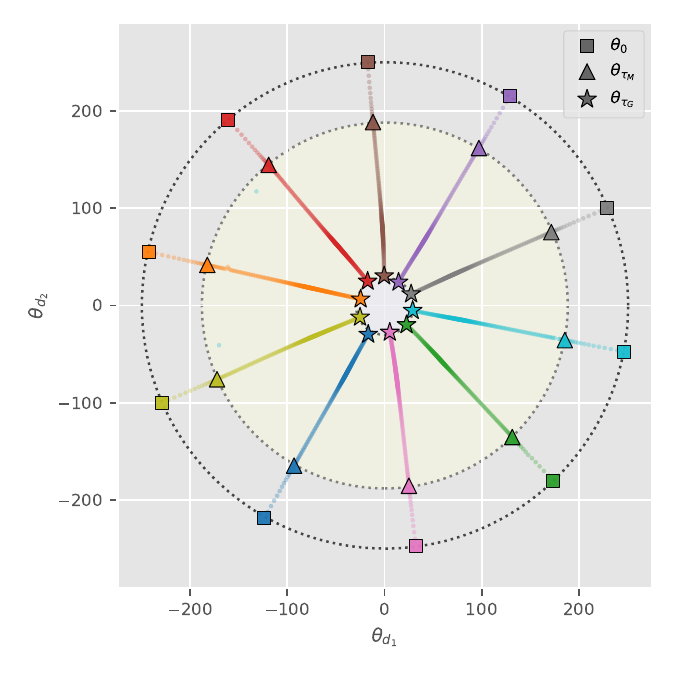}
        \caption{Trajectories w/ $S_5$}
        \label{fig:trajectories_evidence_s5}
    \end{subfigure}
    \begin{subfigure}[t]{0.32\linewidth}
        \centering
        \includegraphics[width=\linewidth]{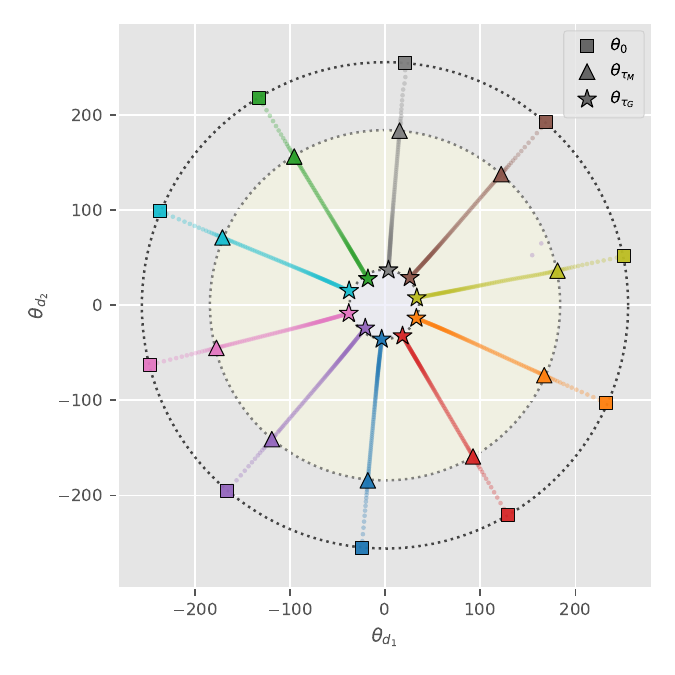}
        \caption{Trajectories w/ $\mathbb{Z}_{127}$}
        \label{fig:trajectories_evidence_z127}
    \end{subfigure}

   \caption{\textbf{Optimization-Induced Shell--Core Topology and Empirical Evidence.} Left (\subref{fig:topological_prior_illustration}): grokking dynamics under weight-shrinkage regularization induce a shell--core topology in parameter space, where initialization points $\theta_0$ concentrate on a hyperspherical shell $\Theta$ with radius $\rho_\Theta = O(k)$ and thickness $O(k/\Tilde{p})$, where $k$ denotes the number of parameter matrices and $\Tilde{p}$ denotes effective parameter dimension~(see Section~\ref{sec:initialization_concentration}), enclosing the memorization shell $M \setminus G$ and the generalization core $G$; the radii $\rho_G$, $\rho_M$, and $\rho_\Theta$ are determined by the task and optimization configuration. Middle--right (\subref{fig:trajectories_evidence_s5}--\subref{fig:trajectories_evidence_z127}): empirical evidence on $S_5$ and $\mathbb Z_{127}$ using isometric manifold learning (\eg, MDS) to visualize ten optimization parameter trajectories, where $\theta_0$, $\theta_{\tau_M}$, and $\theta_{\tau_G}$ denote initialization, first memorization, and first generalization states. The projections show that Adam-reachable $M$ and $G$ are approximately symmetric embedded in ambient space $\mathbb R^p$.} 

    \label{fig:shell_core_prior_and_evidence}

\end{figure*}


Empirical findings consistently suggest that training dynamics in the grokking regime exhibit three-stage behavior~\citep{power2022grokking,nanda2023progress,kumar2024grokking}. In the initial stage, the model rapidly interpolates the training set and reaches the memorization regime. In the second stage, the trajectory remains near the memorization solutions for an extended period before escaping and eventually reaching the generalization regime. The length of this extended period scales with the task, learning rate, batch size, and regularization coefficient. In the third stage, drift and diffusion equilibrate in the optimization-induced dynamics, and the trajectory settles within the generalization solution region centered at its minimizer. 

Under the regime with $\ell_2$ weight-shrinkage regularization, these observations imply the existence of two distinct classes of solutions: \emph{memorization solutions}, which only interpolate the training data and does not generalize, and \emph{generalization solutions}, which additionally achieve low loss with respect to the underlying data distribution. Their topological configuration is thus induced by optimization dynamics, and characterized into three stages: the \emph{early stage}, the \emph{late stage}, and the \emph{equilibrium}, as shown in Figure~\ref{fig:topological_prior_illustration}. The \emph{early-stage} dynamics are driven by deterministic drift from large gradients, which dominates the optimization trajectory and rapidly drives it toward the memorization manifold. At this stage, the gradient fluctuations are anisotropic. The \emph{late-stage} dynamics are driven more slowly by decaying but nonzero gradients, together with isotropic gradient fluctuations, which guide the optimization trajectory across the trivial memorization solutions. Once the trajectory reaches the generalization set, the drift and diffusion in the optimization dynamics equilibrate and confine the trajectory within the generalization manifold, referred to as the \emph{equilibrium}. Figure~\ref{fig:trajectories_evidence_s5}--\ref{fig:trajectories_evidence_z127} provide empirical evidence consistent with this optimization-induced topological configuration by visualizing optimization trajectories via isometric manifold learning. This optimizer-induced topological configuration, as illustrated in Figure~\ref{fig:topological_prior_illustration}, gives rise to grokking and forms a shell–core configuration: initialization solutions concentrate on a thin hyperspherical shell, which encloses a shell of memorization solutions containing a compact core of generalization solutions. Detailed empirical results of Adam--induced shell--core radii are provided in Appendix~\ref{app:adam_induced_radius_concentration}.

We theoretically analyze the geometry of this topological configuration and derive scaling laws for the manifold radii and the solution transition time from memorization to generalization through a \emph{stopping-time analysis} \citep{dynkin1965markov} of stochastic differential equations (SDEs) \citep{oksendal2003sde}. The remainder of the paper is organized as follows. Section~\ref{sec:initialization_concentration} presents initialization concentration, and Section~\ref{sec:joint_state_sde} establishes the joint-state SDE for Adam \citep{kingma2015adam} designed to analyze grokking dynamics. Building on this formulation, Section~\ref{sec:law_memorization} derives the scaling law for the memorization-manifold radius. Under the late-stage reduction observations of grokking, Section~\ref{sec:late_stage_radius_dynamics} derives a closed-form preconditioned radius SDE for the evolution of manifold radii. Sections~\ref{sec:law_generalization}--\ref{sec:law_transition_time} derive the scaling laws for the generalization-manifold radius and the solution transition time from memorization to generalization. Finally, Section~\ref{sec:validation} validates the theory through experiments on modular arithmetic and group-theoretic learning tasks, and by comparison with the grokking literature. The contributions are summarized below:
\begin{enumerate}

    \item \textbf{Optimization-Induced Shell--Core Topology.} We introduce an optimizer-induced shell--core prior for grokking, where initialization concentrates on a thin hyperspherical shell enclosing memorization solutions, which in turn contain a compact core of generalization solutions.

    \item \textbf{Stochastic--Geometric Characterization.} Leveraging the topological prior, we formulate grokking as a stochastic transition problem and characterize the geometry by the first stopping time at which stochastic gradient flows escape the memorization manifold and reach the generalization manifold.

    \item \textbf{Empirical, Literature, and Symbolic-Algebra Validation.} We validate our theory through experiments on group-theoretic learning and modular arithmetic tasks, and by comparison with the literature, recovering known scaling laws for the learning rate, batch size, and regularization coefficient. We additionally machine-check all closed-form and asymptotic results appearing in the proofs by a symbolic algebra system (SAS) with Julia.
    
\end{enumerate}

\section{Notations}
\label{sec:notations}

\paragraph{Data and Network.} Let $\{(x_i,y_i)\}_{i=1}^n$ denote a finite training set of size $n$, where each pair $(x_i,y_i)\in\mathcal X\times\mathcal Y$ is drawn i.i.d. from the empirical distribution $\widehat P_{\mathcal{XY}}$, with underlying true distribution $P_{\mathcal{XY}}$. Let $f_\theta:\mathcal X\to\mathcal Y$ be a model parameterized by $\theta\in\mathbb R^p$. Throughout, we use $X_t$ to denote the time-indexed value of a variable $X$ at time $t$; for example, $\theta_t$ denotes the parameter at time $t$. 

\paragraph{Instance, Batch, Empirical, and True Loss.} Let $\ell_f(x,y;\theta)$ denote the instance loss evaluated at a point $(x,y)$. Let $\mathscr{L}_{f}(\xi;\theta)$ denote the \emph{batch loss} evaluated on a mini-batch $\xi=\{(x_{\xi_i},y_{\xi_i})\}_{i=1}^b$ of size $b$. Let $\widehat{\mathscr{L}}_f(\theta):=\mathbb{E}_{\xi\sim \widehat P_{\mathcal{XY}}}\!\left[\mathscr{L}_{f}(\xi;\theta)\right]$ denote the \emph{empirical loss} evaluated over the training set. Let $\mathscr{L}_{f}(\theta)$ denote the \emph{true loss} with respect to the underlying data distribution $P_{\mathcal{XY}}$.


\paragraph{Initialization, Memorization, and Generalization Manifolds.} We study the solution manifolds reachable under the dynamics induced by Adam~\citep{kingma2015adam}. The ambient parameter space is $\mathbb R^p$ endowed with the Euclidean metric. Let $\Theta$ denote the set of \emph{initialization solutions} under some initialization distribution $P_\Theta$. Let $M:=\{\theta\in\mathbb R^p \mid \widehat{\mathscr L}_f(\theta)<\epsilon \;\land\; p_{\mathrm{Adam}}(\theta\mid \theta_0\in\Theta)>0\}$ denote the set of \emph{memorization solutions} under Adam's dynamics with a loss tolerance $\epsilon$. Similarly, let $G:=\{\theta\in\mathbb R^p \mid \mathscr L_f(\theta)<\epsilon \;\land\; p_{\mathrm{Adam}}(\theta\mid \theta_0\in\Theta)>0\}$ denote the set of \emph{generalization solutions}. Throughout, $\partial S$ and $S^\circ$ denote the boundary and interior of a set $S$, respectively; $\partial M$ is the outer boundary of $M\setminus G$, and $\partial G$ is its inner boundary. Let
$\rho_M^2:=\mathbb{E}_{\theta_0 \in \Theta,\theta\in\partial M}\!\left[\|\theta\|_2^2\mid \theta_0\right]$
and
$\rho_G^2:=\mathbb{E}_{\theta_0\in\Theta,\theta\in\partial G}\!\left[\|\theta\|_2^2\mid \theta_0\right]$
denote the mean squared radii of the memorization and generalization manifolds, respectively.

\section{Initialization Concentration}
\label{sec:initialization_concentration}


We first demonstrate the topological concentration of network-parameter initialization under normal and uniform distributions. Consider that $\theta_0=(\theta_0^{(1)},\cdots,\theta_0^{(k)}) \in \mathbb{R}^p$ consists of $k$ sub-vectors such that $\theta_0^{(j)} \in \mathbb{R}^{p_j}$ where $\sum_{j=1}^{k} p_j=p$. Define effective dimension $\Tilde{p}$ such that $\frac{1}{\Tilde{p}}:=\frac{1}{k} \sum_{j=1}^k \frac{1}{p_j}$. For coordinate-wise initialization $[\theta_0^{(j)}]_i \sim \mathcal{N}(0,\sigma_j^2)$, the initialization radius square
\begin{align}
\rho_{\Theta}^2(\theta_0) = \|\theta_0\|_2^2 
=
\sum_{j=1}^{k} 
\Biggl(\sum_{i=1}^{p_j} \; \Bigl|[\theta_0^{(j)}]_i\Bigr|^2\Biggr)
\sim \sum_{j=1}^{k}\sigma_j^2\chi_{p_j}^2
,
\end{align}
follows a sum of scaled chi-square distributions, where each $\sigma_j^2\chi_{p_j}^2$ has $p_j$ degrees of freedom. Hence, the radius of $\Theta$ admits the closed-form expression
\begin{align}
\tilde{\sigma}^2
:=
\frac{1}{p}
\sum_{j=1}^k p_j\sigma_j^2,
\quad
\mathbb{E}_{\theta_0}\bigl[ \rho_{\Theta}^2(\theta_0) \bigr]
\;=\;  \Tilde{\sigma}^2p = O(k),
\quad
\mathrm{Var}_{\theta_0}\bigl[ \rho_{\Theta}^2(\theta_0) \bigr] =2\sum_{j=1}^k p_j\sigma_j^4 = O(\frac{k}{\Tilde{p}})
\notag
,
\end{align}
under a coordinate scaling $\sigma_j=O(1/\sqrt{p_j})$~\citep{lecun1998efficient,glorot2010understanding}. This suggests that initialization solutions concentrate on a thin annulus of radius $O(k)$ with thickness $O(\sqrt{k/\Tilde{p}})$ in parameter space. Similar concentration also holds for coordinate-wise uniform initialization distribution $[\theta_0^{(j)}]_i\sim \mathcal{U}(-\varepsilon_j,\varepsilon_j)$, 
\begin{align}
\tilde{\varepsilon}^2
:=
\frac{1}{p}
\sum_{j=1}^k p_j\varepsilon_j^2,
\quad
\mathbb{E}_{\theta_0}\bigl[ \rho_{\Theta}^2(\theta_0) \bigr]
\;=\;  \frac{p\Tilde{\varepsilon}^2}{3} = O(k),
\quad
\mathrm{Var}_{\theta_0}\bigl[ \rho_{\Theta}^2(\theta_0) \bigr] = \frac{4}{45}\sum_{j=1}^k p_j\varepsilon_j^4 = O(\frac{k}{\Tilde{p}})
\notag
,    
\end{align}
under a coordinate scaling $\varepsilon_j=O(1/\sqrt{p_j})$. Proofs for the normal and uniform concentration results are provided in Appendices~\ref{app:initialization_concentration_normal_appendix} and~\ref{app:initialization_concentration_uniform_appendix}, respectively. 

\section{Manifold Radius and Solution Transition Analysis}
\label{sec:radius_transition_analysis}

We characterize Adam's dynamics through its continuous-time limit, formulated as an SDE on the joint state of the parameters and first- and second-moment estimates. Based on this joint-state SDE, we derive the scaling law for the \emph{memorization radius} $\rho_M$, defined as the parameter norm at the first-hitting time of the empirical-loss boundary $\partial M$, via perturbation method \citep{benderorszag1999}. Further leveraging two reduction properties of the grokking dynamics, we reduce the intractable joint-state SDE for Adam (Section~\ref{sec:joint_state_sde}) to a tractable radial SDE. This reduction enables an analytical characterization of the \emph{generalization radius} $\rho_G$, defined as the parameter norm at the first-hitting time of the generalization boundary $\partial G$, and the \emph{solution transition time} $\mathbb{E}[\tau_{M\to G}]$, defined as the mean time required to transition from memorization boundary $\partial M$ to generalization boundary $\partial G$. A high-level sketch for theoretical analysis framework is illustrated as in Figure~\ref{fig:high_level_sketch_flowchart} of the appendix.

\subsection{Adam's Joint-State Continuous-Time SDE Limit}
\label{sec:joint_state_sde}

We consider Adam dynamics in the small learning rate and large batch size regime $\eta \to 0,\, b \to \infty$, with exponential moving average (EMA) coefficients $(\beta_1,\beta_2)\in(0,1)^2$ and an $\ell_2$ regularizer $R(\theta)=\tfrac{1}{2}\|\theta\|_2^2$ with coefficient $\lambda>0$. Let $g_t := \nabla[\mathscr{L}^*_f(\xi_t;\theta_t)+\lambda R(\theta_t)]$ denote the regularized mini-sample gradient flow, with per-sample mean $\bar g_t:= \mathbb{E}_{s}[\nabla\ell_f(s;\theta_t)]$ and covariance $\Sigma_t:=\mathrm{Cov}_s[\nabla\ell_f(s;\theta_t)]$. For sufficiently large batch size $b$, the mini-batch gradient admits the distributional limit $g_t \sim \mathcal{N}\bigl(\bar g_t, \Sigma_t/b\bigr)$. We refer to this setting as \emph{coupled weight decay}, since the $\ell_2$ regularizer contributes to the gradient used to update Adam's first- and second-moment estimates. In contrast, the \emph{decoupled weight decay} scheme applies weight decay as a separate parameter-shrinkage step and is commonly referred to as \emph{AdamW} \citep{loshchilov2019decoupled}. 

\paragraph{Adam's Parameter Update Rules.} Let $\theta \in \mathbb{R}^p$ be model's parameters. Let $m,v \in \mathbb{R}^p$ be the first- and second-moment estimates, respectively. Let $\hat{m},\hat{v}$ be the $m,v$ with bias corrections. Adam's discrete update rules at iteration $k$ are given as
\begin{gather}
m_{k+1} = \beta_1\,m_k + (1-\beta_1)\,g_k, 
\qquad
v_{k+1} = \beta_2\,v_k + (1-\beta_2)\,(g_k \odot g_k), \notag\\
\hat m_{k+1} = m_{k+1}\,/\,(1-\beta_1^{k+1}), 
\qquad
\hat v_{k+1} = v_{k+1}\,/\,(1-\beta_2^{k+1}), \notag\\
\theta_{k+1} = \theta_k - \eta\, \hat m_{k+1} \oslash \Bigl(\sqrt{\hat v_{k+1}} + \varepsilon \,\mathbf{1}_p \Bigr) \notag
, 
\end{gather}
with initial values $\theta_0 \in \Theta, m_0 = 0$, $v_0 = 0$, where $\odot$ is element-wise product and $\oslash$ is element-wise division. $\hat{m}$ and $\hat{v}$ are Adam's bias corrections, which compensate the warm-up underestimation of $(m_k,v_k)$ caused by the zero initialization $m_0 = v_0 = 0$. We drop Adam's numerical-stability constant $\varepsilon$ throughout the analysis as it does not affect the analysis. Defining the Adam joint state as $S_t := (\theta_t,m_t,v_t) \in \mathbb{R}^{3p}$ and using the continuous-time interpolation $t=\eta k$, we state the resulting joint-state SDE for Adam in Lemma~\ref{lem:adam_joint_sde}. The proof sketch is illustrated in Figure~\ref{fig:adam_closed_form_sde_proof_flowchart}. The proof is provided in Appendix~\ref{app:adam_ct_sde_appendix}. The technical correctness is verified through an induced radius SDE with It\^o's lemma in Appendix~\ref{app:induced_radius_sde_appendix}. Lemma~\ref{lem:adam_joint_sde} provides a stochastic--theoretical framework for studying grokking dynamics induced by Adam optimizer.

\begin{lemma}[Adam's Joint-State Continuous-Time SDE Limit]
\label{lem:adam_joint_sde}
Let $S_t := (\theta_t^\top,m_t^\top,v_t^\top)^\top \in\mathbb R^{3p}$ denote the joint state of Adam, where $\theta_t$, $m_t$, and $v_t$ are the parameters, first-moment estimates, and second-moment estimates, respectively. Taking the continuous-time interpolation $t=\eta k$, the discrete Adam updates admit the following It\^o SDE limit on the joint state
\begin{align}
\dd S_t \;=\; \mu(S_t)\,\dd t \;+\; \sqrt{\frac{\eta}{b}}\,\sigma(S_t)\,\dd W_t,
\label{eq:adam_joint_sde}
\end{align}
where $\dd W_t\in\mathbb R^p$ denotes the infinitesimal increment of a Wiener process adapted to the filtration generated by the mini-batch sampling process $\{\xi_t\}$, with drift factor $\mu(S_t)\in\mathbb R^{3p}$ and diffusion factor $\sigma(S_t)\in\mathbb R^{3p\times p}$
\begin{align}
\mu(S_t) \;=\;
\begin{pmatrix}
-\,B(t)\,m_t \oslash \sqrt{v_t} \\[2pt]
-\,\alpha_1\,\bigl(m_t - \bar g_t\bigr) \\[2pt]
-\,\alpha_2\,\bigl(v_t - \overline{g\odot g}_t\bigr)
\end{pmatrix}, 
\quad
\sigma(S_t) \;=\;
\begin{pmatrix}
0 \\[2pt]
\alpha_1\, \Sigma_t^{1/2} \\[2pt]
\alpha_2\, D_t
\end{pmatrix},
\quad
B(t) \;:=\; \frac{\sqrt{\,1 - e^{-\alpha_2 t}\,}}{\,1 - e^{-\alpha_1 t}\,},
\label{eq:adam_joint_drift_diffusion}
\end{align}
where $\alpha_i := (1-\beta_i)/\eta$ and $B(t)$ is the bias-correction factor in continuous time 
and 
\begin{align}
\overline{g\odot g}_t \;:=\; \bar g_t \odot \bar g_t \;+\; \frac{1}{b}\,\mathrm{diag}(\Sigma_t) \in \mathbb{R}^p, \quad 
D_t \;:=\; 2\,\mathrm{diag}\left(\bar g_t\right)\,\Sigma_t^{1/2} \in \mathbb{R}^{p \times p}
.
\label{eq:diffusion_cross_term_def}
\end{align}
In particular, to decompose this SDE into a tractable form, we define $\pi(\theta_t)
:=
\mathrm{diag}\!\left(\overline{g\odot g}_t\right)^{-1/2} \in \mathbb{R}^{p \times p}$ as Adam's preconditioner. We write $\theta_t(\theta_0),\theta_t(S_0)$ to denote the Adam's $\theta$-evolution starting from initial states $\theta_0 \in \Theta,S_0 \in \Theta_S$, respectively, where $\Theta_S$ is the initialization distribution of $S$.

\end{lemma}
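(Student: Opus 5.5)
This is a standard diffusion-approximation argument. Write each discrete Adam increment as its conditional mean given the current state, plus a martingale fluctuation. Then match the first two conditional moments to an It\^o SDE under the time rescaling $t=\eta k$. The scales are chosen so that the EMA rates survive the limit: we set $1-\beta_i=\alpha_i\eta$ with $\alpha_i$ held fixed as $\eta\to 0$. This is the sense in which $\alpha_i=(1-\beta_i)/\eta$ appears in the statement. The noise is then governed by $\eta/b$.

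\textbf{Step 1: moment recursions.} Rewrite $m_{k+1}-m_k=(1-\beta_1)(g_k-m_k)=\eta\alpha_1(\bar g_k-m_k)+\eta\alpha_1(g_k-\bar g_k)$. The fluctuation $g_k-\bar g_k$ has conditional covariance $\Sigma_k/b$. Over one step of length $\dd t=\eta$, the drift is therefore $-\alpha_1(m-\bar g)\,\dd t$. The noise increment has covariance $\eta^2\alpha_1^2\Sigma/b=\eta\,\dd t\,\alpha_1^2\Sigma/b$, which is exactly $(\eta/b)\,\alpha_1^2\Sigma\,\dd t$. This yields the diffusion column $\sqrt{\eta/b}\,\alpha_1\Sigma^{1/2}\dd W_t$.

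For $v$, expand $g\odot g$ around $\bar g$. Its conditional mean is $\bar g\odot\bar g+\mathrm{diag}(\Sigma)/b=\overline{g\odot g}$, which gives the drift $-\alpha_2(v-\overline{g\odot g})$. Its leading fluctuation is $2\,\bar g\odot(g-\bar g)$, which has representation $2\,\mathrm{diag}(\bar g)\Sigma^{1/2}\zeta/\sqrt b$ with $\zeta$ standard normal. The quadratic fluctuation term $(g-\bar g)^{\odot 2}-\mathrm{diag}(\Sigma)/b$ is $O(1/b)$ and is dropped as $b\to\infty$. The diffusion column $\alpha_2 D_t$ follows. Both columns are driven by the same $\zeta$, hence by the same $W_t$; this is why $\sigma$ has a common $p$-dimensional noise.

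\textbf{Step 2: parameter drift and bias correction.} The $\theta$-update has no explicit $g_k$-noise beyond what is already in $m_{k+1},v_{k+1}$. Its increment is $-\eta\,\hat m\oslash\sqrt{\hat v}$, so it contributes pure drift and a zero diffusion block; the noise in $m_{k+1}$ enters $\theta$ only at order $\eta\cdot\sqrt{\eta/b}$ per step, which vanishes in the limit. For the bias corrections, use $\beta_i^{k}=(1-\alpha_i\eta)^{t/\eta}\to e^{-\alpha_i t}$. This gives $\hat m\oslash\sqrt{\hat v}=\frac{\sqrt{1-e^{-\alpha_2 t}}}{1-e^{-\alpha_1 t}}\,m\oslash\sqrt v=B(t)\,m\oslash\sqrt v$.

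\textbf{Step 3: passage to the limit.} Invoke a standard weak-approximation theorem for stochastic modified equations, in the style of Li--Tai--E, or Kushner's martingale-problem convergence. It requires the conditional mean and covariance of the one-step increments to match $\mu\,\eta$ and $(\eta/b)\sigma\sigma^\top\eta$ up to $o(\eta)$, together with bounded higher moments. The limiting object is an SDE with small noise parameter $\sqrt{\eta/b}$, retained as the leading-order diffusion correction rather than sent to zero.

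The main obstacle is that the drift $m\oslash\sqrt v$ is singular near $v=0$, and $B(t)$ blows up like $t^{-1/2}$ as $t\to0$. Lipschitz and growth conditions therefore fail globally. I would first localize: work on $t\ge t_0>0$ and on a stopping region where $v\ge c>0$. The approximation theorem then applies there, and I would remove the localization using $v\to\overline{g\odot g}\ge \mathrm{diag}(\Sigma)/b>0$ under nondegenerate noise. A secondary subtlety is the simultaneous limit $\eta\to0$, $b\to\infty$. I would keep the ratio $\eta/b$ as the diffusion scale and treat all other terms to leading order.
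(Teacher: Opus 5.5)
Your proposal is correct and follows essentially the paper's route. Both arguments match one-step conditional moments under $t=\eta k$ with $1-\beta_i=\alpha_i\eta$ and model $g_k$ as Gaussian via the CLT. Both expand $g\odot g$, drop the $O(1/b^2)$ quadratic fluctuation, and drive the $m$- and $v$-blocks with a single $W_t$. Both use $\beta_i^{t/\eta}\to e^{-\alpha_i t}$ to produce $B(t)$ with a zero $\theta$-diffusion block.

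Your Step~3 goes beyond the paper, which simply replaces $\Delta W_k$ by $\dd W_t$ formally. Note, however, that the one-step covariance $(\eta/b)\sigma\sigma^\top\eta$ is already $O(\eta^2)$, so requiring it to match only up to $o(\eta)$ is vacuous. Such a theorem therefore certifies convergence to the noiseless flow and does not by itself justify the $\sqrt{\eta/b}$ term. That term remains, as in the paper, a leading-order modeling choice unless you pass to a higher-order weak approximation or to a scaling in which the diffusion survives the limit.
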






\begin{remark}
While the continuous-time SDE limit of ordinary stochastic gradient descent (SGD) is now well established \citep{li2017stochastic,mandt2017sgd,luo2025optimizationinduceddynamicslipschitzcontinuity}, a rigorous continuous-time treatment of Adam that explicitly accounts for stochasticity remains, to the best of our knowledge, comparatively less developed. For example, the joint $(m,v)$ deterministic limit of Adam appears in \citep{dasilva2020general}, and approximations for adaptive methods such as RMSprop and Adam were established in \citep{malladi2022adamsde,compagnoni2025adaptive}. Our joint-state SDE limit only partially overlaps with this line of work and yields an SDE limit that accounts for stochasticity without relying on a first-order approximation. 
\end{remark}

\subsection{Memorization Radius}
\label{sec:law_memorization}

Starting from the initial state $S_0=(\theta_0^\top,0_p^\top,0_p^\top)^\top$, Adam's joint-state SDE~\eqref{eq:adam_joint_sde} generates a stochastic trajectory $\{S_t\}$ adapted to the filtration generated by mini-batch sampling. The first time at which $\theta_t$ reaches the memorization boundary $\partial M$ defines the \emph{stopping time} \citep{karatzas1991brownian} and mean squared memorization radius as
\begin{align}
\tau_M(S_0) := \inf\{t \ge 0 : \theta_t(S_0) \in \partial M\},
\qquad \text{and} \qquad 
\rho_M^2 := \mathbb{E}_{S_0 \in \Theta_S}\bigl[\|\theta_{\tau_M(S_0)}\|_2^2  \bigr]
,
\end{align}
respectively. This defines a \emph{Dirichlet exit problem} on the state $S_t$, with absorbing boundary $\partial M$ and the \emph{infinitesimal generator} (\ie, a differential operator) \citep{dynkin1965markov} induced by Adam's SDE in Lemma~\ref{lem:adam_joint_sde} as:
\begin{align}
\mathcal{L}_S[\bullet] \;:=\; \mu(S)^{\!\top}\nabla_S[\bullet] \;+\; \frac{1}{2}\,\frac{\eta}{b}\,\mathrm{tr}\!\bigl(\Sigma_S(S)\,\nabla_S^2\,[\bullet] \,\bigr),
\end{align}
which contains an operator modulated by a coefficient $\eta/b$ with $\Sigma_S(S) := \sigma(S)\sigma(S)^{\!\top}$. The resulting Dirichlet partial differential equation (PDE) admits no closed-form solution, we apply a low-order regular perturbation expansion~\citep{benderorszag1999} in $\varepsilon=\eta/b$ to obtain Theorem~\ref{thm:memorization_radius_scaling}. The proof is provided in Appendix~\ref{app:proof_memorization_radius_scaling_appendix}. Experimental verification is provided for $S_5$ in Figure~\ref{fig:radius_scaling_rho_M_s5} and for $\mathbb{Z}_{127}$ in Appendix~\ref{app:additional_results_rho_M_z127_appendix}.

\begin{figure*}[!t]
    \centering

    \includegraphics[width=\linewidth]{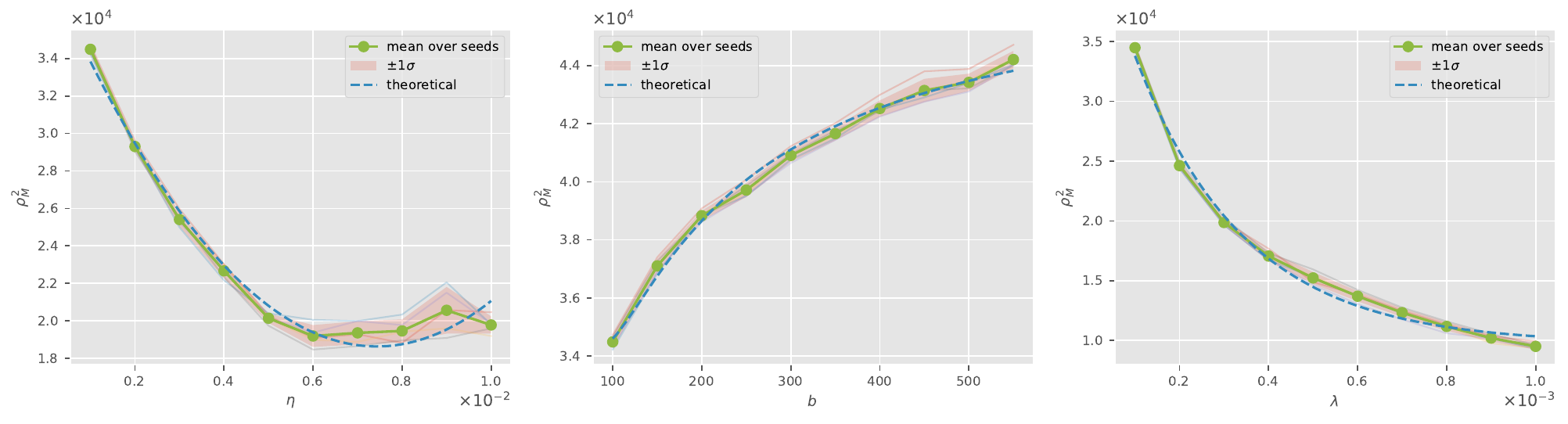}

    \caption{\textbf{Scaling Laws of Manifold Radius $\rho_M^2$ on $S_5$.} We show the scaling law of $\rho_M^2$ with respect to the learning rate $\eta$, batch size $b$, and $\ell_2$ regularization coefficient $\lambda$ on the $S_5$ task. For each hyperparameter configuration, we train for ten runs. The results show that larger $\eta/b$ modifies the dynamics with a stronger diffusion variance, whereas $\lambda$ does not affect the diffusion variance. We also overlay the theoretical fits. \textbf{As predicted by theory, the scaling law of $\rho_M^2$ with respect to the learning rate $\eta$ exhibits a U-shaped curve.} An additional experimental results for $\mathbb{Z}_{127}$ are provided in Appendix~\ref{app:additional_results_rho_M_z127_appendix}.}

    \label{fig:radius_scaling_rho_M_s5}

\end{figure*}

\begin{theorem}[Scaling Law of Memorization Radius (Perturbation Solution)]
\label{thm:memorization_radius_scaling}
The memorization radius approximately admits the scaling law with respect to $\eta/b$ as
\begin{align}
\rho_M^2 \; = \; (\rho_M^{(0)})^2 \;+\; \frac{\eta}{b}\,c_M \;+\; \left(\frac{\eta}{b}\right)^{\!2}\!c_M^{(2)} \;+\; O\!\left(\!\left(\frac{\eta}{b}\right)^{\!3}\right),
\label{eq:law_rho_M}
\end{align}
where $(\rho_M^{(0)})^2$, $c_M$, $c_M^{(2)}$ are task-determined constants independent of $\eta$ and $b$, as defined in Appendix~\ref{app:proof_memorization_radius_scaling_appendix}. The quadratic regularizer $\frac{\lambda}{2}\|\theta_t\|_2^2$ induces deterministic contraction. The leading constant $(\rho_M^{(0)})^2$ depends on the deterministic gradient flow $\nabla\mathscr{L}_f(\theta_t)$, when the task gradient $\nabla\mathscr{L}_f^*(\theta_t)$ is small, $\lambda$ enters $(\rho_M^{(0)})^2$ through $\dd\|\theta_t\|_2^2 = 2\theta_t^\top \dd \theta_t \approx -2\lambda \|\theta_t\|_2^2 \, \dd t \Rightarrow \frac{1}{(\rho_M^{(0)})^2}\dd (\rho_M^{(0)})^2 = -2 \lambda \, \dd t$. Thus, $(\rho_M^{(0)})^2  \propto \exp(-O(\lambda))$, and hence $\rho_M^2\propto \exp(-O(\lambda))$.
\end{theorem}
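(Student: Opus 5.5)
We treat $\rho_M^2$ as the expected value of a terminal functional in a Dirichlet exit problem. For a fixed initial state $S_0$, define $u(S):=\mathbb{E}\bigl[\|\theta_{\tau_M}\|_2^2 \mid S_{0}=S\bigr]$. By Dynkin's formula applied to the generator $\mathcal{L}_S$ of the joint-state SDE in Lemma~\ref{lem:adam_joint_sde}, $u$ solves the boundary value problem
\begin{align}
\mu(S)^{\top}\nabla_S u(S) + \frac{\varepsilon}{2}\,\mathrm{tr}\bigl(\Sigma_S(S)\nabla_S^2 u(S)\bigr) = 0 \ \text{ off } \partial M,
\qquad
u(S)=\|\theta\|_2^2 \ \text{ on } \partial M,
\notag
\end{align}
with $\varepsilon=\eta/b$. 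Then $\rho_M^2=\mathbb{E}_{S_0\in\Theta_S}[u(S_0)]$. We assume that $\tau_M<\infty$ almost surely and that $u$ is $C^2$ near the deterministic path. These assumptions make Dynkin's formula applicable, and they are part of the "approximately" in the statement.

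We then carry out a regular perturbation in $\varepsilon$. We write $u=u_0+\varepsilon u_1+\varepsilon^2 u_2+O(\varepsilon^3)$ and match orders.

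At order $\varepsilon^0$ we get the transport equation $\mu^\top\nabla u_0=0$ with $u_0=\|\theta\|^2$ on $\partial M$. It is solved by the method of characteristics: $u_0(S_0)=\|\bar\theta_{\bar\tau_M}\|_2^2$, where $\bar\theta$ is the deterministic Adam flow (the drift ODE) and $\bar\tau_M$ is its hitting time of $\partial M$. Averaging over $\Theta_S$ defines $(\rho_M^{(0)})^2$.

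At order $\varepsilon^k$ with $k\ge 1$ we get $\mu^\top\nabla u_k=-\tfrac12\mathrm{tr}(\Sigma_S\nabla^2u_{k-1})$ with $u_k=0$ on $\partial M$. Integrating along characteristics gives
\begin{align}
u_k(S_0)=\frac12\int_0^{\bar\tau_M}\mathrm{tr}\bigl(\Sigma_S(\bar S_t)\nabla^2u_{k-1}(\bar S_t)\bigr)\,\dd t .
\notag
\end{align}
The constants $c_M$ and $c_M^{(2)}$ are the $\Theta_S$-averages of $u_1$ and $u_2$. The key point is that the drift $\mu$ and the matrix $\Sigma_S=\sigma\sigma^\top$ depend on $\eta$ only through $\alpha_i=(1-\beta_i)/\eta$, and in continuous time $t=\eta k$. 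We handle this by rescaling time (or equivalently treating $\alpha_i$ as fixed by the EMA horizon) so that the coefficients are $\eta,b$-independent. The explicit factor $\sqrt{\eta/b}$ is then the only source of $\eta,b$ dependence, which makes the constants task-determined. The $O(\varepsilon^3)$ remainder is controlled by a standard estimate that applies the generator to the truncated expansion and bounds the residual through Dynkin's formula again. This uses boundedness of $\Sigma_S$ and of the higher derivatives of the $u_k$ on a neighbourhood of the characteristic.

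For the $\lambda$-dependence, we read off $(\rho_M^{(0)})^2$ from the deterministic flow. In the regime where the task gradient is small compared with $\lambda\theta$, the $\theta$-drift is dominated by the regularizer term. Under the heuristic in the statement this gives $\dd\|\bar\theta_t\|^2\approx-2\lambda\|\bar\theta_t\|^2\dd t$. Integrating up to the hitting time yields $(\rho_M^{(0)})^2\approx\rho_\Theta^2e^{-2\lambda\bar\tau_M}=\exp(-O(\lambda))$, with $\rho_\Theta^2$ taken from Section~\ref{sec:initialization_concentration}. Because the corrections are $O(\eta/b)$, the same behaviour transfers to $\rho_M^2$.

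We expect the main obstacle to be justifying the regular (rather than singular) perturbation. The diffusion degenerates in the $\theta$-block, since $\sigma$ has a zero first block. The boundary is also hit transversally only if the deterministic characteristic crosses $\partial M$ non-tangentially, and the hitting time itself depends on $\varepsilon$. We would first verify that transversality condition, which makes $u_0$ smooth, and then argue that the degenerate diffusion acting through $(m,v)$ produces no boundary layer at low order.
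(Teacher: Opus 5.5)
This is essentially the paper's own proof: the same exit-value function $u$ and Dirichlet problem for $\mathcal{L}_S$, the same regular expansion in $\varepsilon=\eta/b$ solved along the deterministic Adam characteristics, and the same $\ell_2$-contraction heuristic for $\lambda$. Your line integrals for $u_k$ coincide with the paper's $u_1,u_2$, so averaging over $\Theta_S$ gives the same $c_M,c_M^{(2)}$; for the $\lambda$ heuristic the paper additionally assumes $\tau_M^{(0)}$ depends only weakly on $\lambda$, and your transversality and Dynkin-remainder remarks add rigor the paper does not supply.

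One side remark is off. Rescaling time cannot remove the $\eta$-dependence that $\alpha_i=(1-\beta_i)/\eta$ carries at fixed $\beta_i$; only holding $\alpha_i$ fixed, as the paper's SDE limit implicitly does, achieves that. Even then the drift still depends on $b$ through $\overline{g\odot g}_t=\bar g_t\odot\bar g_t+\frac{1}{b}\mathrm{diag}(\Sigma_t)$, so $\sqrt{\eta/b}$ is not the only source of $\eta,b$-dependence. The paper glosses over this point as well.
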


\begin{remark}
In the radius-shrinkage regime, diffusion first shifts the hitting point inward, giving $c_M<0$, while boundary curvature produces a stabilizing second-order correction, giving $c_M^{(2)}>0$. Our theory predicts that $\rho_M^2$ is U-shaped in $\eta/b$ with two regimes: a sub-linear decrease $\rho_M^2 \approx (\rho_M^{(0)})^2 + (\eta/b)c_M$ for $\eta/b \ll |c_M|/(2c_M^{(2)})$, and a sub-quadratic increase $\rho_M^2 \approx (\rho_M^{(0)})^2 + (\eta/b)^2 c_M^{(2)}$ for $\eta/b \gtrsim |c_M|/(2c_M^{(2)})$, with minimum at $(\eta/b)^\star = |c_M| /(2c_M^{(2)})$. This U-shaped effect is observed in Figure~\ref{fig:radius_scaling_rho_M_s5}, where the scaling law with respect to $\eta$ exhibits a U-shaped curve under large $b$.
 
\end{remark}

\subsection{Late-Stage Radius Dynamics}
\label{sec:late_stage_radius_dynamics}

To characterize the manifold geometry, we derive the radius SDE induced by Adam's SDE~\eqref{eq:adam_joint_sde} via It\^o's lemma under the late-stage reduction observations. Empirically, as shown in Figure~\ref{fig:characterizing_dynamics}, once the trajectory $\{S_t\}$ reaches the memorization manifold $M$ (\ie, memorization regime), the late-stage grokking dynamics exhibit two reduction properties. First, refer to equation~\eqref{eq:adam_joint_drift_diffusion}, the first- and second-moment estimates $m_t$ and $v_t$ rapidly converge to $\bar g_t\bigl(1-e^{-\alpha_1 t}\bigr)$ and $\overline{g\odot g}_t$, respectively, at an exponential rate $\exp(-O(t))$. We refer to this property as \emph{slow-manifold reduction}, formalized in Observation~\ref{obs:slow_manifold}. Second, refer to equation~\eqref{eq:diffusion_cross_term_def}, the gradient fluctuations, characterized by $\Sigma_t$, become small and isotropic at an exponential rate $\exp(-O(t))$. We refer to this property as \emph{small-isotropic gradient covariance}, formalized in Observation~\ref{obs:small_isotropic_noise}.

\begin{figure*}[!t]
    \centering

    \begin{subfigure}[t]{0.32\linewidth}
        \centering
        \includegraphics[width=\linewidth]{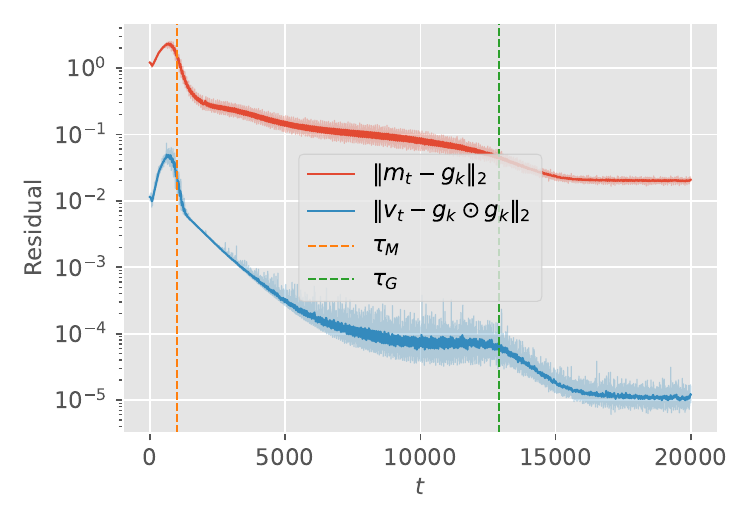}
        \caption{Reduction on $(m_t,v_t)$}
        \label{fig:characterizing_dynamics_mv}
    \end{subfigure}
    \begin{subfigure}[t]{0.32\linewidth}
        \centering
        \includegraphics[width=\linewidth]{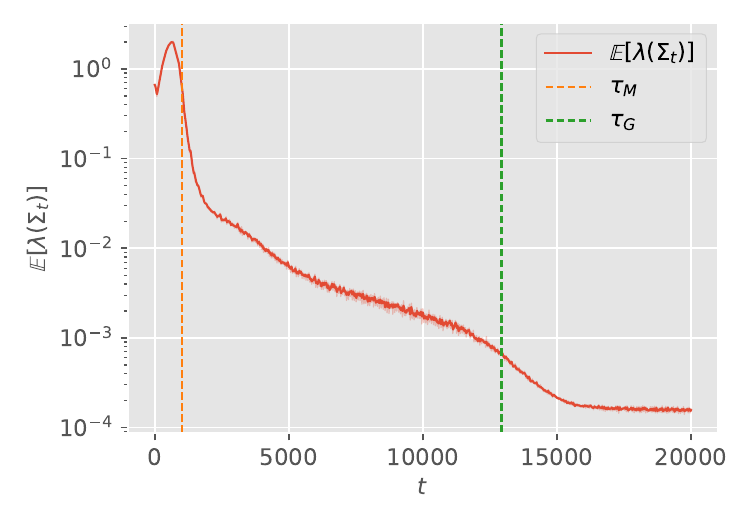}
        \caption{Gradient noise magnitude}
        \label{fig:characterizing_dynamics_noise_magnitude}
    \end{subfigure}
    \begin{subfigure}[t]{0.32\linewidth}
        \centering
        \includegraphics[width=\linewidth]{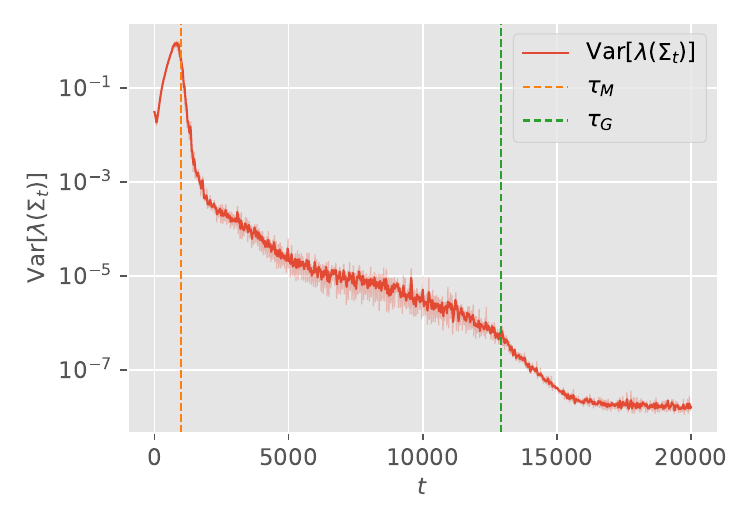}
        \caption{Gradient noise isotropy}
        \label{fig:characterizing_dynamics_noise_isotropy}
    \end{subfigure}

    \caption{\textbf{Observations in Grokking Dynamics.} The markers $\tau_M$ and $\tau_G$ denote the first-hitting times of $\partial M$ and $\partial G$, respectively, and the $y$-axis is shown on a logarithmic scale. Left (\subref{fig:characterizing_dynamics_mv}) shows the exponential reduction of moment estimates; $(m_t,v_t)$ rapidly converges to $\bigl(\bar g_t(1-e^{-\alpha_1 t}), \overline{g\odot g}_t(1-e^{-\alpha_2 t})\bigr)$, with exponential decay $\exp(-O(t))$. Middle--right (\subref{fig:characterizing_dynamics_noise_magnitude}--\subref{fig:characterizing_dynamics_noise_isotropy}) shows the exponential reduction of the gradient noise; the mean and variance of the singular values of the gradient covariance $\Sigma_t$ decay as $\exp(-O(t))$, indicating small and isotropic gradient fluctuations in late-stage dynamics.}

    \label{fig:characterizing_dynamics}
\end{figure*}

\begin{observation}[Slow-Manifold Reduction on $(m_t,v_t)$]
\label{obs:slow_manifold}
If $(m_t,v_t)$ varies slowly as $t \to \infty$, the first- and second-moment estimates $m_t,v_t$ rapidly converge to its mean-field limit
\begin{align}
m_t \;\to\; \bar g_t\,\bigl(1 - e^{-\alpha_1 t}\bigr) \to \bar g_t \quad \text{and} \quad v_t \;\to\; \overline{g\odot g}_t (1 - e^{-\alpha_2t}) \to \overline{g\odot g}_t
,
\label{eq:slow_manifold_reduction}
\end{align}
respectively, at an exponential rate $\exp(-O(t))$. The proof for this mean-field limit is provided in Lemma~\ref{lem:meanfield_mt_and_vt_limit} of Appendix~\ref{app:mean_field_mt_and_vt_limit}.
\end{observation}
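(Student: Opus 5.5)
The plan is to treat the $(m_t,v_t)$ components of the joint-state SDE in Lemma~\ref{lem:adam_joint_sde} as fast linear Ornstein--Uhlenbeck-type equations driven by slowly varying forcing terms $\bar g_t$ and $\overline{g\odot g}_t$. We solve them by variation of constants, freeze the forcing, and read off the exponential convergence. The hypothesis that $(m_t,v_t)$ varies slowly is used as a time-scale separation: the relaxation rates $\alpha_i=(1-\beta_i)/\eta$ are large compared with the rate at which $\theta_t$, and hence $\bar g_t$ and $\overline{g\odot g}_t$, change.

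\textbf{Step 1: solve the $m$-equation.} The $m$-component reads $\dd m_t=-\alpha_1(m_t-\bar g_t)\,\dd t+\sqrt{\eta/b}\,\alpha_1\Sigma_t^{1/2}\dd W_t$. With $m_0=0$, multiplying by $e^{\alpha_1 t}$ and applying It\^o's product rule gives
\begin{align*}
m_t=\alpha_1\int_0^t e^{-\alpha_1(t-s)}\bar g_s\,\dd s+\sqrt{\eta/b}\,\alpha_1\int_0^t e^{-\alpha_1(t-s)}\Sigma_s^{1/2}\dd W_s .
\end{align*}
Taking expectations kills the stochastic integral. Under the slow-variation hypothesis we replace $\bar g_s$ by $\bar g_t$. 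This is justified by a first-order Taylor bound $\|\bar g_s-\bar g_t\|\le L|t-s|$, which contributes an error $O(L/\alpha_1)$. The result is $\mathbb E[m_t]\approx \bar g_t(1-e^{-\alpha_1 t})$, and hence $\|\mathbb E[m_t]-\bar g_t\|\le \|\bar g_t\|e^{-\alpha_1 t}+O(L/\alpha_1)$.

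\textbf{Step 2: control the fluctuation.} We bound the second moment of the stochastic integral by the It\^o isometry:
\begin{align*}
\frac{\eta}{b}\,\alpha_1^2\int_0^t e^{-2\alpha_1(t-s)}\operatorname{tr}\Sigma_s\,\dd s\le \frac{\eta\,\alpha_1}{2b}\sup_s\operatorname{tr}\Sigma_s .
\end{align*}
This is $O((1-\beta_1)/b)$, which vanishes in the large-$b$ regime. It also decays further once Observation~\ref{obs:small_isotropic_noise} is invoked. So $m_t$ concentrates on its mean-field limit.

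\textbf{Step 3: repeat for $v$.} The $v$-equation has the same linear structure, with forcing $\overline{g\odot g}_t$ and diffusion $\alpha_2 D_t$. An identical computation yields $v_t\to \overline{g\odot g}_t(1-e^{-\alpha_2 t})$, with a fluctuation bound $O(\eta\alpha_2\sup\|D_s\|^2/b)$. Letting $t\to\infty$ gives the second limits in~\eqref{eq:slow_manifold_reduction}, at rate $e^{-\min(\alpha_1,\alpha_2)t}$.

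\textbf{Main obstacle.} The hardest step is making the freezing approximation in Step 1 rigorous. The forcing terms depend on $\theta_t$, which is itself coupled back to $m_t$ and $v_t$. I would handle this by a Gr\"onwall-type argument. First, bound $\|\dd\theta_t/\dd t\|\le B(t)\|m_t\oslash\sqrt{v_t}\|$, which is $O(1)$ in the late stage. Next, assume $\bar g$ and $\overline{g\odot g}$ are Lipschitz in $\theta$. Then the forcing drifts at rate $O(1)$, while relaxation occurs at rate $\alpha_i=O(1/\eta)$. The freezing error is therefore $O(\eta)$ uniformly in time, which is exactly the slow-variation premise of the observation. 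The exponential rate $\exp(-O(t))$ then follows from the transient term.
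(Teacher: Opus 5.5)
Your proposal is correct and follows essentially the paper's route: solve the linear relaxation equations for $m_t,v_t$ with an integrating factor from $m_0=v_0=0$, freezing the slowly varying forcing $\bar g_t,\overline{g\odot g}_t$. It adds two refinements the paper lacks: you keep the stochastic convolution and bound it by It\^o's isometry (the paper discards the martingale term by conditioning), and you quantify the freezing error as $O(L/\alpha_i)$. One caution: the paper's continuous-time limit holds $\alpha_i=(1-\beta_i)/\eta$ fixed as $\eta\to0$, so your closing claim that the freezing error is $O(\eta)$ does not follow from that scaling, and the separation $L\ll\alpha_i$ must stay a hypothesis, exactly as the statement and the paper treat it.
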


\begin{observation}[Small-Isotropic Gradient Covariance]
\label{obs:small_isotropic_noise}
In Adam's late-stage dynamics, the gradient covariance $\Sigma_t$ rapidly becomes small and approximately isotropic as $t \to \infty$. With this isotropy and by equation~\eqref{eq:diffusion_cross_term_def}, the Adam's preconditioner is therefore approximated by $\pi(\theta)\approx s(\theta)^{-1}I_p$, as $t \to \infty$, where the state-dependent scalar $s(\theta)>0$ is defined by
\begin{align}
\frac{1}{s(\theta)}
:=
\frac{1}{p}\mathrm{tr}\bigl(\pi(\theta)\bigr),
\qquad
\pi(\theta)
:=
\mathrm{diag}\!\left(\overline{g\odot g}(\theta)\right)^{-1/2}
,
\end{align}
where $p$ is the parameter dimension. In particular, let $G(\theta) := \frac{1}{\sqrt{b}}B(t)\pi(\theta)\Sigma(\theta)^{1/2}$ be the preconditioned diffusion factor, which admits $G(\theta)G(\theta)^\top\approx I_p$. See Appendix~\ref{app:memorization_regime_identities}.
\end{observation}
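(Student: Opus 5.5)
The plan is to make the isotropy assumption quantitative, push it through the definitions of $\overline{g\odot g}_t$, $\pi(\theta)$ and $G(\theta)$ in Lemma~\ref{lem:adam_joint_sde}, and then take $t\to\infty$ so that the bias-correction factor $B(t)$ tends to one. Concretely, I would write the late-stage covariance as
\begin{align}
\Sigma_t = \sigma_t^2 I_p + E_t, \qquad \|E_t\|_{\mathrm{op}} \le \delta_t\,\sigma_t^2, \qquad \delta_t = \exp(-O(t)),
\end{align}
with $\sigma_t^2$ itself decaying. This is exactly what Figure~\ref{fig:characterizing_dynamics_noise_magnitude}--\ref{fig:characterizing_dynamics_noise_isotropy} report: both the mean and the spread of the singular values decay exponentially. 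I take this decomposition as the empirical premise of the observation; I will not derive it.

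\textbf{Preconditioner.} Substituting into \eqref{eq:diffusion_cross_term_def} gives $\mathrm{diag}(\Sigma_t)=\sigma_t^2(\mathbf 1_p+O(\delta_t))$. The preconditioner is therefore diagonal, with entries
\begin{align}
\pi_{ii} = \bigl(\bar g_{t,i}^2 + \sigma_t^2(1+O(\delta_t))/b\bigr)^{-1/2}.
\end{align}
Its deviation from the scalar matrix $s(\theta)^{-1}I_p$, where $s$ is defined through the trace average, is controlled by the spread of the $\pi_{ii}$ about their mean. I would split the argument into two regimes. In the first, the mean-gradient term is negligible, $\max_i \bar g_{t,i}^2 \ll \sigma_t^2/b$. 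Then every entry equals $\sqrt b/\sigma_t\,(1+O(\delta_t)+O(b\|\bar g_t\|_\infty^2/\sigma_t^2))$, so $\pi\approx s^{-1}I_p$ with $s=\sigma_t/\sqrt b$. In the second, the entries $\bar g_{t,i}^2$ are themselves nearly equal, and the same conclusion holds with $s^2=\bar g^2+\sigma_t^2/b$. Only the first regime yields the normalisation $GG^\top\approx I_p$, so it is the one I would adopt as the working hypothesis.

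\textbf{Preconditioned diffusion factor.} Next, $\Sigma_t^{1/2}=\sigma_t(I_p+O(\delta_t))$ by operator-monotone perturbation of the square root. Hence
\begin{align}
G = \tfrac{1}{\sqrt b}B(t)\,\pi\,\Sigma_t^{1/2} = B(t)\,\tfrac{1}{\sqrt b}\,\tfrac{\sqrt b}{\sigma_t}\,\sigma_t\,\bigl(I_p+O(\delta_t)\bigr) = B(t)\bigl(I_p+O(\delta_t)\bigr).
\end{align}
Since $B(t)=\sqrt{1-e^{-\alpha_2 t}}/(1-e^{-\alpha_1 t})\to1$ exponentially fast, this gives $GG^\top = I_p + O(\delta_t + e^{-\alpha_1 t}+e^{-\alpha_2 t})$.

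\textbf{Main obstacle.} The hardest step is justifying the regime $\bar g_{t,i}^2 \ll \sigma_t^2/b$ in the late stage. Both sides decay, and on the memorization manifold $\bar g_t\approx\lambda\theta_t+\nabla\widehat{\mathscr L}_f$ is small but not zero. My first attempt would use the stationarity of the slow drift in Observation~\ref{obs:slow_manifold}: there the drift is balanced by the contraction, which suggests the signal-to-noise ratio $b\|\bar g_t\|^2/\mathrm{tr}\Sigma_t$ stays bounded and small. Failing that, I would state this ratio bound explicitly as part of the observation's hypothesis, since it cannot be obtained from Lemma~\ref{lem:adam_joint_sde} alone.
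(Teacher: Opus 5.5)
Your proposal is correct and follows the same route as the paper's proof of Lemma~\ref{lem:late_stage_preconditioned_diffusion_isotropy}. The paper assumes the noise floor $\overline{g\odot g}_t\approx\frac{1}{b}\mathrm{diag}(\Sigma_t)$ and the scalar covariance $\frac{1}{b}\Sigma_t\approx s(\theta_t)^2I_p$, substitutes both into $GG^\top=\frac{B(t)^2}{b}\pi\Sigma_t\pi$ to obtain $B(t)^2I_p$, and lets $B(t)\to 1$; your version adds explicit error terms, and the paper likewise takes the regime $\bar g_{t,i}^2\ll\sigma_t^2/b$, which you flag as the main obstacle, as a hypothesis rather than deriving it, exactly as in your fallback.
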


Under Observations~\ref{obs:slow_manifold}--\ref{obs:small_isotropic_noise}, Lemma~\ref{lem:reduced_late_stage_radius_sde} gives the approximate closed-form dynamics of $r_t^2=\|\theta_t\|_2^2$. The proof is provided in Appendix~\ref{app:reduced_late_stage_radius_sde_appendix}, where we also experimentally verify that the lemma accurately characterizes the late-stage dynamics of manifold radii in Figure~\ref{fig:late_stage_radius_sde_appendix}.

\begin{lemma}[Reduced Late-Stage Radius SDE]
\label{lem:reduced_late_stage_radius_sde}
Let $r_t^2:=\|\theta_t\|_2^2$. In Adam's late-stage regime underlying grokking, assume that the diffusion covariance becomes isotropic $G(\theta_t)G(\theta_t)^\top\approx I_p$ as $t\to\infty$, then the reduced late-stage squared-radius SDE admits
\begin{align}
\dd r_t^2
\approx
\left[
-\frac{2\lambda}{s(\theta_t)}r_t^2
-
\frac{2}{s(\theta_t)}
\theta_t^\top\bigl(\bar g_t-\lambda\theta_t\bigr)
+
\eta\,p
+
\mathcal R_{\mathrm{SM}}(t)
+
\mathcal R_{\pi}(t)
\right]\dd t
+
2\sqrt{\eta}\,r_t\,\dd W_t^{(r)},
\label{eq:reduced_late_stage_radius_sde}
\end{align}
where slow-manifold residual $\mathcal R_{\mathrm{SM}}(t)$ and preconditioner residual $\mathcal R_{\pi}(t)$ are
\begin{align}
\mathcal R_{\mathrm{SM}}(t)
&:=
-2B(t)\theta_t^\top
\left[
\mathbb{E}\left[ m_t\oslash\sqrt{v_t}\right]
-
\bar g_t\bigl(1-e^{-\alpha_1 t}\bigr)
\oslash
\sqrt{\overline{g\odot g}_t}
\right],
\label{eq:slow_manifold_residual_def}
\\
\mathcal R_{\pi}(t)
&:=
-2\theta_t^\top
\left[
\pi(\theta_t)-s(\theta_t)^{-1}I_p
\right]
\bar g_t
,
\end{align}
respectively, and a one-dimensional Wiener process $W_t^{(r)}:=\int_0^t e_r(\theta_u)^\top G(\theta_u)\dd W_u$ with $e_r(\theta):=\frac{\theta}{\|\theta\|_2}$ by L\'evy's characterization: $\langle W_t^{(r)} \rangle_t=t$. Empirically, in the memorization regime, the residual sum $\mathcal R_{\mathrm{SM}}(t)+\mathcal R_{\pi}(t)$ becomes negligible. Hence, we set $\mathcal R_{\mathrm{SM}}(t)+\mathcal R_{\pi}(t)=0$ in the subsequent analysis, see Appendix~\ref{app:reduced_late_stage_radius_sde_appendix}.
\end{lemma}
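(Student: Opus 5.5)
The plan is to apply It\^o's lemma to $F(\theta)=\|\theta\|_2^2$ along the $\theta$-component of the joint-state SDE of Lemma~\ref{lem:adam_joint_sde}. Then we substitute the two late-stage reductions, Observations~\ref{obs:slow_manifold} and~\ref{obs:small_isotropic_noise}, and collect everything that does not close in $r_t$ into the two named residuals. Since the first block of $\sigma(S_t)$ vanishes, $\theta_t$ has no direct Brownian term. We therefore cannot apply It\^o to $\theta$ alone. Instead, we first pass to the reduced (slow-manifold) description, in which $m_t$ is replaced by its fluctuating mean-field value. Concretely, on the fast time scale $\alpha_1^{-1}$ the $m$-equation is an Ornstein--Uhlenbeck process driven by $\sqrt{\eta/b}\,\alpha_1\Sigma_t^{1/2}\dd W_t$. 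Adiabatic elimination then gives the effective parameter increment
\[
\dd\theta_t \approx -B(t)\,\mathbb E\!\left[m_t\oslash\sqrt{v_t}\right]\dd t + \sqrt{\eta}\,G(\theta_t)\,\dd W_t ,
\]
with $G=\frac{1}{\sqrt b}B(t)\pi(\theta)\Sigma^{1/2}$ as in Observation~\ref{obs:small_isotropic_noise}. The factor $\sqrt\eta$ arises because the $\alpha_1$ in the $m$-noise cancels against the OU relaxation. Here $\bar g_t$ is understood as the regularized mean gradient, so it includes $\lambda\theta_t$.

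Next we apply It\^o's lemma: $\dd r_t^2 = 2\theta_t^\top\dd\theta_t + \mathrm{tr}(\eta\, G G^\top)\dd t$. Using $GG^\top\approx I_p$, the It\^o correction is exactly $\eta p\,\dd t$. For the martingale part, $2\sqrt\eta\,\theta_t^\top G\,\dd W_t = 2\sqrt\eta\, r_t\, e_r^\top G\,\dd W_t$. We define $W^{(r)}_t$ as in the statement. Its quadratic variation is $\int_0^t e_r^\top GG^\top e_r\,\dd u \approx t$, so L\'evy's characterization makes it a standard one-dimensional Brownian motion. This produces the term $2\sqrt\eta\, r_t\,\dd W^{(r)}_t$.

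For the drift we use an add-and-subtract decomposition. We write
\[
\mathbb E[m\oslash\sqrt v] = \bar g(1-e^{-\alpha_1 t})\oslash\sqrt{\overline{g\odot g}} + \bigl(\text{remainder}\bigr).
\]
The remainder, multiplied by $-2B(t)\theta^\top$, is by definition $\mathcal R_{\mathrm{SM}}(t)$. As $t\to\infty$ we have $B(t)(1-e^{-\alpha_1 t})\to 1$, so the main piece becomes $-2\theta^\top\pi(\theta)\bar g$. We then split $\pi=s^{-1}I_p+(\pi-s^{-1}I_p)$; the second part gives $\mathcal R_\pi(t)$. Finally, we separate the regularizer by writing $\bar g=(\bar g-\lambda\theta)+\lambda\theta$, which yields
\[
-\frac{2\lambda}{s}r_t^2-\frac{2}{s}\theta^\top(\bar g-\lambda\theta).
\]
The mismatch from $B(t)(1-e^{-\alpha_1 t})-1$ decays exponentially, so it can be absorbed into $\mathcal R_{\mathrm{SM}}$. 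Summing these pieces gives~\eqref{eq:reduced_late_stage_radius_sde}. The assertion that the residuals are negligible is empirical, and we would support it with the appendix measurements rather than prove it.

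The main obstacle is the diffusion step: justifying that the effective noise on $\theta$ has covariance $\eta\,GG^\top$, including the correct power of $\eta$ and the factor $B(t)\pi$. This requires a careful fast--slow (averaging/homogenization) argument for the OU-type $m$-process. We also have to treat $v_t$ as frozen at $\overline{g\odot g}_t$, because its own noise $\alpha_2 D_t$ only contributes at higher order in $\eta/b$. We would first attempt this via the explicit OU solution $m_t=\int e^{-\alpha_1(t-u)}(\ldots)$, integrating it against $\dd t$ and matching second moments over a window of length much larger than $\alpha_1^{-1}$ but small on the $\theta$ time scale.
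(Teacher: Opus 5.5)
Your proposal is correct and follows the paper's route. The paper likewise reduces the $\theta$-dynamics to $\dd\theta_t\approx-a(t)\pi(\theta_t)\bar g_t\,\dd t+\mathcal R_{\theta,\mathrm{SM}}(t)\,\dd t+\sqrt{\eta}\,G(\theta_t)\,\dd W_t$, whose drift is exactly your $-B(t)\,\mathbb{E}[m_t\oslash\sqrt{v_t}]$ and whose noise comes from the identity $m_t\,\dd t=g_t\,\dd t-\alpha_1^{-1}\dd m_t$, which is the same mechanism as your OU adiabatic elimination; it then applies It\^o's lemma, $G(\theta_t)G(\theta_t)^\top\approx I_p$, L\'evy's characterization, and the splits $\pi=s^{-1}I_p+(\pi-s^{-1}I_p)$ and $\bar g_t=\lambda\theta_t+(\bar g_t-\lambda\theta_t)$.

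Two small corrections:
\begin{enumerate}
\item Since $\mathcal R_{\mathrm{SM}}$ has a fixed definition, the term proportional to $(a(t)-1)\,\theta_t^\top\pi(\theta_t)\bar g_t$ cannot be absorbed into it. It is simply dropped under $a(t)\to1$, which is what the $\approx$ covers.
\item The $v$-noise is not negligible because of an extra power of $\eta/b$; it enters at the same $\sqrt{\eta/b}$ order as the $m$-noise. It is negligible because it is weighted by $\bar m_t\oslash\bar v_t^{3/2}$ with $D_t\propto\mathrm{diag}(\bar g_t)$, so it is suppressed componentwise by the noise-floor ratio $\bar g_t\odot\bar g_t\oslash\overline{g\odot g}_t\ll 1$.
\end{enumerate}
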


\subsection{Generalization Radius}
\label{sec:law_generalization}

\begin{figure*}[!t]
    \centering

    \includegraphics[width=\linewidth]{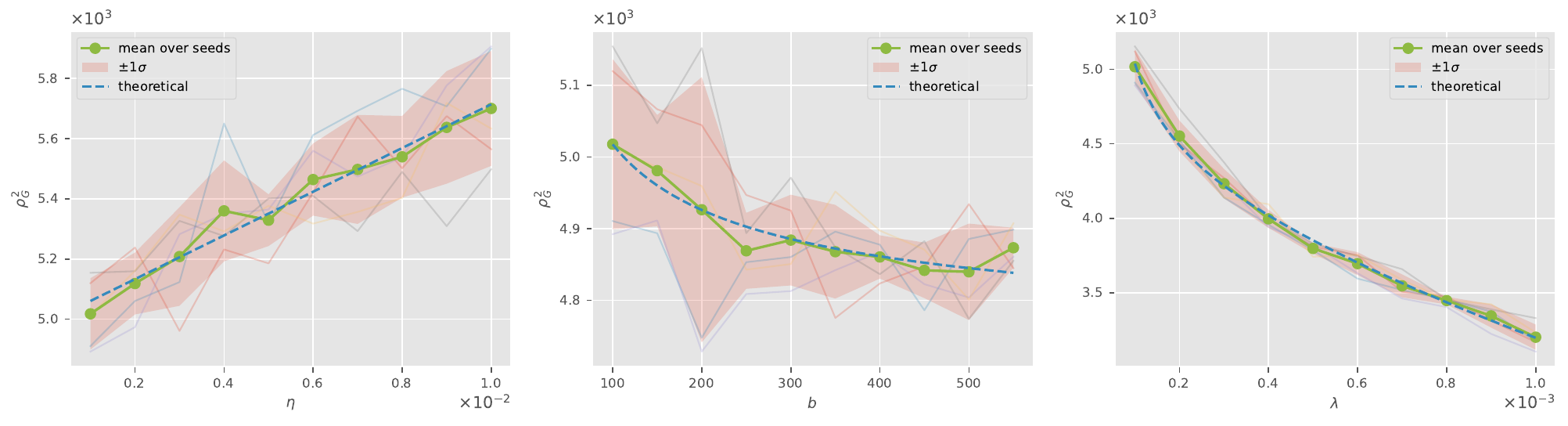}

    \caption{\textbf{Scaling Laws of Manifold Radius $\rho_G^2$ on $S_5$.} We show the scaling laws of $\rho_G^2$ with respect to the learning rate $\eta$, batch size $b$, and $\ell_2$ regularization coefficient $\lambda$ on the $S_5$ task. For each hyperparameter configuration, we train for ten runs. The results show that larger $\eta$ induces stronger diffusion variance, whereas $\lambda$ does not affect the diffusion variance. We also overlay the theoretical fits. An additional experimental results for $\mathbb{Z}_{127}$ are provided in Appendix~\ref{app:additional_results_rho_G_z127_appendix}.}

    \label{fig:radius_scaling_rho_G_s5}

\end{figure*}

Let $S_{\tau_M}$ denote the state at which the optimization trajectory first reaches $\partial M$. We treat $S_{\tau_M}$ as the initial state for the subsequent \emph{first-hitting problem} \citep{karatzas1991brownian} on $\partial G$. Define the \emph{stopping time} for first hitting $\partial G$ and the corresponding mean squared generalization radius as
\begin{align}
\tau_G(S_0)
:=
\inf\{t \ge \tau_M : \theta_t(S_0) \in \partial G\},
\qquad
\rho_G^2
:=
\mathbb{E}_{S_0 \in \Theta_S}\bigl[\|\theta_{\tau_G(S_0)}\|_2^2\bigr]
.
\end{align}
Once the trajectory enters $\partial G$, the drift and diffusion in equation~\eqref{eq:adam_joint_sde} equilibrate, and Adam's SDE approaches its equilibrium. The center of this equilibrium starting from $S_0$ is the regularized local minimizer
\begin{align}
\theta^\star(S_0) :=\arg\min_{\theta(S_0)}
\mathscr{L}^{*}_f(\theta(S_0))+ \frac{\lambda}{2}\|\theta(S_0)\|_2^2
,
\end{align}
where $\mathscr{L}^{*}_f(\theta(S_0))$ is task loss. This yields an equilibrium-distribution problem for the reduced late-stage dynamics around $\theta^\star(S_0)$, initialized from the post-memorization state $S_{\tau_M}(S_0)$. Linearizing the reduced dynamics around $\theta^\star(S_0)$ solves Theorem~\ref{thm:generalization_radius_scaling}. The proof is provided in Appendix~\ref{app:proof_generalization_radius_scaling_appendix}. Experimental verification is provided for $S_5$ in Figure~\ref{fig:radius_scaling_rho_G_s5} and for $\mathbb{Z}_{127}$ in Appendix~\ref{app:additional_results_rho_G_z127_appendix}.

\begin{theorem}[Scaling Law of Generalization Radius]
\label{thm:generalization_radius_scaling}
The mean squared generalization manifold radius admits the asymptotic expansion
\begin{align}
\rho_G^2 \;\approx \; \bigl(\rho_G^{(0)}\bigr)^2 \;+\; \frac{\eta}{\lambda}\,c_G \;+\; O\!\left(\tfrac{\eta^2}{\lambda^2}\right),
\label{eq:law_rho_G}
\end{align}
with $(\rho_G^{(0)})^2 := \mathbb{E}_{S_0\in\Theta_S}\left[\|\theta^\star(S_0)\|_2^2\right]$ the equilibrium radius modulated by $\lambda$ and $c_G>0$ a task-determined constant independent of $\eta$ and $\lambda$, as defined in Appendix~\ref{app:proof_generalization_radius_scaling_appendix}. A weak dependence on $\lambda$ enters $(\rho_G^{(0)})^2$ through $(\rho_G^{(0)})^2 \propto \exp(-O(\lambda))$; and a weak batch-size dependence enters through $c_G \propto O(\frac{1}{\sqrt b})$.
\end{theorem}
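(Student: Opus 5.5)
The plan is to start from the reduced late-stage squared-radius SDE of Lemma~\ref{lem:reduced_late_stage_radius_sde}, with the residuals $\mathcal R_{\mathrm{SM}}+\mathcal R_{\pi}$ set to zero. I will linearize it around the regularized minimizer $\theta^\star(S_0)$, treating the post-memorization state $S_{\tau_M}(S_0)$ as the initial condition.

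\textbf{Step 1: reduce to a linear SDE.} Write $\theta_t=\theta^\star+\delta_t$. Use the stationarity condition $\nabla\mathscr L^*_f(\theta^\star)+\lambda\theta^\star=0$, and expand the preconditioned drift $-s(\theta)^{-1}\bigl(\nabla\mathscr L^*_f(\theta)+\lambda\theta\bigr)$ to first order. This gives
\begin{align}
\dd\delta_t\approx -s_\star^{-1}\bigl(H_\star+\lambda I_p\bigr)\delta_t\,\dd t+\sqrt{\eta}\,G_\star\,\dd W_t ,
\end{align}
where $H_\star$ is the task Hessian at $\theta^\star$ and $s_\star=s(\theta^\star)$.

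\textbf{Step 2: solve the stationary Lyapunov equation.} The late-stage generalization regime is where the task gradient is small and the curvature is weak. In this regime the dominant restoring force is the $\ell_2$ term, so I take $H_\star\approx 0$ along the directions that matter. Combining this with the isotropy $G_\star G_\star^\top\approx I_p$ from Observation~\ref{obs:small_isotropic_noise}, the stationary covariance $C$ solves $2\lambda s_\star^{-1}C\approx \eta I_p$. Its solution is $C\approx \frac{\eta s_\star}{2\lambda}I_p$.

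The same conclusion follows directly in the scalar form of Lemma~\ref{lem:reduced_late_stage_radius_sde}. Take expectations of the squared-radius SDE (the martingale term vanishes), set the drift to zero at equilibrium, and solve. This gives $\mathbb E\|\theta\|^2\approx\|\theta^\star\|^2+\frac{\eta p\, s_\star}{2\lambda}$.

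\textbf{Step 3: connect the hitting radius to the equilibrium.} The boundary $\partial G$ is the level at which the drift and diffusion equilibrate. The trajectory approaches $\theta^\star$ from the outer shell and is confined there, so I identify $\mathbb E\|\theta_{\tau_G}\|^2$ with the stationary second moment up to $O(\eta^2/\lambda^2)$ corrections. Those corrections come from the next order of the regular expansion in $\eta/\lambda$, which I obtain by keeping the quadratic term of the drift.

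Averaging over $S_0\in\Theta_S$ then gives $(\rho_G^{(0)})^2=\mathbb E\|\theta^\star(S_0)\|^2$ and $c_G=\tfrac{p}{2}\,\mathbb E[s(\theta^\star)]>0$.

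\textbf{Step 4: read off the secondary dependences.} For the $\lambda$ dependence of $(\rho_G^{(0)})^2$, I reuse the contraction argument of Theorem~\ref{thm:memorization_radius_scaling}. Along the late-stage drift $\dd\|\theta\|^2\approx-2\lambda\|\theta\|^2\dd t$, so the norm at equilibrium inherits a factor $\exp(-O(\lambda))$.

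For the batch dependence, I use $s(\theta)^{-1}=\frac1p\mathrm{tr}\,\mathrm{diag}(\bar g\odot\bar g+\mathrm{diag}(\Sigma)/b)^{-1/2}$. At $\theta^\star$ the mean gradient term is balanced, and the preconditioner scale is dominated by the noise contribution $\sqrt{\mathrm{diag}(\Sigma)/b}$. Hence $s_\star=O(b^{-1/2})$, and therefore $c_G\propto O(1/\sqrt b)$.

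\textbf{Main obstacle.} I expect Step 3 to be the hardest: justifying that the first-hitting radius on $\partial G$ coincides, to leading order, with the equilibrium second moment, and controlling the Hessian contribution $H_\star$ so that the coefficient of $\eta$ scales as $1/\lambda$. My first approach would be to solve the Dynkin equation $\mathcal L u=0$ for $u(\theta)=\|\theta\|^2$ on the annulus with a regular perturbation in $\eta/\lambda$, as done for $\rho_M$. I would then argue that the leading solution is the stationary balance above.
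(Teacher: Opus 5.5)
Your proposal follows the paper's proof essentially step for step. The paper takes expectations in the reduced squared-radius SDE (with $\mathcal R_{\mathrm{SM}}+\mathcal R_{\pi}\approx 0$, $GG^\top\approx I_p$, $s(\theta_t)\approx s^\star$), uses the dominant-regularization approximation $\nabla\bar g^*(\theta^\star)+\lambda I_p\approx\lambda I_p$ and centered fluctuations to get $\mathbb{E}\|\delta_t\|_2^2\approx\frac{\eta}{\lambda}\frac{s^\star}{2}p$, identifies $\mathbb{E}\|\theta_{\tau_G}\|_2^2$ with this stationary second moment by the same local-stationarity heuristic as your Step~3, and derives the $\exp(-O(\lambda))$ and $s^\star=O(1/\sqrt b)$ dependences as in your Step~4. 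Your Lyapunov formulation is a mild refinement: it makes the centering automatic at linear order and, via $\mathrm{tr}\,C=\tfrac{\eta s^\star}{2}\mathrm{tr}(H_\star+\lambda I_p)^{-1}$, shows why $\|H_\star\|\ll\lambda$ is needed. However, the Dynkin exit-value expansion you propose for Step~3 would not give the stationary balance, because (as for $\rho_M$) its leading term is the squared norm at the deterministic first hit of $\partial G$, not $\|\theta^\star\|_2^2$.
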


\subsection{Solution Transition Time}
\label{sec:law_transition_time}

Starting from $S_0 \in \Theta_S$, the transition time on the trivial annulus $M \setminus G$ is given as
\begin{align}
\tau_{M \to G}(S_0) \;:=\; \tau_G(S_0) - \tau_M(S_0),
\end{align}
where $\tau_M(S_0)$ and $\tau_G(S_0)$ are the stopping times that the trajectory first hits $\partial{M}$ and $\partial{G}$, respectively, defined in Sections~\ref{sec:law_memorization}--\ref{sec:law_generalization}. Taking $\theta_{\tau_M}(S_0)\in\partial M$ as the initial condition (Theorem~\ref{thm:memorization_radius_scaling}), we formulate the transition from $\partial M$ to $\partial G$ as a \emph{mean first-passage problem} \citep{karatzas1991brownian}
 for the radial process $r_t:=\|\theta_t\|_2$ on the annulus $M\setminus G$. Its equilibrium distribution characterizes the equilibrium around the generalization region, while the mean first-passage time from the memorization radius to the generalization radius gives the solution transition time. Solving this first-passage problem yields Theorem~\ref{thm:solution_transition_time_scaling}. The proof is provided in Appendix~\ref{app:proof_solution_transition_time_appendix}. Experimental verification is provided for $S_5$ in Figure~\ref{fig:transition_time_scaling_s5} and for $\mathbb{Z}_{127}$ in Appendix~\ref{app:additional_results_transition_time_z127_appendix}.

\begin{theorem}[Scaling Law of Solution Transition Time]
\label{thm:solution_transition_time_scaling}
The mean number of optimizer iterations between the first hit at $\partial M$ and the first hit at $\partial G$ admits the asymptotic expansion
\begin{align}
\tau_{M\to G}:=\mathbb{E}_{S_0 \in \Theta_S}\bigl[\tau_{M\to G}(S_0)\bigr] \;\approx\; \frac{\bar s}{\eta\,\lambda}\,\log\!\frac{\rho_M^{(0)}}{\rho_G^{(0)}} \;+\; \frac{c_\tau}{b\,\lambda} \;+\; \frac{c_\tau^{(2)}}{\lambda^2} \;+\; O\!\left(\frac{\eta}{\lambda^3}\right),
\label{eq:law_transition_time}
\end{align}
where $\bar s$ is determined by the harmonic mean of the effective deterministic gradient flow, while $c_\tau$ and $c_\tau^{(2)}$ collect the trajectory corrections, as defined in Appendix~\ref{app:proof_solution_transition_time_appendix}. All three are task-determined constants independent of $\eta$ and $\lambda$; a weak batch-size $b$-dependence enters through the scalar preconditioning scale $\bar s \propto O(\frac{1}{\sqrt{b}})$, and hence $c_\tau \propto O(\frac{1}{\sqrt{b}})$ and $c_\tau^{(2)} \propto O(\frac{1}{b})$.
\end{theorem}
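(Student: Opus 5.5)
We start from the reduced late-stage squared-radius SDE of Lemma~\ref{lem:reduced_late_stage_radius_sde} with the residuals $\mathcal R_{\mathrm{SM}}+\mathcal R_{\pi}$ set to zero. We then treat the transition as a one-dimensional mean first-passage problem for $r_t=\|\theta_t\|_2$ on the annulus $\rho_G\le r\le\rho_M$. The initial point is $r_{\tau_M}=\rho_M$, given by Theorem~\ref{thm:memorization_radius_scaling}, and the absorbing boundary is $r=\rho_G$, given by Theorem~\ref{thm:generalization_radius_scaling}.

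\textbf{Step 1: effective radial SDE.} In the memorization annulus the task gradient is small. We therefore write $-\frac{2}{s}\theta^\top(\bar g-\lambda\theta)-\frac{2\lambda}{s}r^2$ as an effective contraction $-\frac{2\lambda}{s_{\mathrm{eff}}(r)}r^2$ plus a task-gradient remainder $\gamma(r)$. Averaging $s(\theta)$ over the sphere of radius $r$ gives a radial scale $s(r)$. Applying It\^o's lemma to $r=\sqrt{r^2}$ then yields a one-dimensional diffusion
\[
\dd r_t=\Bigl[-\tfrac{\lambda}{s(r_t)}\,r_t+\tfrac{\eta(p-1)}{2r_t}+\gamma(r_t)\Bigr]\dd t+\sqrt{\eta}\,\dd W^{(r)}_t ,
\]
written in continuous time $t=\eta k$.

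\textbf{Step 2: Dynkin/backward equation.} The mean exit time $T(r)$ solves the ODE
\[
\tfrac{\eta}{2}T''+\mu_r(r)\,T'=-1,\qquad T(\rho_G)=0,\qquad T'(\rho_M)\ \text{reflecting (or large-radius) condition}.
\]
Its solution is the standard double integral involving the scale density $\exp\bigl(-\int 2\mu_r/\eta\bigr)$.

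\textbf{Step 3: expansion in the small diffusion parameter $\eta/\lambda$.} We evaluate the double integral by a regular perturbation in the diffusion strength, in the same spirit as the proof of Theorem~\ref{thm:memorization_radius_scaling}.
\begin{itemize}
\item At leading (deterministic) order, $T\approx\int_{\rho_G}^{\rho_M}\frac{\dd r}{-\mu_r^{(0)}(r)}=\int\frac{s(r)}{\lambda r}\,\dd r$. Replacing $s(r)$ by its harmonic-mean value $\bar s$ over the annulus gives $\frac{\bar s}{\lambda}\log\frac{\rho_M^{(0)}}{\rho_G^{(0)}}$.
\item Converting from continuous time to iterations divides by $\eta$, which produces the leading term of the statement.
\item First-order corrections have two sources. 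One is the $O(\eta/b)$ shift of $\rho_M^2$ from Theorem~\ref{thm:memorization_radius_scaling}, contributing through $\partial_{\rho_M}$ of the log term. The other is the It\^o drift $\eta p/(2r)$ divided by $\lambda$. After unit conversion these yield the $c_\tau/(b\lambda)$ and $c_\tau^{(2)}/\lambda^2$ terms.
\item The $\rho_G$ shift of order $\eta/\lambda$ from Theorem~\ref{thm:generalization_radius_scaling} and the second-order diffusion terms are collected into $O(\eta/\lambda^3)$.
\item The $b$-dependence enters only through the preconditioner: since $\overline{g\odot g}$ contains $\mathrm{diag}(\Sigma)/b$ and the noise is small and isotropic (Observation~\ref{obs:small_isotropic_noise}), we get $s\propto O(1/\sqrt b)$, which propagates to $c_\tau$ and $c_\tau^{(2)}$.
\end{itemize}

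\textbf{Main obstacle.} The hardest step is Step 3's bookkeeping of orders. In particular, the It\^o correction $\eta p/(2r)$ must be a genuine perturbation, which requires $\eta p\ll\lambda\rho_G^2/\bar s$. The task-gradient remainder $\gamma$ must also be absorbable into $\bar s$. We would first try to justify both by assuming $\gamma=O(\lambda r)$ uniformly on the annulus, so that it only renormalizes $s$. The resulting constants would then be verified symbolically.
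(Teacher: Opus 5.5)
Your route is the paper's route. You collapse the reduced squared-radius SDE to a scalar radial diffusion with an averaged contraction rate $\lambda/\bar s$. The paper obtains $\bar s$ by ensemble-averaging the pathwise time-averaged rate $\kappa(S_0)$, which plays the role of your harmonic mean. You then solve the backward mean-first-passage equation, expand for small $\eta$, insert the radius expansions, and divide by $\eta$.

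\textbf{The gap is in Step 3's order bookkeeping.} You put the $O(\eta/\lambda)$ shift of $\rho_G$ into the $O(\eta/\lambda^3)$ remainder. That shift enters through the leading logarithm. In continuous time, with $r_0=\rho_M$ and $r_G=\rho_G$,
\[
\frac{\bar s}{\lambda}\log\frac{r_0}{r_G}
=\frac{\bar s}{\lambda}\log\frac{\rho_M^{(0)}}{\rho_G^{(0)}}
+\frac{\eta}{b\lambda}\,\frac{\bar s\,c_M}{2(\rho_M^{(0)})^2}
-\frac{\eta}{\lambda^2}\,\frac{\bar s\,c_G}{2(\rho_G^{(0)})^2}
+\cdots .
\]
After dividing by $\eta$, the $\rho_G$ shift contributes $-\bar s\,c_G/\bigl(2\lambda^2(\rho_G^{(0)})^2\bigr)$ iterations. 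This is exactly the $1/\lambda^2$ order of the It\^o correction, and it exceeds your claimed remainder by a factor $\lambda/\eta$. The $\rho_M$ and $\rho_G$ shifts are both first order in $\eta$, so both survive the division by $\eta$ and must be treated the same way. This is why the paper's constant is
\[
c_\tau^{(2)}=-\bar s\,\frac{c_G}{2(\rho_G^{(0)})^2}+\tfrac{\bar s^2(p-2)}{4}\bigl[(\rho_G^{(0)})^{-2}-(\rho_M^{(0)})^{-2}\bigr].
\]
As written, your argument asserts a false error bound and yields a $c_\tau^{(2)}$ that lacks the $c_G$ term.

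\textbf{Three further choices in your plan also shift $c_\tau^{(2)}$ at the same order.}

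First, the diffusion correction is not just ``the It\^o drift divided by $\lambda$.'' At first order the backward equation gives $\kappa r\,T_1'=\tfrac{\eta}{2}T_0''+\tfrac{\eta(p-1)}{2r}T_0'$ with $T_0'=1/(\kappa r)$. The $\tfrac{\eta}{2}T_0''$ term turns $p-1$ into the paper's $p-2$.

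Second, a reflecting condition $T'(\rho_M)=0$, with the start at $\rho_M$, creates a boundary layer of width $O\bigl(\eta\bar s/(\lambda\rho_M)\bigr)$. This layer lowers $T$ by $\eta\bar s^2/(2\lambda^2\rho_M^2)$, which is again a $1/\lambda^2$ term after conversion. The paper's constant corresponds to the natural condition $I(r)T'(r)\to 0$ as $r\to\infty$, with $I(r)=r^{p-1}e^{-\lambda r^2/(\bar s\eta)}$. That condition is what gives the exact incomplete-gamma representation.

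Third, keeping $s(r)$ radius-dependent changes the constants. The paper instead freezes it to the constant $\bar s$ before solving. With $s(r)$, $c_\tau$ becomes proportional to $s(\rho_M^{(0)})$ rather than $\bar s$, and $c_\tau^{(2)}$ picks up a different weighting of $s^2$ (plus $s'$ terms). The form of the law survives, but not the constants as defined in the appendix.
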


\begin{figure*}[!t]
    \centering

    \includegraphics[width=\linewidth]{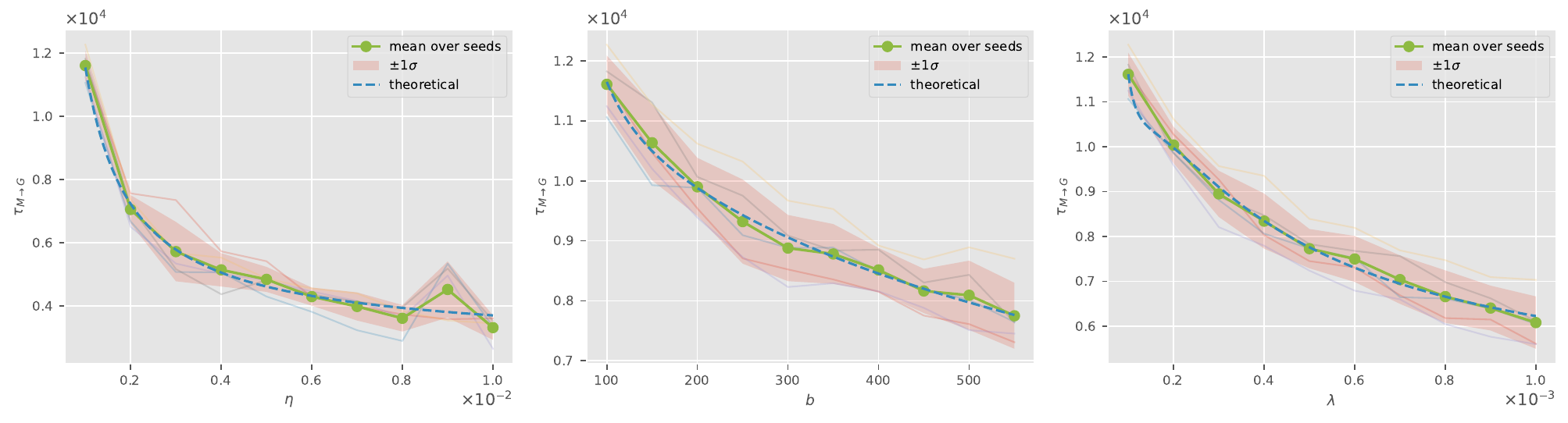}

    \caption{\textbf{Scaling laws of solution transition time on $S_5$.} We show that the solution transition time $\tau_{M \to G}$ from the memorization manifold $M$ to the generalization manifold $G$ scales with the learning rate $\eta$, batch size $b$, and $\ell_2$ regularization coefficient $\lambda$. For each hyperparameter configuration, we train for ten runs. We also overlay the theoretical fits. An additional experimental results for $\mathbb{Z}_{127}$ are provided in Figure~\ref{fig:transition_time_scaling_z127_appendix}.}

    \label{fig:transition_time_scaling_s5}

\end{figure*}

\section{Validation}
\label{sec:validation}

We validate the three scaling laws in Section~\ref{sec:experimental_validation} by directly measuring $\rho_M^2$, $\rho_G^2$, and $\tau_{M \to G}$ on the group-theoretic learning task $S_5$ and modular arithmetic tasks over $\mathbb{Z}_{127}$, and further compare them with published results from the grokking literature in Section~\ref{sec:literature_validation}.

\subsection{Experimental Validation}
\label{sec:experimental_validation}

\paragraph{Experimental Setting.}
We use a two-layer MLP: each input token is embedded into $256$ dimensions, the two embeddings are concatenated, passed through a width-$512$ ReLU hidden layer, and projected to $|\mathcal{Y}|$ output logits, where $|\mathcal{Y}|=120$ for $S_5$ and $127$ for $\mathbb{Z}_{127}$. The embedding layers are initialized from a standard normal distribution, while the linear layers use \emph{Kaiming} initialization. Training is performed in $32$-bit floating-point precision. We find that $16$-bit precision can introduce numerical instability in grokking, suggesting that grokking dynamics are sensitive to numerical precision errors. We use base hyperparameters $\eta=10^{-3}$, $(\beta_1,\beta_2)=(0.9,0.999)$, $\ell_2$ regularization coefficient $\lambda=10^{-4}$, and batch size $100$. These base values are varied as needed to study scaling laws.

\paragraph{Learning Tasks.} We adopt two structured learning tasks: a group-theoretic learning task on $S_n$ and a modular arithmetic learning task on $\mathbb{Z}_p$. For $S_n$, each input is an ordered pair of permutations $(f,g)\in S_n^2$, and the model is trained to predict their group product $f\circ g$ under permutation composition. For $\mathbb{Z}_p$, each input is a pair $(a,b)\in\mathbb{Z}_p^2$, and the model is trained to predict the modular sum $a+b \bmod p$. In both tasks, the dataset consists of all possible input--output pairs, with a randomly sampled subset used for training and the remaining pairs used for evaluation. More details are provided in Appendix~\ref{app:learning_tasks_appendix}.

\paragraph{Results.}
Figures~\ref{fig:radius_scaling_rho_M_s5} and~\ref{fig:radius_scaling_rho_G_s5} validate the predicted scaling laws of the manifold radii $\rho_M^2$ and $\rho_G^2$ with respect to the learning rate $\eta$, batch size $b$, and regularization coefficient $\lambda$, as stated in Theorems~\ref{thm:memorization_radius_scaling} and~\ref{thm:generalization_radius_scaling}. Figure~\ref{fig:transition_time_scaling_s5} validates the predicted scaling law of the solution transition time $\tau_{M\to G}$ with respect to the same hyperparameters, as stated in Theorem~\ref{thm:solution_transition_time_scaling}. Additional results on $\mathbb{Z}_{127}$ are provided in Appendices~\ref{app:additional_results_rho_M_z127_appendix},~\ref{app:additional_results_rho_G_z127_appendix}, and~\ref{app:additional_results_transition_time_z127_appendix}. Overall, the experiments support the theoretical predictions; in particular, $\rho_M^2$ exhibits the predicted U-shaped dependence on $\eta/b$. 

\paragraph{Discussions and Limitations.} The scaling laws originate from how $\eta$, $b$, and $\lambda$ modulate Adam's SDE: $\eta$ and $b$ module the stochastic diffusion by scaling the gradient covariance with a factor $\sqrt{\eta/b}$, whereas $\lambda$ modulates the deterministic gradient drift flow by regularization. Our results inherit the conditions required by Adam's SDE limit, including an adapted Wiener-process approximation. These conditions require a sufficiently small learning rate and a sufficiently large batch size; thus, our results do not directly apply to the large-$\eta/b$ regime.


\subsection{Literature Validation}
\label{sec:literature_validation}













Theorem~\ref{thm:memorization_radius_scaling} characterizes the scaling of the memorization radius $\rho_M$ with respect to $\eta/b$, which, to the best of our knowledge, has not been derived in prior work. Theorem~\ref{thm:generalization_radius_scaling} derives the scaling law of the generalization radius $\rho_G$, showing that it decomposes into the landscape-dependent term $\bigl(\rho_G^{(0)}\bigr)^2=\mathbb{E}_{\theta_0\in\Theta}\left[\|\theta^\star(\theta_0)\|_2^2\right]$ and an $O(\eta/(\sqrt{b}\lambda))$ fluctuation correction; this is consistent with the weight-norm ``Goldilocks zone'' reported in prior studies \citep{zimingliuNIPS2022grokking,varma2023circuit}. Theorem~\ref{thm:solution_transition_time_scaling} recovers the inverse dependence of the grokking delay on the regularization coefficient, including the scaling $\tau\propto 1/(\eta\lambda)$ observed or implied in prior empirical and mechanistic studies \citep{zimingliuNIPS2022grokking}. Together, these results show that the proposed stopping-time analysis recovers known scaling behavior while refining it into separate contributions from learning rate, batch size, and regularization coefficient.

\section{Conclusions}

Optimization dynamics induce a topological configuration of solution manifolds underlying grokking. In this work, we characterize this configuration through a shell--core prior of the reachable solution space, in which training first reaches memorization solutions and then transitions, after a prolonged delay, toward generalization solutions. We derive scaling laws for the manifold radii and the solution transition time with respect to the learning rate, batch size, and regularization coefficient through stopping-time and equilibrium-distribution analyses of Adam's joint-state SDE. These results connect the geometry of solution manifolds with the stochastic dynamics of Adam and provide a theoretical framework for understanding grokking through optimization dynamics.

%% file: appendix.tex
\newpage
\section{Appendix}

This section provides proofs, experimental verification of the technical derivations, and additional experimental results. A high-level sketch of theoretical analysis framework is illustrated as in Figure~\ref{fig:high_level_sketch_flowchart}.

\begin{figure}[h]
    \centering

    \resizebox{0.85\linewidth}{!}{\input{tikz/tikz_high_level_sketch}}

    \caption{\textbf{High-Level Sketch of Theoretical Analysis Framework.}}
    \label{fig:high_level_sketch_flowchart}
\end{figure}

\startcontents[appendices]
\printcontents[appendices]{}{1}{Table of Appendix Contents:} 

\subsection{Experimental Settings: Learning Tasks}
\label{app:learning_tasks_appendix}

\paragraph{Group-theoretic learning task on $S_n$.} Let $S_n$ denote the symmetric group on $n$ elements. Each group element $f \in S_n$ is a permutation sending its index set $[n]:=\{1,2,\dots,n\}$ to a fixed enumeration of all $n!$ permutations. Let $f[i]$ denote the indexed permutation $i \mapsto j$ from an index $i \in [n]$ to a target $j \in [n]$. For example, given two permutations $f,g \in S_5$:
\begin{align}
    f=
    \begin{pmatrix}
    1 & 2 & 3 & 4 & 5 \\
    \downarrow & \downarrow & \downarrow & \downarrow & \downarrow \\
    2 & 5 & 1 & 4 & 3
    \end{pmatrix}
    \quad
    g
    =
    \begin{pmatrix}
    1 & 2 & 3 & 4 & 5 \\
    \downarrow & \downarrow & \downarrow & \downarrow & \downarrow \\
    3 & 1 & 5 & 2 & 4
    \end{pmatrix}\notag
    ,
    \end{align}
    the permutations are given by $f[1]: 1 \mapsto 2$, $f[2]: 2 \mapsto 5$, \etc. The product of $f$ and $g$ is defined by permutation composition
    $(f \circ g)[i]=g\bigl[f[i]\bigr]$ given by
    \begin{align}
    f \circ g=
    \begin{pmatrix}
    1 & 2 & 3 & 4 & 5 \\
    \downarrow & \downarrow & \downarrow & \downarrow & \downarrow \\
    1 & 4 & 3 & 2 & 5
    \end{pmatrix}\notag
    .
    \end{align}
    The learning task is to predict the group product $f \circ g$ given the pair $(f, g)$. The dataset consists of all ordered pairs $(f, g)\in S_n^2$, with a randomly sampled subset used for training and the remaining pairs used for evaluation.

\paragraph{Modular arithmetic learning task on $\mathbb{Z}_p$.} Let $\mathbb{Z}_p$ denote the cyclic group of integers modulo $p$. Each input consists of a pair $(a,b)\in\mathbb{Z}_p^2$, and the task is to predict their modular sum $a+b \bmod p$. As in the group-theoretic task, training is performed on a randomly selected subset of all $p^2$ input–output pairs, while evaluation is conducted on the full set.

\newpage
\subsection{Additional Results: Adam--Induced Shell--Core Radius \& Stopping-Time Concentration}
\label{app:adam_induced_radius_concentration}

\input{tables/tab_radius_concentration}

\subsection{Proof: Concentration of Normal Initialization}
\label{app:initialization_concentration_normal_appendix}

\input{proofs/proof_concentration_normal}
\subsection{Proof: Concentration of Uniform Initialization}
\label{app:initialization_concentration_uniform_appendix}

\input{proofs/proof_concentration_uniform}

\newpage
\subsection{Proof: Adam's Closed-Form Continuous-Time SDE Limit}
\label{app:adam_ct_sde_appendix}

\begin{figure}[h]
    \centering

    \resizebox{0.85\linewidth}{!}{\input{tikz/tikz_proof_sketch_adam_sde_limit}}

    \caption{\textbf{Proof Sketch for Adam's Closed-Form Continuous-Time SDE Limit.} This diagram illustrates the proof sketch and the corresponding correctness checks for Adam's continuous-time SDE limit. The mini-batch gradient is modeled as a Wiener process by \textbf{Central-Limit Theorem}. Combining this stochastic gradient model with Adam's discrete update rules and the continuous-time interpolation yields the continuous-time limits of the first- and second-moment estimates, as well as the continuous-time parameter dynamics. Together with the expansion of the element-wise squared gradient, these components give the joint-state SDE for $S_t:=(m_t,v_t,\theta_t)$.}
    \label{fig:adam_closed_form_sde_proof_flowchart}
\end{figure}

\input{proofs/proof_adam_joint_state_SDE}

\newpage
\subsection{Proof: Induced Radius SDE}
\label{app:induced_radius_sde_appendix}

\begin{figure*}[h]
    \centering

    \begin{subfigure}[t]{0.45\linewidth}
        \centering
        \includegraphics[width=\linewidth]{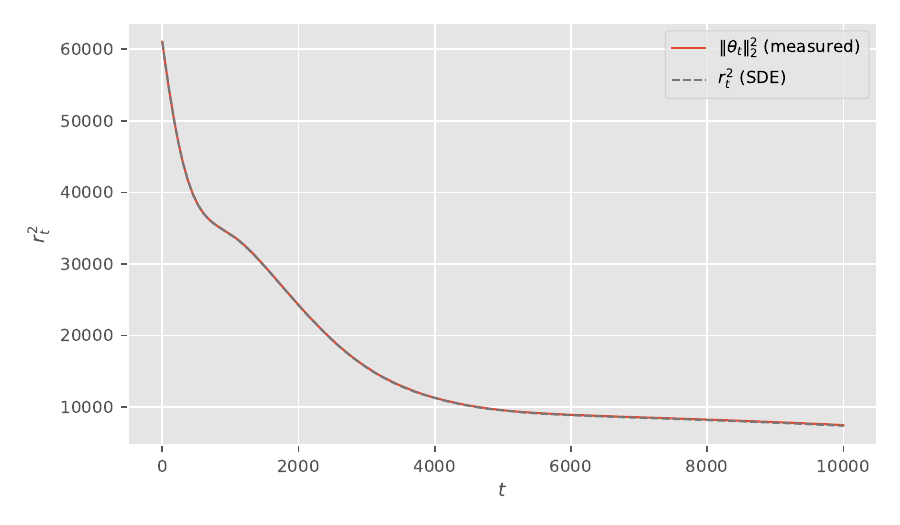}
        \caption{Problem $S_5$}
        \label{fig:radius_sde_sanity_s5_appendix}
    \end{subfigure}
    \begin{subfigure}[t]{0.45\linewidth}
        \centering
        \includegraphics[width=\linewidth]{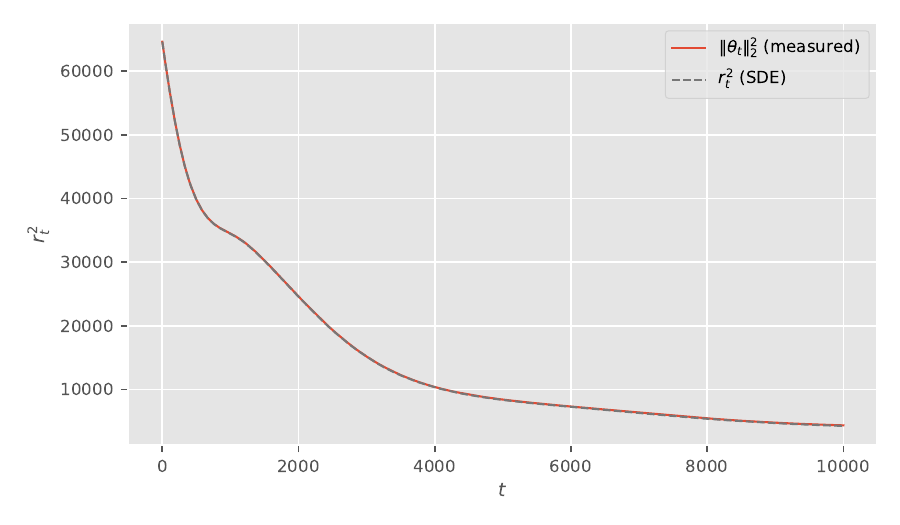}
        \caption{Problem $\mathbb{Z}_{127}$}
        \label{fig:radius_sde_sanity_z127_appendix}
    \end{subfigure}

    \caption{\textbf{Sanity Check with Radius SDE.} We use the induced radius SDE (Lemma~\ref{lem:induced_radius_sde_restated}), derived via It\^o's lemma, as a sanity check for Adam's continuous-time SDE limit.}

    \label{fig:radius_sde_sanity_appendix}
\end{figure*}

We use the induced radius SDE as a sanity check on the joint-state SDE of Lemma~\ref{lem:adam_joint_sde}. Applying It\^o's lemma~\citep{oksendal2003sde} to the quadratic form $r_t^2 = S_t^{\!\top} E\, S_t$ with the $\theta$-block projector $E \in \mathbb{R}^{3p\times 3p}$ defined in equation~\eqref{eq:theta_block_projector_def_appendix} below must reproduce the It\^o decomposition stated in Lemma~\ref{lem:induced_radius_sde_restated}; empirical verification on Adam runs is reported in Figure~\ref{fig:radius_sde_sanity_appendix}.

\input{proofs/proof_induced_radius_SDE}

\subsection{Proof: Mean-Field Limit of First- and Second-Moment Estimates}
\label{app:mean_field_mt_and_vt_limit}
\input{proofs/proof_mean_field_mt_and_vt_limit}

\subsection{Proof: Preconditioned-Decomposition of Adam's $\theta$-SDE}
\label{app:preconditioned_theta_sde_appendix}

We decompose Adam's exact $\theta$-dynamics into a preconditioned drift and diffusion with a residual. This decomposition helps us to formulate the late-stage evolution of training dynamics with Adam in grokking. The technical correctness is verified via numerical experiments in Appendix~\ref{app:preconditioned_radius_sde_appendix}.

\input{proofs/proof_preconditioned_decomposition_of_theta_SDE}

\newpage
\subsection{Proof: Preconditioned-Decomposition of Adam's Radius SDE}
\label{app:preconditioned_radius_sde_appendix}

We decompose Adam's exact radius SDE into preconditioned and residual terms in Lemma~\ref{lem:preconditioned_radius_sde_appendix}. This decomposition is used to derive the reduced radius SDE under the slow-manifold reduction and small-isotropic gradient covariance observations. Its technical correctness is verified in Figure~\ref{fig:preconditioned_radius_sde_appendix}, in which the predicted radius dynamics exactly match the theoretical values computed from Lemma~\ref{lem:preconditioned_radius_sde_appendix}. 

\begin{figure*}[h]
    \centering

    \includegraphics[width=\linewidth]{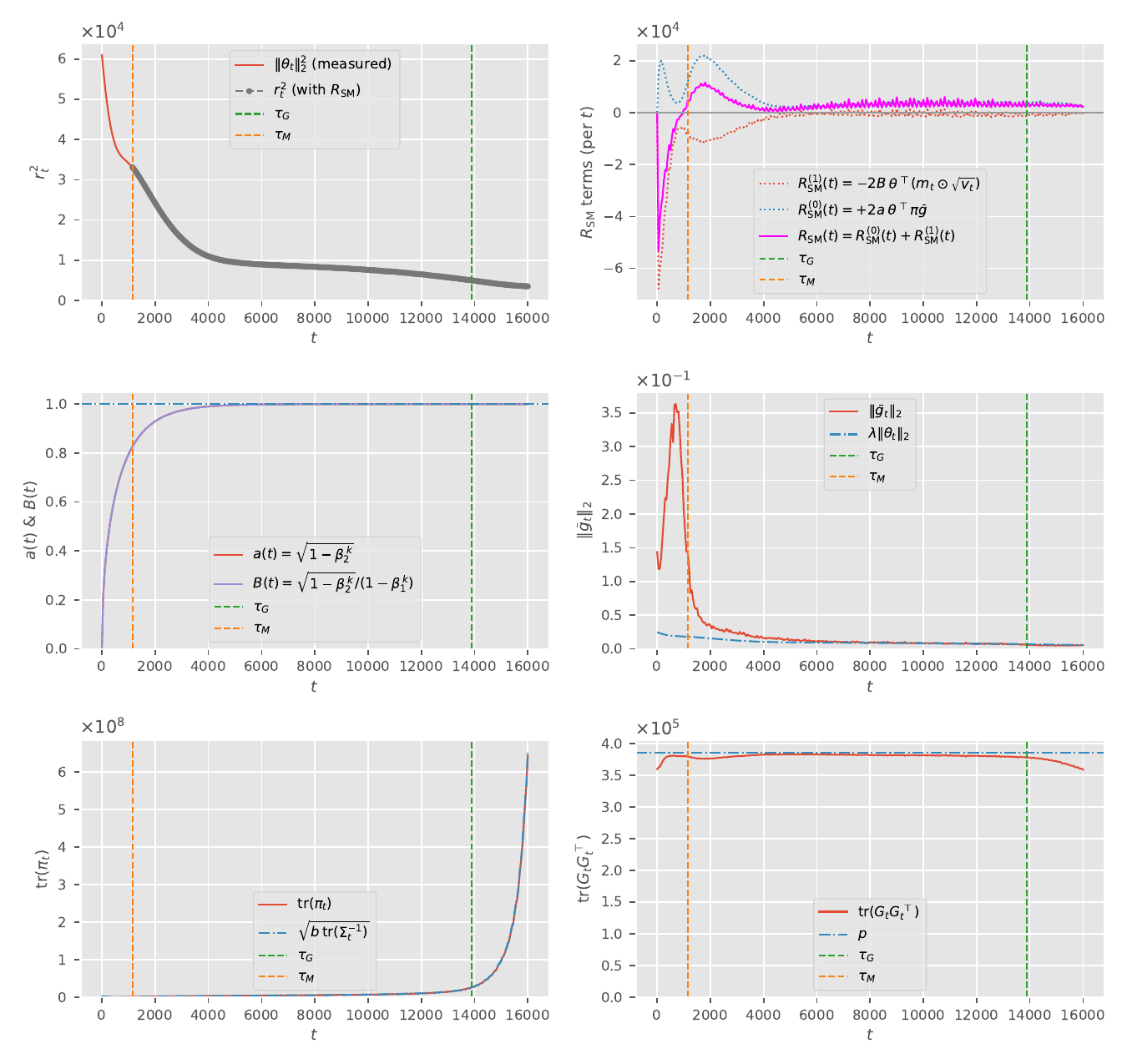}

    \caption{\textbf{Preconditioned Radius SDE.} This experiment shows the dynamics of the preconditioned radius SDE in Lemma~\ref{lem:preconditioned_radius_sde_appendix}, and two memorization-regime identities in Lemma~\ref{lem:late_stage_preconditioned_diffusion_isotropy} where $\mathrm{tr}(\pi(\theta_t))\approx \sqrt{b}\, \mathrm{tr}\!\bigl[\left(\mathrm{diag}\,\Sigma_t\right)^{-1/2}\bigr]$ and $\mathrm{tr}(G(\theta_t)G(\theta_t)^\top) \approx p$. The predicted radius dynamics match the theoretical values computed from Lemma~\ref{lem:preconditioned_radius_sde_appendix} and Lemma~\ref{lem:late_stage_preconditioned_diffusion_isotropy}, validating the correctness of the decomposition.}

    \label{fig:preconditioned_radius_sde_appendix}
\end{figure*}

\input{proofs/proof_preconditioned_decomposition_of_radius_SDE}

\subsection{Proof: Memorization-Regime Preconditioner and Effective Diffusion Identities}
\label{app:memorization_regime_identities}

The validation for Lemma~\ref{lem:late_stage_preconditioned_diffusion_isotropy} is provided in Figure~\ref{fig:preconditioned_radius_sde_appendix}. The experiment shows that, in the memorization regime, the approximated identities
\begin{align}
\left.
        \begin{array}{c}
            \pi(\theta_t) :=
            \mathrm{diag}\!\left(\overline{g\odot g}_t\right)^{-1/2} \\
            \overline{g\odot g}_t
            \approx
            (1/b)\,\mathrm{diag}\,\Sigma_t
        \end{array}
        \right\}
\Rightarrow
\mathrm{tr}(\pi(\theta_t))
\approx
\sqrt{b}\,
\mathrm{tr}\!\bigl[\left(
\mathrm{diag}\,\Sigma_t
\right)^{-1/2}\bigr]
\end{align}
and
\begin{align}
G(\theta_t)
=
\frac{1}{\sqrt{b}}B(t)\pi(\theta_t)\Sigma_t^{1/2}
\Rightarrow
\mathrm{tr}\!\left(G(\theta_t)G(\theta_t)^\top\right)
\approx
p
\end{align}
hold in the memorization regime.

\input{proofs/proof_memorization_regime_identities}

\newpage
\subsection{Proof: Reduced Memorization-Regime Radius SDE}
\label{app:reduced_late_stage_radius_sde_appendix}

We derive the exact closed-form reduced late-stage radius SDE with residual terms in Lemma~\ref{lem:reduced_late_stage_radius_sde_restated}. Its technical correctness is verified in Figure~\ref{fig:late_stage_radius_sde_appendix}, where the predicted radius dynamics match the theoretical values computed from Lemma~\ref{lem:reduced_late_stage_radius_sde_restated}.

\begin{figure*}[h]
    \centering

    \includegraphics[width=\linewidth]{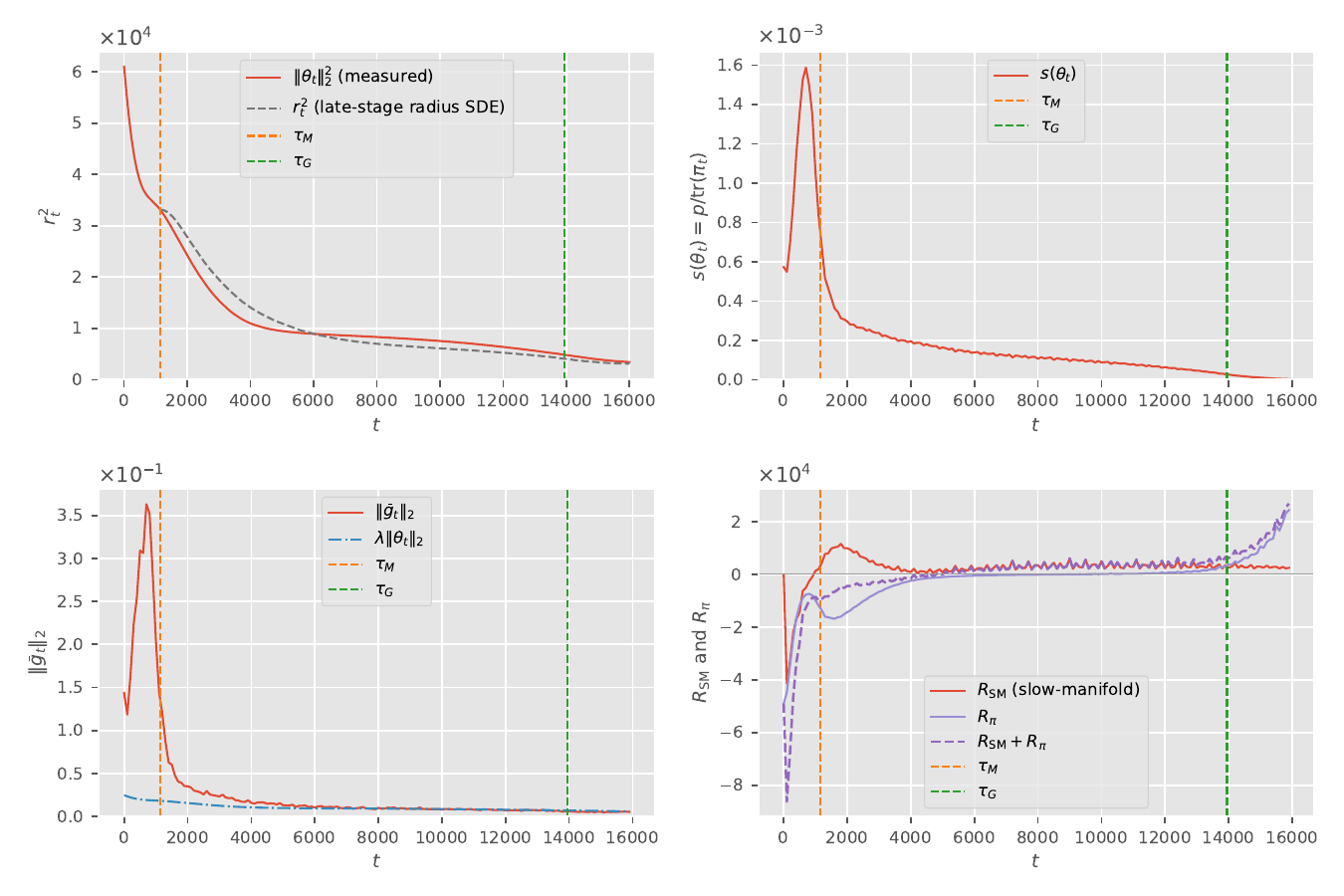}

    \caption{\textbf{Late-Stage Radius SDE.} This experiment shows the dynamics of the late-stage radius SDE in Lemma~\ref{lem:reduced_late_stage_radius_sde_restated}. The predicted radius dynamics closely match the theoretical values computed from Lemma~\ref{lem:reduced_late_stage_radius_sde_restated} without counting residual terms, validating the correctness of the decomposition. In particular, the late-stage residual sum $\mathcal{R}_{\rm SM}+\mathcal{R}_{\pi}$ is negligible, as hypothesized.}

    \label{fig:late_stage_radius_sde_appendix}
\end{figure*}

\input{proofs/proof_preconditioned_latestage_radius_SDE}

\subsection{Proof: Scaling Law of Memorization Radius}
\label{app:proof_memorization_radius_scaling_appendix}

\input{proofs/proof_rho_M}

\newpage
\subsection{Proof: Scaling Law of Generalization Radius}
\label{app:proof_generalization_radius_scaling_appendix}

\input{proofs/proof_rho_G}

\newpage
\subsection{Proof: Scaling Law of Solution Transition Time}
\label{app:proof_solution_transition_time_appendix}
\input{proofs/proof_tau}

\newpage
\subsection{Additional Results: Scaling Laws of Manifold Radius $\rho_M^2$ on $\mathbb{Z}_{127}$}
\label{app:additional_results_rho_M_z127_appendix}

\begin{figure*}[h]
    \centering

    \includegraphics[width=\linewidth]{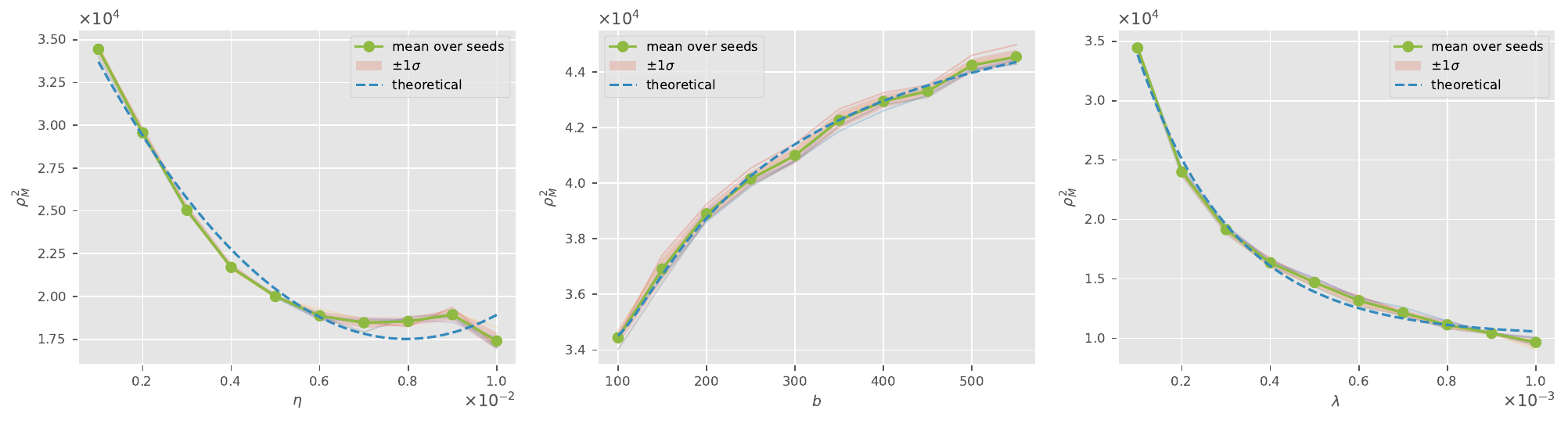}

    \caption{\textbf{Scaling Law of Manifold Radius $\rho_M^2$ on $\mathbb{Z}_{127}$.} We show the scaling law of $\rho_M^2$ with respect to the learning rate $\eta$, batch size $b$, and $\ell_2$ regularization coefficient $\lambda$ on the $\mathbb{Z}_{127}$ task. For each hyperparameter configuration, we train for ten runs. The results show that larger $\eta/b$ induces stronger diffusion variance, whereas $\lambda$ does not affect the diffusion variance. We also overlay the theoretical fits.}

    \label{fig:radius_scaling_rho_M_z127_appendix}
\end{figure*}

\newpage
\subsection{Additional Results: Scaling Laws of Manifold Radius $\rho_G^2$ on $\mathbb{Z}_{127}$}
\label{app:additional_results_rho_G_z127_appendix}

\begin{figure*}[h]
    \centering

    \includegraphics[width=\linewidth]{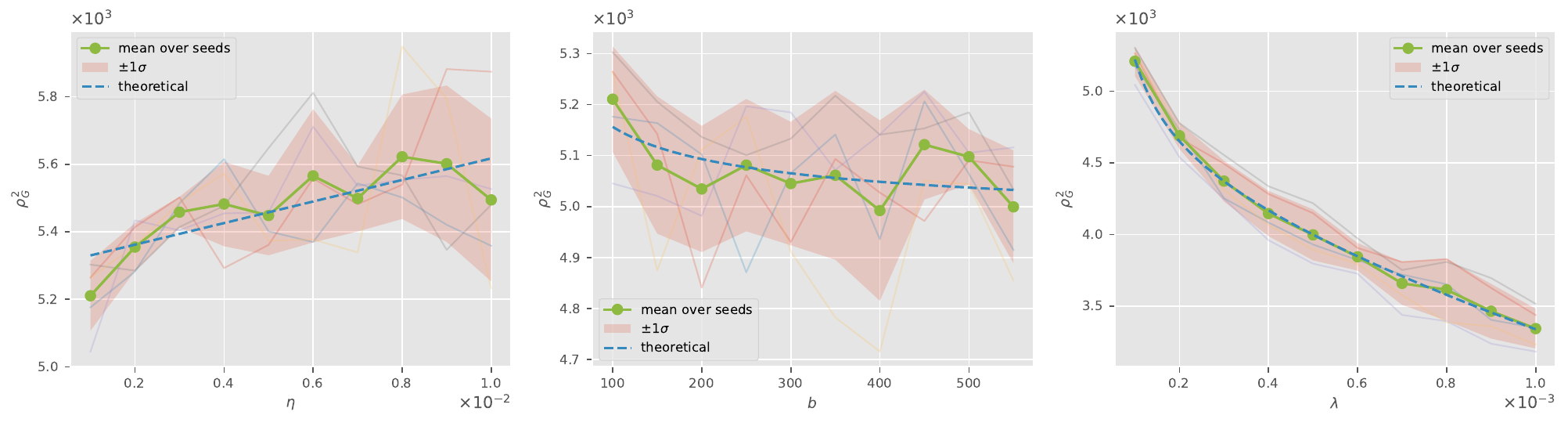}

    \caption{\textbf{Scaling Law of Manifold Radius $\rho_G^2$ on $\mathbb{Z}_{127}$.} We show the scaling law of $\rho_G^2$ with respect to the learning rate $\eta$, batch size $b$, and $\ell_2$ regularization coefficient $\lambda$ on the $\mathbb{Z}_{127}$ task. For each hyperparameter configuration, we train for ten runs. The results show that larger $\eta$ induces stronger diffusion variance, whereas $\lambda$ does not affect the diffusion variance. We also overlay the theoretical fits.}

    \label{fig:radius_scaling_rho_G_z127_appendix}
\end{figure*}

\newpage
\subsection{Additional Results: Scaling Laws of Solution Transition Time on $\mathbb{Z}_{127}$}
\label{app:additional_results_transition_time_z127_appendix}

\begin{figure*}[th]
    \centering

    \includegraphics[width=\linewidth]{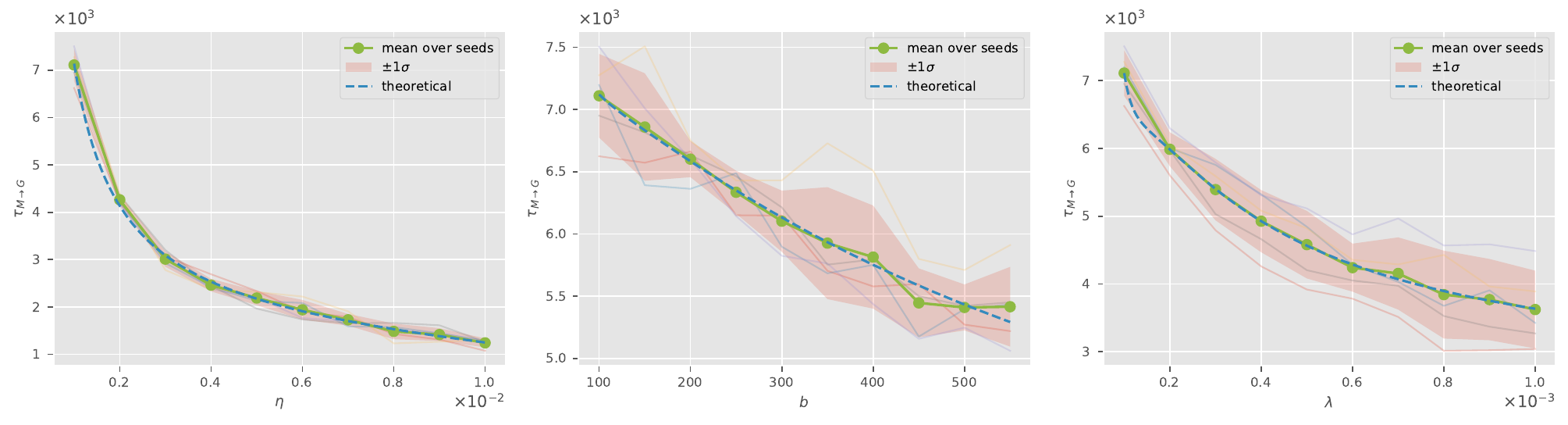}

    \caption{\textbf{Scaling laws of solution transition time on $\mathbb{Z}_{127}$.} We show that the solution transition time $\tau_{M \to G}$ from the memorization manifold $M$ to the generalization manifold $G$ scales with the learning rate $\eta$, batch size $b$, and $\ell_2$ regularization coefficient $\lambda$. For each hyperparameter configuration, we train for ten runs. We also overlay the theoretical fits.}

    \label{fig:transition_time_scaling_z127_appendix}
\end{figure*}



%% file: tikz/tikz_high_level_sketch.tex
\begin{tikzpicture}[
    x=1cm,
    y=1cm,
    every node/.style={font=\scriptsize},
    theoryflowroot/.style={
        rectangle,
        rounded corners=4pt,
        draw=black,
        line width=0.35pt,
        align=center,
        inner xsep=6pt,
        inner ysep=5pt,
        text width=10.8cm
    },
    theoryflowbox/.style={
        rectangle,
        rounded corners=4pt,
        draw=black,
        line width=0.35pt,
        align=center,
        inner xsep=5pt,
        inner ysep=5pt,
        text width=4.9cm
    },
    theoryflowwide/.style={
        rectangle,
        rounded corners=4pt,
        draw=black,
        line width=0.35pt,
        align=center,
        inner xsep=5pt,
        inner ysep=5pt,
        text width=10.2cm
    },
    theoryflownote/.style={
        rectangle,
        rounded corners=4pt,
        draw=black,
        dashed,
        line width=0.35pt,
        align=center,
        inner xsep=5pt,
        inner ysep=5pt,
        text width=5.6cm
    },
    theoryflowarrow/.style={
        ->,
        line width=0.45pt,
        shorten >=1pt,
        shorten <=1pt
    },
    theoryflowdasharrow/.style={
        ->,
        dashed,
        line width=0.45pt,
        shorten >=1pt,
        shorten <=1pt
    }
]

\node[theoryflowroot] (theoryflow_joint) at (0,0) {
\textbf{Adam's Joint-State SDE} (Lemma~\ref{lem:adam_joint_sde_restated})\\[1.0mm]
\(\displaystyle
\dd S_t
=
\mu(S_t)\dd t
+
\sqrt{\eta/b}\,\sigma(S_t)\dd W_t,
\qquad
S_t=(\theta_t,m_t,v_t)
\)
};

\node[theoryflowbox] (theoryflow_radius_induced) at (-5.8,-2.5) {
\textbf{Induced Radius SDE} (Lemma~\ref{lem:induced_radius_sde_restated})
\begin{align}
\dd r_t^2
=
-2B(t)\theta_t^\top
\left(m_t\oslash\sqrt{v_t}\right)\dd t \notag
\end{align}
};

\node[theoryflowbox] (theoryflow_mem_id) at (5.8,-3) {
\textbf{Memorization-Regime Identities} (Lemma~\ref{lem:late_stage_preconditioned_diffusion_isotropy})\\[1.0mm]
Late-stage identities
\begin{align}
\displaystyle 
&\mathrm{tr}(\pi_t) \approx \sqrt{b}\, \mathrm{tr}\!\bigl[\left(\mathrm{diag}\,\Sigma(\theta_t)\right)^{-1/2}\bigr],\nonumber\\
&\mathrm{tr}\!\left(G(\theta_t)G(\theta_t)^\top\right)
\approx\nonumber
p
\end{align}
};

\node[theoryflowbox] (theoryflow_theta_prec) at (-5.8,-5.95) {
\textbf{Preconditioned-Decomposition of Adam's $\theta$-SDE} (Lemma~\ref{lem:preconditioned_theta_sde_appendix})
\begin{align}
\dd\theta_t
&\approx
-a(t)\pi(\theta_t)\bar g(\theta_t)\dd t
+
\mathcal R_{\theta,\mathrm{SM}}(t)\dd t \notag\\
&\qquad\qquad +\sqrt{\eta}\,G(\theta_t)\dd W_t\notag
\end{align}
};

\node[theoryflowbox] (theoryflow_mem_scale) at (5.8,-4.95) {
\textbf{Scaling Law of Memorization Radius} (Theorem~\ref{thm:memorization_radius_scaling_restated})\\[1.0mm]
Joint-state first-exit analysis for
\(\displaystyle \rho_M^2\)
};

\node[theoryflowbox] (theoryflow_meanfield) at (-5.8,-4.15) {
\textbf{Mean-Field Limit of $(m_t,v_t)$} (Lemma~\ref{lem:meanfield_mt_and_vt_limit})\\[1.0mm]
\(\displaystyle m_t \to \bar g_t(1-e^{-\alpha_1 t}),\ v_t \to \overline{g\odot g}_t\)
};
\node[theoryflowwide] (theoryflow_radius_prec) at (0,-8.1) {
\textbf{Preconditioned-Decomposition of Adam's Radius SDE} (Lemma~\ref{lem:preconditioned_radius_sde_appendix})\\[1.0mm]
Deriving the preconditioned-decomposition of radius SDE for analyzing manifold radius precisely.
\begin{align}
\dd r_t^2
\approx
\left[
-2a(t)\theta_t^\top\pi(\theta_t)\bar g_t
+
\mathcal R_{\mathrm{SM}}(t)
+
\eta\,
\mathrm{tr}\!\left(G(\theta_t)G(\theta_t)^\top\right)
\right]\dd t
+
2\sqrt{\eta}\,\theta_t^\top G(\theta_t)\dd W_t \nonumber
\end{align}
};

\node[theoryflowwide] (theoryflow_reduced_radius) at (0,-10.25) {
\textbf{Reduced Memorization-Regime Radius SDE} (Lemma~\ref{lem:reduced_late_stage_radius_sde_restated})\\[1.0mm]
Use memorization-regime identities to reduce the preconditioned radius dynamics
\begin{align}
\dd r_t^2
\approx
\left[
-\frac{2\lambda}{s(\theta_t)}r_t^2
-
\frac{2}{s(\theta_t)}
\theta_t^\top\bigl(\bar g_t-\lambda\theta_t\bigr)
+
\eta\,p
+
\mathcal R_{\mathrm{SM}}(t)
+
\mathcal R_{\pi}(t)
\right]\dd t
+
2\sqrt{\eta}\,r_t\,\dd W_t^{(r)}
\nonumber
\end{align}
};

\node[theoryflowbox] (theoryflow_gen_scale) at (-4.45,-12.75) {
\textbf{Scaling Law of Generalization Radius} (Theorem~\ref{thm:generalization_radius_scaling_restated})\\[1.0mm]
Stationary late-stage radial analysis for
\(\displaystyle \rho_G^2\)
};

\node[theoryflowbox] (theoryflow_transition) at (4.45,-12.75) {
\textbf{Scaling Law of Solution Transition Time} (Theorem~\ref{thm:solution_transition_time_scaling_restated})\\[1.0mm]
First-passage analysis from
\(\displaystyle \partial M\)
to
\(\displaystyle \partial G\)
};


\draw[theoryflowarrow]
    (theoryflow_joint.south)
    -- (0,-1.25)
    -- (theoryflow_radius_prec.north);                       
\draw[theoryflowarrow]
    (-8.7,-4.15)
    -- (theoryflow_meanfield.west);                          
\draw[theoryflowarrow]
    (theoryflow_meanfield.east)
    -| ([xshift=-1.4cm]theoryflow_radius_prec.north);        

\draw[theoryflowarrow]
    (0,-1.25)
    -- (-5.8,-1.25)
    -- (theoryflow_radius_induced.north);                    

\draw[theoryflowarrow]
    (0,-1.25)
    -- (-8.7,-1.25)
    -- (-8.7,-5.95)
    -- (theoryflow_theta_prec.west);                         

\draw[theoryflowarrow]
    (0,-1.25)
    -- (5.8,-1.25)
    -- (theoryflow_mem_id.north);                            

\draw[theoryflowarrow]
    (0,-1.25)
    -- (2.0,-1.25)
    -- (2.0,-4.95)
    -- (theoryflow_mem_scale.west);                          

\draw[theoryflowarrow]
    (theoryflow_radius_induced.east)
    -| ([xshift=-0.7cm]theoryflow_radius_prec.north);        

\draw[theoryflowarrow]
    (theoryflow_theta_prec.south)
    -- (-5.8,-8.1)
    -- (theoryflow_radius_prec.west);                        

\draw[theoryflowarrow,-]
    (theoryflow_mem_id.east)
    -- (9.3,-3)
    -- (9.3,-10.25)
    -- (5.95,-10.25);                                        
\draw[theoryflowarrow]
    (5.65,-10.25)
    -- (theoryflow_reduced_radius.east);                     

\draw[theoryflowarrow]
    (theoryflow_radius_prec.south)
    -- (theoryflow_reduced_radius.north);                    

\draw[theoryflowarrow]
    (theoryflow_mem_scale.south)
    -- ([xshift=13.5mm]theoryflow_transition.north);         

\draw[theoryflowarrow]
    (theoryflow_reduced_radius.south)
    -- (0,-11.55)
    -- (-4.45,-11.55)
    -- (theoryflow_gen_scale.north);                         

\draw[theoryflowarrow]
    (0,-11.55)
    -- (4.45,-11.55)
    -- (theoryflow_transition.north);                        

\draw[theoryflowarrow]
    (theoryflow_gen_scale.east)
    -- (theoryflow_transition.west);                         

\fill (0,-1.25) circle (1.7pt);                              
\fill (-5.8,-1.25) circle (1.7pt);                           
\fill (2.0,-1.25) circle (1.7pt);                            
\fill (-8.7,-4.15) circle (1.7pt);                           
\fill (0,-11.55) circle (1.7pt);                             

\end{tikzpicture}

%% file: tables/tab_radius_concentration.tex
 
\begin{table}[h]
\centering
\small
\setlength{\tabcolsep}{4pt}
\caption{\textbf{Adam--Induced Shell--Core Radius \& Stopping-Time Concentration on Learning Task $S_5$.} The experiment shows the concentration of Adam--induced shell--core radii and stopping times on the learning task $S_5$, across 10 seeds, with learning rate $\eta=10^{-3},(\beta_1,\beta_2)=(0.9,0.999)$, $\ell_2$ regularization coefficient $10^{-4}$, and batch size $100$.}
\label{tab:radius_shell_core_s5}
\begin{tabular}{rrrrrrr}
\toprule
Seed ($S_0$) & $\rho_0(S_0)$ & $\rho_M(S_0)$ & $\tau_M(S_0)$ & $\rho_G(S_0)$ & $\tau_G(S_0)$ & $\tau_{M\to G}(S_0)$ \\
\midrule
1 & 249.27 & 186.07 & 1006 & 71.39 & 14000 & 12994 \\
2 & 247.09 & 184.59 & 1009 & 69.22 & 14850 & 13841 \\
3 & 247.87 & 185.19 & 1007 & 68.84 & 12150 & 11143 \\
4 & 248.13 & 185.16 & 1009 & 69.67 & 13950 & 12941 \\
5 & 248.85 & 185.64 & 1008 & 70.72 & 12500 & 11492 \\
6 & 248.44 & 186.61 & 939 & 69.34 & 12100 & 11161 \\
7 & 247.98 & 185.05 & 1009 & 69.50 & 15450 & 14441 \\
8 & 248.03 & 185.19 & 1009 & 68.33 & 13850 & 12841 \\
9 & 247.95 & 186.23 & 937 & 67.80 & 13050 & 12113 \\
10 & 248.88 & 185.59 & 1004 & 70.15 & 12750 & 11746 \\
\midrule
$\mathbb{E}[\cdot]$ & 248.25 & 185.53 & 993.7 & 69.50 & 13465.0 & 12471.3 \\
$\mathrm{Std}[\cdot]$ & 0.60 & 0.59 & 27.9 & 1.02 & 1083.3 & 1071.5 \\
\bottomrule
\end{tabular}

 
\bigskip

\setlength{\tabcolsep}{4pt}
\caption{\textbf{Adam--Induced Shell--Core Radius \& Stopping-Time Concentration on Learning Task $\mathbb{Z}_{127}$.} The experiment shows the concentration of Adam--induced shell--core radii and stopping times on the learning task $S_5$, across 10 seeds, with learning rate $\eta=10^{-3},(\beta_1,\beta_2)=(0.9,0.999)$, $\ell_2$ regularization coefficient $10^{-4}$, and batch size $100$.}
\label{tab:radius_shell_core_z127}
\begin{tabular}{rrrrrrr}
\toprule
Seed ($S_0$) & $\rho_0(S_0)$ & $\rho_M(S_0)$ & $\tau_M(S_0)$ & $\rho_G(S_0)$ & $\tau_G(S_0)$ & $\tau_{M\to G}(S_0)$ \\
\midrule
1 & 256.25 & 186.57 & 1041 & 71.47 & 8000 & 6959 \\
2 & 254.24 & 183.95 & 1115 & 72.40 & 8600 & 7485 \\
3 & 254.76 & 184.10 & 1121 & 71.35 & 8300 & 7179 \\
4 & 255.47 & 184.79 & 1118 & 71.32 & 8300 & 7182 \\
5 & 256.10 & 185.09 & 1119 & 70.61 & 8400 & 7281 \\
6 & 255.76 & 184.74 & 1121 & 71.06 & 8450 & 7329 \\
7 & 255.09 & 184.36 & 1116 & 72.32 & 8400 & 7284 \\
8 & 255.45 & 184.88 & 1114 & 72.21 & 8500 & 7386 \\
9 & 255.46 & 185.97 & 1047 & 72.66 & 7650 & 6603 \\
10 & 256.06 & 186.53 & 1042 & 73.44 & 7800 & 6758 \\
\midrule
$\mathbb{E}[\cdot]$ & 255.46 & 185.10 & 1095.4 & 71.88 & 8240.0 & 7144.6 \\
$\mathrm{Std}[\cdot]$ & 0.60 & 0.90 & 34.2 & 0.82 & 299.8 & 269.4 \\
\bottomrule
\end{tabular}
\end{table}

%% file: proofs/proof_concentration_normal.tex
\begin{lemma}[Gaussian Initialization Concentration]
\label{lem:gaussian_initialization_concentration}
Let
\begin{align}
\theta_0
=
\left(
\theta_0^{(1)},\ldots,\theta_0^{(k)}
\right)
\in\mathbb R^p,
\qquad
\theta_0^{(j)}\in\mathbb R^{p_j},
\qquad
\sum_{j=1}^k p_j=p,
\end{align}
where the subvectors are independent and
$[\theta_0^{(j)}]_i\sim\mathcal N(0,\sigma_j^2)$. Define the effective
coordinate variance $\tilde{\sigma}^2$ and the effective dimension parameter
$\tilde p$ by
\begin{align}
\tilde{\sigma}^2
:=
\frac{1}{p}
\sum_{j=1}^k p_j\sigma_j^2,
\qquad
\frac{1}{\tilde p}
:=
\frac{1}{k}
\sum_{j=1}^k \frac{1}{p_j}.
\end{align}
Then the squared initialization radius satisfies
\begin{align}
\mathbb{E}\bigl[\|\theta_0\|_2^2\bigr]
=
\tilde{\sigma}^2 p,
\qquad
\mathrm{Var}\bigl[\|\theta_0\|_2^2\bigr]
=
2\sum_{j=1}^k p_j\sigma_j^4.
\end{align}
In particular, for initialization schemes used for numerical stability, each
block-wise coordinate scale is often chosen so that
$\sigma_j=O(p_j^{-1/2})$. Under this scaling,
\begin{align}
\tilde{\sigma}^2
=
O\!\left(\frac{k}{p}\right),
\qquad
\mathbb{E}\bigl[\|\theta_0\|_2^2\bigr]
=
O(k),
\qquad
\mathrm{Var}\bigl[\|\theta_0\|_2^2\bigr]
=
O\!\left(\frac{k}{\tilde p}\right).
\end{align}
\end{lemma}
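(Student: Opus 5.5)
The plan is to decompose the squared norm block by block and use independence of the coordinates. Write
\begin{align}
\|\theta_0\|_2^2=\sum_{j=1}^k Q_j,
\qquad
Q_j:=\sum_{i=1}^{p_j}\bigl|[\theta_0^{(j)}]_i\bigr|^2 ,
\end{align}
and note that $[\theta_0^{(j)}]_i/\sigma_j$ is standard normal. Then $Q_j/\sigma_j^2\sim\chi^2_{p_j}$, which has mean $p_j$ and variance $2p_j$; the variance follows from $\mathbb{E}[Z^4]=3$ for $Z\sim\mathcal N(0,1)$, so $\mathrm{Var}[Z^2]=2$. By linearity of expectation,
\begin{align}
\mathbb{E}\bigl[\|\theta_0\|_2^2\bigr]=\sum_{j=1}^k p_j\sigma_j^2=\tilde\sigma^2 p ,
\end{align}
directly from the definition of $\tilde\sigma^2$. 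The blocks, and the coordinates within each block, are independent, so the variances add:
\begin{align}
\mathrm{Var}\bigl[\|\theta_0\|_2^2\bigr]=\sum_{j=1}^k\sigma_j^4\cdot 2p_j .
\end{align}

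For the asymptotic part, I will substitute $\sigma_j^2\le C/p_j$ for a uniform constant $C$. Each block then contributes $p_j\sigma_j^2\le C$ to the mean. Summing gives $\mathbb{E}\bigl[\|\theta_0\|_2^2\bigr]\le Ck=O(k)$, and dividing by $p$ gives $\tilde\sigma^2=O(k/p)$. For the variance, each block contributes $2p_j\sigma_j^4\le 2C^2/p_j$. Summing and using the definition of $\tilde p$ gives
\begin{align}
\mathrm{Var}\bigl[\|\theta_0\|_2^2\bigr]\le 2C^2\sum_{j=1}^k\frac{1}{p_j}=\frac{2C^2k}{\tilde p}=O\!\left(\frac{k}{\tilde p}\right).
\end{align}

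The only step that needs care is interpreting $O(\cdot)$. I will read $\sigma_j=O(p_j^{-1/2})$ as holding with a constant uniform in $j$, so that the bounds can be summed over $k$ blocks without picking up any dependence on $j$. Beyond that, the argument is routine moment calculations for $\chi^2$ variables.
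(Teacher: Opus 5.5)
Your proof is correct and follows the same route as the paper. Both decompose the squared norm block-wise into scaled $\chi^2_{p_j}$ variables, add means, add variances by independence, and then substitute $\sigma_j^2=O(1/p_j)$. The differences are cosmetic: the paper derives $\mathbb{E}[R_j]=p_j$ and $\mathrm{Var}[R_j]=2p_j$ by integrating against the $\chi^2$ density with Gamma-function identities rather than from $\mathbb{E}[Z^4]=3$. Also, reading $O(\cdot)$ with a constant uniform in $j$ makes explicit a step the paper leaves implicit.
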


\begin{proof}
We consider a Gaussian initialization with block-wise coordinate scales. Write
\begin{align}
\theta_0
=
\left(
\theta_0^{(1)},\ldots,\theta_0^{(k)}
\right),
\qquad
\theta_0^{(j)}
=
\sigma_j Z^{(j)},
\qquad
Z^{(j)}
\sim
\mathcal N(0,I_{p_j}),
\end{align}
where the subvectors $Z^{(j)}$ are independent. Then
\begin{align}
\|\theta_0\|_2^2
=
\sum_{j=1}^k
\|\theta_0^{(j)}\|_2^2
=
\sum_{j=1}^k
\sigma_j^2\|Z^{(j)}\|_2^2,
\end{align}
and it suffices to study the distribution of each $\|Z^{(j)}\|_2^2$. Let
\begin{align}
R_j
:=
\|Z^{(j)}\|_2^2 .
\end{align}
The random variable $R_j$ follows a chi-square distribution with $p_j$ degrees of freedom,
whose density is
\begin{align}
f_{R_j}(r)
=
\frac{1}{2^{\frac{p_j}{2}}\Gamma\!\left(\frac{p_j}{2}\right)}
r^{\frac{p_j}{2}-1}e^{-r/2},
\qquad r\ge 0 .
\end{align}

\paragraph{Computing $\mathbb{E}\bigl[\|\theta_0\|_2^2\bigr]$.}
The expected squared Euclidean norm of the initialization is therefore
\begin{align}
\mathbb{E}\bigl[\|\theta_0\|_2^2\bigr]
&=
\sum_{j=1}^k
\sigma_j^2\,\mathbb{E}[R_j]
\\
&=
\sum_{j=1}^k
\sigma_j^2
\int_0^\infty r f_{R_j}(r)\,\dd r
\\
&=
\sum_{j=1}^k
\frac{\sigma_j^2}{2^{\frac{p_j}{2}}\Gamma\!\left(\frac{p_j}{2}\right)}
\int_0^\infty r^{\frac{p_j}{2}}e^{-r/2}\,\dd r .
\end{align}
Applying the change of variables $u=r/2$ yields
\begin{align}
\int_0^\infty r^{\frac{p_j}{2}}e^{-r/2}\,\dd r
=
2^{\frac{p_j}{2}+1}
\Gamma\!\left(\frac{p_j}{2}+1\right).
\end{align}
Substituting back, we obtain
\begin{align}
\mathbb{E}\bigl[\|\theta_0\|_2^2\bigr]
&=
\sum_{j=1}^k
2\sigma_j^2
\frac{\Gamma\!\left(\frac{p_j}{2}+1\right)}
{\Gamma\!\left(\frac{p_j}{2}\right)}
\\
&=
\sum_{j=1}^k
p_j\sigma_j^2
\\
&=
\tilde{\sigma}^2p .
\end{align}

\paragraph{Computing $\mathrm{Var}\bigl[\|\theta_0\|_2^2\bigr]$.}
We next compute the fluctuations of the squared initialization radius. Since the
subvectors are independent and
$\|\theta_0\|_2^2=\sum_{j=1}^k\sigma_j^2R_j$, it follows that
\begin{align}
\mathrm{Var}\bigl[\|\theta_0\|_2^2\bigr]
=
\sum_{j=1}^k
\sigma_j^4\mathrm{Var}[R_j].
\end{align}
To compute $\mathrm{Var}[R_j]$, we first compute the second moment:
\begin{align}
\mathbb{E}[R_j^2]
&=
\int_0^\infty r^2 f_{R_j}(r)\,\dd r
\\
&=
\frac{1}{2^{\frac{p_j}{2}}\Gamma\!\left(\frac{p_j}{2}\right)}
\int_0^\infty r^{\frac{p_j}{2}+1}e^{-r/2}\,\dd r .
\end{align}
Applying again the change of variables $u=r/2$ gives
\begin{align}
\int_0^\infty r^{\frac{p_j}{2}+1}e^{-r/2}\,\dd r
=
2^{\frac{p_j}{2}+2}
\Gamma\!\left(\frac{p_j}{2}+2\right).
\end{align}
Thus,
\begin{align}
\mathbb{E}[R_j^2]
&=
4
\frac{\Gamma\!\left(\frac{p_j}{2}+2\right)}
{\Gamma\!\left(\frac{p_j}{2}\right)}
\\
&=
4
\left(\frac{p_j}{2}+1\right)
\left(\frac{p_j}{2}\right)
\\
&=
p_j(p_j+2).
\end{align}
Therefore,
\begin{align}
\mathrm{Var}[R_j]
&=
\mathbb{E}[R_j^2] - \bigl(\mathbb{E}[R_j]\bigr)^2
\\
&=
p_j(p_j+2)-p_j^2
\\
&=
2p_j .
\end{align}
Consequently,
\begin{align}
\mathrm{Var}\bigl[\|\theta_0\|_2^2\bigr]
&=
\sum_{j=1}^k
\sigma_j^4\mathrm{Var}[R_j]
\\
&=
2\sum_{j=1}^k p_j\sigma_j^4 .
\end{align}

\paragraph{Coordinate Scaling.}
Finally, in common initialization schemes used for numerical stability, each
block-wise per-coordinate variance is scaled inversely with its block dimension.
Equivalently, one may write
\begin{align}
\sigma_j
=
O\!\left(\frac{1}{\sqrt{p_j}}\right),
\qquad
\sigma_j^2
=
O\!\left(\frac{1}{p_j}\right).
\end{align}
Substituting this scaling into the definition of the effective coordinate variance gives
\begin{align}
\tilde{\sigma}^2
=
\frac{1}{p}
\sum_{j=1}^k p_j\sigma_j^2
=
O\!\left(\frac{k}{p}\right).
\end{align}
Therefore,
\begin{align}
\mathbb{E}\bigl[\|\theta_0\|_2^2\bigr]
&=
\sum_{j=1}^k p_j\sigma_j^2
=
\tilde{\sigma}^2p
=
O(k),
\\
\mathrm{Var}\bigl[\|\theta_0\|_2^2\bigr]
&=
2\sum_{j=1}^k p_j\sigma_j^4
=
O\!\left(\sum_{j=1}^k\frac{1}{p_j}\right).
\end{align}
Using the definition
\begin{align}
\frac{1}{\tilde p}
=
\frac{1}{k}
\sum_{j=1}^k\frac{1}{p_j},
\end{align}
we obtain
\begin{align}
\mathrm{Var}\bigl[\|\theta_0\|_2^2\bigr]
=
O\!\left(\frac{k}{\tilde p}\right).
\end{align}
\end{proof}

%% file: proofs/proof_concentration_uniform.tex
\begin{lemma}[Uniform Initialization Concentration]
\label{lem:uniform_initialization_concentration}
Let
\begin{align}
\theta_0
=
\left(
\theta_0^{(1)},\ldots,\theta_0^{(k)}
\right)
\in\mathbb R^p,
\qquad
\theta_0^{(j)}\in\mathbb R^{p_j},
\qquad
\sum_{j=1}^k p_j=p,
\end{align}
where the subvectors are independent and the coordinates of each subvector satisfy
$[\theta_0^{(j)}]_i\sim \mathcal{U}(-\varepsilon_j,\varepsilon_j)$. Define the
effective coordinate variance $\tilde{\varepsilon}^2$ and the effective dimension
parameter $\tilde p$ by
\begin{align}
\tilde{\varepsilon}^2
:=
\frac{1}{p}
\sum_{j=1}^k p_j\varepsilon_j^2,
\qquad
\frac{1}{\tilde p}
:=
\frac{1}{k}
\sum_{j=1}^k \frac{1}{p_j}.
\end{align}
Then the squared initialization radius satisfies
\begin{align}
\mathbb{E}[\|\theta_0\|_2^2]
=
\frac{\tilde{\varepsilon}^2p}{3},
\qquad
\mathrm{Var}(\|\theta_0\|_2^2)
=
\frac{4}{45}\sum_{j=1}^k p_j\varepsilon_j^4.
\end{align}
Moreover, by the central limit theorem,
\begin{align}
\|\theta_0\|_2^2
\;\overset{d}{\approx}\;
\mathcal{N}\!\left(
\frac{\tilde{\varepsilon}^2p}{3},
\frac{4}{45}\sum_{j=1}^k p_j\varepsilon_j^4
\right),
\qquad p\to\infty .
\end{align}
In particular, for initialization schemes with block-wise coordinate half-widths
$\varepsilon_j=O(p_j^{-1/2})$, the squared initialization radius satisfies
\begin{align}
\tilde{\varepsilon}^2
=
O\!\left(\frac{k}{p}\right),
\qquad
\mathbb{E}[\|\theta_0\|_2^2]
=
O(k),
\qquad
\mathrm{Var}(\|\theta_0\|_2^2)
=
O\!\left(\frac{k}{\tilde p}\right).
\end{align}
For example, a common numerically stable block-wise choice is
$\varepsilon_j=p_j^{-1/2}$, in which case
\begin{align}
\|\theta_0\|_2^2
\;\overset{d}{\approx}\;
\mathcal{N}\!\left(
\frac{k}{3},
\frac{4k}{45\tilde p}
\right).
\end{align}
Thus uniform initialization concentrates on a thin hyperspherical shell with
squared radius $k/3$ and squared-radius variance $4k/(45\tilde p)$.
\end{lemma}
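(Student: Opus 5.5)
The plan mirrors the Gaussian case (Lemma~\ref{lem:gaussian_initialization_concentration}), with the chi-square law replaced by direct moment computations for a uniform variable. Write $\|\theta_0\|_2^2=\sum_{j=1}^k\sum_{i=1}^{p_j} X_{j,i}^2$, where $X_{j,i}:=[\theta_0^{(j)}]_i\sim\mathcal U(-\varepsilon_j,\varepsilon_j)$ and all coordinates are independent. Everything then reduces to the first two moments of $X^2$ for a single $X\sim\mathcal U(-\varepsilon,\varepsilon)$.

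\textbf{Mean and variance.} A one-line integration gives
\begin{align}
\mathbb E[X^2]=\frac{1}{2\varepsilon}\int_{-\varepsilon}^{\varepsilon}x^2\,\dd x=\frac{\varepsilon^2}{3},
\qquad
\mathbb E[X^4]=\frac{\varepsilon^4}{5}.
\end{align}
Hence
\begin{align}
\mathrm{Var}(X^2)=\frac{\varepsilon^4}{5}-\frac{\varepsilon^4}{9}=\frac{4\varepsilon^4}{45}.
\end{align}
Summing over the $p_j$ coordinates of each block gives $\mathbb E\|\theta_0\|_2^2=\tfrac13\sum_j p_j\varepsilon_j^2=\tilde\varepsilon^2p/3$. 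By independence, the variances add, so $\mathrm{Var}\|\theta_0\|_2^2=\tfrac{4}{45}\sum_j p_j\varepsilon_j^4$.

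\textbf{Normal approximation.} The summands $X_{j,i}^2$ are independent but not identically distributed, so I would invoke the Lyapunov CLT rather than the classical one. Each centered summand is bounded by $\varepsilon_j^2$, so its third absolute central moment is at most $C\varepsilon_j^6$. The Lyapunov ratio is therefore bounded by
\begin{align}
\frac{C\sum_j p_j\varepsilon_j^6}{\bigl(\tfrac{4}{45}\sum_j p_j\varepsilon_j^4\bigr)^{3/2}}.
\end{align}
In the regime $\varepsilon_j\asymp p_j^{-1/2}$ this becomes, up to constants,
\begin{align}
\frac{\sum_j p_j^{-2}}{\bigl(\sum_j p_j^{-1}\bigr)^{3/2}}
\le \frac{(\min_j p_j)^{-1}\sum_j p_j^{-1}}{\bigl(\sum_j p_j^{-1}\bigr)^{3/2}}
\le (\min_j p_j)^{-1/2}.
\end{align}
This tends to $0$ as long as every block dimension diverges.

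\textbf{Main obstacle.} The main subtlety is making precise what ``$p\to\infty$'' means in the CLT claim. Growth of $p$ alone is not enough: if some block stays of bounded size while carrying comparable variance, the limit need not be Gaussian. I would therefore state the approximation under the condition $\min_j p_j\to\infty$, which is natural for network layers, or more generally under the Lyapunov condition above.

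\textbf{Scaling.} Finally, substitute $\varepsilon_j=O(p_j^{-1/2})$:
\begin{align}
\tilde\varepsilon^2=\frac1p\sum_j O(1)=O\!\left(\frac kp\right),
\qquad
\mathbb E\|\theta_0\|_2^2=O(k),
\qquad
\mathrm{Var}\|\theta_0\|_2^2=O\!\left(\sum_j p_j^{-1}\right)=O\!\left(\frac{k}{\tilde p}\right),
\end{align}
using the definition of $\tilde p$. For the exact choice $\varepsilon_j=p_j^{-1/2}$ the mean is exactly $\tfrac13\sum_j 1=k/3$ and the variance is exactly $\tfrac4{45}\sum_j p_j^{-1}=4k/(45\tilde p)$. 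Combined with the CLT, this gives the stated Gaussian approximation $\mathcal N\bigl(k/3,\,4k/(45\tilde p)\bigr)$ and the thin-shell interpretation.
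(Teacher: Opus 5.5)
Your proposal is correct and follows the paper's proof. Both compute $\mathbb E[X^2]=\varepsilon^2/3$ and $\mathbb E[X^4]=\varepsilon^4/5$ for a single uniform coordinate, obtain $\mathrm{Var}(X^2)=4\varepsilon^4/45$, sum over the independent coordinates, and then substitute $\varepsilon_j=O(p_j^{-1/2})$ (or exactly $\varepsilon_j=p_j^{-1/2}$) using the definition of $\tilde p$.

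The one difference is an improvement. The paper simply cites ``the central limit theorem'' as $p\to\infty$. Your Lyapunov argument handles the non-identically distributed summands and makes explicit that a condition such as $\min_j p_j\to\infty$ is needed. That condition matters: under $\varepsilon_j=p_j^{-1/2}$ the variance $\tfrac{4}{45}\sum_j p_j^{-1}$ is dominated by the smallest blocks, so a bounded-size block would spoil Gaussianity.
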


\begin{proof}
For each coordinate $X_i^{(j)} := [\theta_0^{(j)}]_i \sim \mathcal{U}(-\varepsilon_j,\varepsilon_j)$,
the density of $X_i^{(j)}$ is $f_{X_i^{(j)}}(x)=\frac{1}{2\varepsilon_j}$ for
$x \in [-\varepsilon_j,\varepsilon_j]$. Then the second moment is computed as
\begin{align}
\mathbb{E}[(X_i^{(j)})^2]
=
\int_{-\varepsilon_j}^{\varepsilon_j} x^2 f_{X_i^{(j)}}(x)\,\dd x
=
\frac{1}{2\varepsilon_j}\int_{-\varepsilon_j}^{\varepsilon_j} x^2\,\dd x
=
\frac{\varepsilon_j^2}{3}.
\end{align}
The fourth moment is computed as
\begin{align}
\mathbb{E}[(X_i^{(j)})^4]
=
\int_{-\varepsilon_j}^{\varepsilon_j} x^4 f_{X_i^{(j)}}(x)\,\dd x
=
\frac{1}{2\varepsilon_j}\int_{-\varepsilon_j}^{\varepsilon_j} x^4\,\dd x
=
\frac{\varepsilon_j^4}{5}.
\end{align}

Defining $S_i^{(j)} := (X_i^{(j)})^2$, we have
$\mathbb{E}[S_i^{(j)}]=\mathbb{E}[(X_i^{(j)})^2]$ and
$\mathbb{E}[(S_i^{(j)})^2]=\mathbb{E}[(X_i^{(j)})^4]$, so that
\begin{align}
\mathrm{Var}((X_i^{(j)})^2)
=
\mathrm{Var}(S_i^{(j)})
&=
\mathbb{E}[(S_i^{(j)})^2] - \mathbb{E}[S_i^{(j)}]^2
\\
&=
\mathbb{E}[(X_i^{(j)})^4] - \bigl(\mathbb{E}[(X_i^{(j)})^2]\bigr)^2
\\
&=
\frac{\varepsilon_j^4}{5}
-
\left(\frac{\varepsilon_j^2}{3}\right)^2
\\
&=
\frac{4\varepsilon_j^4}{45}.
\end{align}

Now set
\begin{align}
R^2
=
\sum_{j=1}^k\sum_{i=1}^{p_j} (X_i^{(j)})^2
=
\sum_{j=1}^k\sum_{i=1}^{p_j} S_i^{(j)} .
\end{align}
Since the random variables $S_i^{(j)}$ are independent, we obtain
\begin{align}
\mathbb{E}[R^2]
&=
\sum_{j=1}^k p_j\,\mathbb{E}[S_i^{(j)}]
=
\frac{1}{3}\sum_{j=1}^k p_j\varepsilon_j^2
=
\frac{\tilde{\varepsilon}^2p}{3},
\\
\mathrm{Var}(R^2)
&=
\sum_{j=1}^k p_j\,\mathrm{Var}(S_i^{(j)})
=
\frac{4}{45}\sum_{j=1}^k p_j\varepsilon_j^4.
\end{align}
Moreover, by the central limit theorem,
\begin{align}
R^2
\overset{d}{\approx}
\mathcal{N}\!\left(
\sum_{j=1}^k p_j\,\mathbb{E}[S_i^{(j)}],
\sum_{j=1}^k p_j\,\mathrm{Var}(S_i^{(j)})
\right)
=
\mathcal{N}\!\left(
\frac{\tilde{\varepsilon}^2p}{3},
\frac{4}{45}\sum_{j=1}^k p_j\varepsilon_j^4
\right),
\qquad p\to\infty .
\end{align}

For numerically stable uniform initialization, the coordinate half-width of each block is often
chosen to scale as
\begin{align}
\varepsilon_j
=
O\!\left(\frac{1}{\sqrt{p_j}}\right).
\end{align}
Substituting this scaling into the preceding identities gives
\begin{align}
\tilde{\varepsilon}^2
=
\frac{1}{p}
\sum_{j=1}^k p_j\varepsilon_j^2
=
O\!\left(\frac{k}{p}\right),
\end{align}
and therefore
\begin{align}
\mathbb{E}[\|\theta_0\|_2^2]
=
\frac{\tilde{\varepsilon}^2p}{3}
=
O(k),
\qquad
\mathrm{Var}(\|\theta_0\|_2^2)
=
\frac{4}{45}\sum_{j=1}^k p_j\varepsilon_j^4
=
O\!\left(\sum_{j=1}^k\frac{1}{p_j}\right).
\end{align}
Using the definition
\begin{align}
\frac{1}{\tilde p}
=
\frac{1}{k}
\sum_{j=1}^k \frac{1}{p_j},
\end{align}
we obtain
\begin{align}
\mathrm{Var}(\|\theta_0\|_2^2)
=
O\!\left(\frac{k}{\tilde p}\right).
\end{align}
For example, taking the common block-wise choice
\begin{align}
\varepsilon_j
=
\frac{1}{\sqrt{p_j}},
\end{align}
we obtain
\begin{align}
R^2
\overset{d}{\approx}
\mathcal{N}\!\left(
\frac{k}{3},
\frac{4k}{45\tilde p}
\right).
\end{align}
\end{proof}

%% file: tikz/tikz_proof_sketch_adam_sde_limit.tex
\begin{tikzpicture}[
    x=1cm,
    y=1cm,
    every node/.style={font=\scriptsize},
    adamflowwide/.style={
        rectangle,
        rounded corners=4pt,
        draw=black,
        line width=0.35pt,
        align=center,
        inner xsep=5pt,
        inner ysep=4pt,
        text width=12.6cm
    },
    adamflowmid/.style={
        rectangle,
        rounded corners=4pt,
        draw=black,
        line width=0.35pt,
        align=center,
        inner xsep=5pt,
        inner ysep=4pt,
        text width=8.3cm
    },
    adamflowsmall/.style={
        rectangle,
        rounded corners=4pt,
        draw=black,
        line width=0.35pt,
        align=center,
        inner xsep=4pt,
        inner ysep=4pt,
        text width=3.75cm
    },
    adamflownote/.style={
        rectangle,
        rounded corners=4pt,
        draw=black,
        dashed,
        line width=0.35pt,
        align=center,
        inner xsep=4pt,
        inner ysep=4pt,
        text width=7.4cm
    },
    adamflowarrow/.style={
        ->,
        line width=0.45pt
    },
    adamflowdasharrow/.style={
        ->,
        dashed,
        line width=0.45pt
    }
]

\node[adamflowwide] (adamflow_wiener) at (0,0) {
\textbf{Modeling Mini-Batch Gradient as Wiener Process by Central-Limit Theorem}\\[1.5mm]
\(\displaystyle
g_k
=
\nabla \mathscr{L}_f(\theta_k)
-
\frac{1}{\sqrt b}\Sigma(\theta_k)^{1/2}\zeta_k,
\qquad
\zeta_k
\sim \mathcal{N}(0, I_p) \in \mathbb{R}^p
\)
};

\node[adamflowwide] (adamflow_discrete) at (0,-2) {
\textbf{Adam's Discrete Update Rule}\\[1.5mm]
\(\displaystyle
\begin{aligned}
m_{k+1}
&=
\beta_1m_k+(1-\beta_1)g_k,
&
v_{k+1}
&=
\beta_2v_k+(1-\beta_2)(g_k\odot g_k),
\\
\widehat m_{k+1}
&=
\frac{m_{k+1}}{1-\beta_1^{k+1}},
&
\widehat v_{k+1}
&=
\frac{v_{k+1}}{1-\beta_2^{k+1}},
\\
\theta_{k+1}
&=
\theta_k
-
\eta\,\widehat m_{k+1}\oslash\sqrt{\widehat v_{k+1}} .
\end{aligned}
\)
};

\node[adamflowmid] (adamflow_interp) at (0,-4) {
\textbf{Continuous-Time Interpolation (Discrete--Continuous Scaling)}\\[1.5mm]
\(\displaystyle
t=\eta k,
\qquad
\beta_i=1-\eta\alpha_i,
\qquad
\eta\to0
\)
};

\node[adamflowsmall] (adamflow_mlimit) at (-4.55,-6.55) {
\textbf{Continuous-Time Limit of First-Moment Estimate}\\[1.5mm]
\(\displaystyle
\begin{aligned}
\dd m_t
&=
-\alpha_1(m_t-\bar g_t)\dd t
\\
&\quad+
\sqrt{\eta/b}\,
\alpha_1\Sigma_t^{1/2}\dd W_t .
\end{aligned}
\)
};

\node[adamflowsmall] (adamflow_vlimit) at (0,-6.55) {
\textbf{Continuous-Time Limit of Second-Moment Estimate}\\[1.5mm]
\(\displaystyle
\begin{aligned}
\dd v_t
&=
-\alpha_2(v_t-\overline{g\odot g}_t)\dd t
\\
&\quad+
\sqrt{\eta/b}\,
\alpha_2D_t\dd W_t .
\end{aligned}
\)
};

\node[adamflowsmall] (adamflow_square) at (4.55,-6.55) {
\textbf{Expansion of Element-Wise Squared Gradient}\\[1.5mm]
\(\displaystyle
\begin{aligned}
\overline{g\odot g}_t
&:=
\bar g_t\odot\bar g_t
+
\frac{1}{b}\mathrm{diag}\,\Sigma_t, \\
D_t
&:=
2\,\mathrm{diag}(\bar g_t)\Sigma_t^{1/2}.
\end{aligned}
\)
};

\node[adamflowmid] (adamflow_theta) at (0,-9.45) {
\textbf{Continuous-Time Limit of Parameter Dynamics}\\[1.5mm]
\(\displaystyle
\begin{aligned}
\dd\theta_t
&=
-B(t)\left(m_t\oslash\sqrt{v_t}\right)\dd t,
\\[1.5mm]
\text{where}~B(t)
&=
\frac{\sqrt{1-e^{-\alpha_2t}}}{1-e^{-\alpha_1t}} .
\end{aligned}
\)
};

\node[adamflowwide] (adamflow_joint) at (0,-12.20) {
\textbf{Joint-State SDE}\\[1.5mm]
\(\displaystyle
\begin{aligned}
\dd S_t
&=
\mu(S_t)\dd t
+
\sqrt{\eta/b}\,\sigma(S_t)\dd W_t,
\qquad
S_t=(\theta_t,m_t,v_t),
\\
\mu(S_t)
&=
\begin{pmatrix}
-B(t)m_t\oslash\sqrt{v_t}\\
-\alpha_1(m_t-\bar g_t)\\
-\alpha_2(v_t-\overline{g\odot g}_t)
\end{pmatrix},
\qquad
\sigma(S_t)
=
\begin{pmatrix}
0\\
\alpha_1\Sigma_t^{1/2}\\
\alpha_2D_t
\end{pmatrix}.
\end{aligned}
\)
};

\node[adamflowmid] (adamflow_radius) at (0,-14.65) {
\textbf{Sanity Check via Lemma~\ref{lem:induced_radius_sde_restated}:\nameref{lem:induced_radius_sde_restated} (Derived via It\^o's Lemma)}\\[1.5mm]
\(\displaystyle
\dd r_t^2
=
-2B(t)\theta_t^\top
\left(m_t\oslash\sqrt{v_t}\right)\dd t  .
\)
};

\node[adamflowmid] (adamflow_verify) at (0,-16.45) {
\textbf{Sanity Check via Numerical Experiment in Figure~\ref{fig:radius_sde_sanity_appendix}}\\[1.5mm]
\(\displaystyle
\text{numerically compare the radius dynamics predicted by the induced radius SDE.}
\)
};

\node[adamflownote] (adamflow_preconditioned) at (0,-18.25) {
\textbf{Additional Technical Correctness Verification}\\[1.5mm]
Technical correctness of Adam's joint-state SDE is also checked by the later decomposed preconditioned radius SDE in Lemma~\ref{lem:preconditioned_radius_sde_appendix} and an approximated form in Lemma~\ref{lem:reduced_late_stage_radius_sde_restated}. Further numerical verification experiments are in Figures~\ref{fig:preconditioned_radius_sde_appendix} and \ref{fig:late_stage_radius_sde_appendix}. 
};

\coordinate (adamflow_split_interp) at (0,-5.05);
\coordinate (adamflow_join_theta) at (0,-8.25);
\coordinate (adamflow_join_joint) at (0,-10.85);
\coordinate (adamflow_square_drop) at (4.55,-10.85);

\draw[adamflowarrow] (adamflow_wiener) -- (adamflow_discrete);
\draw[adamflowarrow] (adamflow_discrete) -- (adamflow_interp);

\draw[adamflowarrow] (adamflow_interp.south) -- (adamflow_split_interp);
\draw[adamflowarrow] (adamflow_split_interp) -| (adamflow_mlimit.north);
\draw[adamflowarrow] (adamflow_split_interp) -- (adamflow_vlimit.north);
\draw[adamflowarrow] (adamflow_split_interp) -| (adamflow_square.north);

\draw[adamflowarrow] (adamflow_mlimit.south) |- (adamflow_join_theta) -- (adamflow_theta.north);
\draw[adamflowarrow] (adamflow_vlimit.south) -- (adamflow_theta.north);

\draw[adamflowarrow] (adamflow_theta.south) -- (adamflow_join_joint) -- (adamflow_joint.north);

\draw[adamflowarrow]
    (adamflow_square.south)
    -- (adamflow_square_drop)
    -- (adamflow_join_joint);

\draw[adamflowarrow] (adamflow_joint) -- (adamflow_radius);
\draw[adamflowarrow] (adamflow_radius) -- (adamflow_verify);
\draw[adamflowdasharrow] (adamflow_verify) -- (adamflow_preconditioned);

\end{tikzpicture}

%% file: proofs/proof_adam_joint_state_SDE.tex
\begin{lemma}[Adam's Joint-State Continuous-Time SDE Limit]
\label{lem:adam_joint_sde_restated}

For a network $f$ parameterized with $\theta\in \mathbb{R}^p$, let $\ell_f(s;\theta)$ be the loss for sample $s$, let $\mathscr{L}_f(\xi;\theta)$ be the mini-batch loss for batch $\xi$, and let $\mathscr{L}_f(\theta)$ be the population loss over true dataset distribution. Let $\theta_k,\theta_t$ denote the parameter at iteration $k$ and time $t$, respectively. 

\paragraph{Discrete-Time Update Rules.} Consider Adam with small learning rate $\eta$, large batch size $b$, and coefficients $(\beta_1,\beta_2)$. Let
\begin{align}
S_k:=(\theta_k^\top,m_k^\top,v_k^\top)^\top\in\mathbb R^{3p},
\quad m_0 = 0_p, \quad v_0 = 0_p,
\end{align}
denote the discrete-time representation of the parameter, first-moment, and second-moment estimates at iteration $k$. For a batch $\xi_k$ at step $k$, the discrete Adam update rules with the bias-correction step are:
\begin{align}
g_k &= \nabla \mathscr{L}_f(\xi_k;\theta_k) &(\text{mini-batch gradient})\\
m_{k+1} &= \beta_1 m_k + (1-\beta_1)g_k  &(\text{first-moment estimate}) \\
v_{k+1} &= \beta_2 v_k + (1-\beta_2)(g_k \odot g_k) &(\text{second-moment estimate}) \\
\hat m_{k+1} &= \frac{m_{k+1}}{1-\beta_1^{\,k+1}} &(\text{bias-corrected first-moment estimate}) \\
\hat v_{k+1} &= \frac{v_{k+1}}{1-\beta_2^{\,k+1}} &(\text{bias-corrected second-moment estimate}) \\
\theta_{k+1} &= \theta_k - \eta \, h(\hat m_{k+1}, \hat v_{k+1}) &(\text{bias-corrected parameter update}) 
,
\end{align}
where $h(\hat m_{k+1}, \hat v_{k+1})$ is the per-coordinate Adam update direction, defined as
\begin{equation}
h(\hat m, \hat v) \;=\;
\hat m \oslash (\sqrt{\hat v} + \epsilon\mathbf{1}_p) \;=\; \begin{bmatrix}\dfrac{\hat m^{(1)}}{\sqrt{\hat v^{(1)}} + \epsilon}\\[4pt]\vdots\\\dfrac{\hat m^{(p)}}{\sqrt{\hat v^{(p)}} + \epsilon}\end{bmatrix}
,
\end{equation}
$\oslash$ denotes element-wise division, $\odot$ denotes element-wise product, and $\varepsilon$ is a small positive constant for numerical stability.

\paragraph{Distributional Limit of Mini-Batch Gradient.} 
For the parameter $\theta_k$, the mini-batch gradient $g_k:= \nabla \mathscr{L}_f(\xi_k;\theta_k)$ is an empirical average of $b$ i.i.d. per-sample gradient $g_k(s):=\nabla \ell_f(s;\theta_k)$. Hence, by the central limit theorem, as batch size $b\to\infty$
\begin{align}
  g_k = \frac{1}{b}\sum_{i=1}^b   g_k(s_i)
  \overset{d}{\longrightarrow} \mathcal N\!\left(\bar g_k,\frac{1}{b}\Sigma_k\right),
  \quad
  \bar g_k:=\mathbb{E}_s[g_k(s)], \quad  \Sigma_k:=\mathrm{Cov}_s[g_k(s)]
,
\end{align}
where $\bar g_k$ and $\Sigma_k$ are the per-sample gradient expectation and covariance at step $k$, respectively. 

\paragraph{Continuous-Time SDE Limit.} 
Under the continuous-time interpolation $t=\eta k$, with the conditions of sufficiently small learning rate $\eta \to 0$ and large batch size $b \to \infty$, the discrete Adam updates admit the It\^o SDE limit
\begin{align}
\dd S_t
=
\mu(S_t)\dd t
+
\sqrt{\frac{\eta}{b}}\,
\sigma(S_t)\dd W_t,
\label{eq:adam_joint_sde_restated}
\end{align}
where $W_t\in\mathbb R^p$ is a Wiener process adapted to the filtration generated by the mini-batch sampling process $\{\xi_t\}$. The drift and diffusion factors are
\begin{align}
\mu(S_t)
=
\begin{pmatrix}
-\,B(t)\,m_t\oslash\sqrt{v_t}
\\[2pt]
-\,\alpha_1\bigl(m_t-\bar g_t\bigr)
\\[2pt]
-\,\alpha_2\bigl(v_t-\overline{g\odot g}_t\bigr)
\end{pmatrix},
\qquad
\sigma(S_t)
=
\begin{pmatrix}
0
\\[2pt]
\alpha_1\Sigma_t^{1/2}
\\[2pt]
\alpha_2D_t
\end{pmatrix},
\label{eq:adam_joint_drift_diffusion_restated}
\end{align}
with
\begin{align}
B(t)
:=
\frac{\sqrt{1-e^{-\alpha_2 t}}}
{1-e^{-\alpha_1 t}}, 
\qquad
\alpha_i:=\frac{1-\beta_i}{\eta},
\qquad i=1,2,
\label{eq:bias_correction_factor_restated}
\end{align}
and
\begin{align}
\overline{g\odot g}_t
:=
\bar g_t\odot\bar g_t
+
\frac{1}{b}\operatorname{diag}(\Sigma_t),
\qquad
D_t
:=
2\,\operatorname{diag}(\bar g_t)\Sigma_t^{1/2}.
\label{eq:diffusion_cross_term_def_restated}
\end{align}
\end{lemma}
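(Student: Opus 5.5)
We derive the joint-state SDE by writing each discrete Adam recursion as an increment over one step of length $\Delta t=\eta$, and then matching drift and diffusion coefficients, in the style of the SGD limits of \citep{li2017stochastic}.

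First, following the distributional limit stated in Lemma~\ref{lem:adam_joint_sde_restated}, replace the mini-batch gradient by its Gaussian surrogate $g_k=\bar g_k+b^{-1/2}\Sigma_k^{1/2}\zeta_k$ with $\zeta_k\sim\mathcal N(0,I_p)$ i.i.d. Set $\beta_i=1-\eta\alpha_i$, so that $\alpha_i=(1-\beta_i)/\eta$ is held fixed as $\eta\to0$. The first-moment recursion then becomes
\[
m_{k+1}-m_k=-\eta\alpha_1(m_k-\bar g_k)+\eta\alpha_1 b^{-1/2}\Sigma_k^{1/2}\zeta_k .
\]
Identify $\sqrt{\eta}\,\zeta_k$ with the Brownian increment $W_{t+\eta}-W_t$. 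Then $\eta\alpha_1 b^{-1/2}\Sigma^{1/2}\zeta_k=\sqrt{\eta/b}\,\alpha_1\Sigma^{1/2}\,\dd W_t$, which gives the $m$-block of $\mu$ and $\sigma$.

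For the second moment, expand
\[
g_k\odot g_k=\bar g\odot\bar g+2\,\mathrm{diag}(\bar g)\,b^{-1/2}\Sigma^{1/2}\zeta_k+b^{-1}(\Sigma^{1/2}\zeta_k)^{\odot2}.
\]
The quadratic term has mean $b^{-1}\mathrm{diag}(\Sigma)$. We absorb this mean into the drift, which produces $\overline{g\odot g}_t$. Its fluctuation is $O(1/b)$, so it is dropped at order $\sqrt{\eta/b}$ in the large-$b$ regime. The linear term yields the diffusion $\sqrt{\eta/b}\,\alpha_2 D_t\,\dd W_t$ with $D_t=2\,\mathrm{diag}(\bar g_t)\Sigma_t^{1/2}$.

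For the parameter block, write $\beta_i^{k}=(1-\eta\alpha_i)^{t/\eta}\to e^{-\alpha_i t}$. The bias corrections then converge to $1/(1-e^{-\alpha_i t})$, and
\[
\hat m\oslash\sqrt{\hat v}=B(t)\,m\oslash\sqrt v .
\]
This gives $\theta_{k+1}-\theta_k=-\eta B(t)\,m_{k+1}\oslash\sqrt{v_{k+1}}$. Here $m_{k+1}-m_k=O(\eta)+O(\sqrt{\eta}\cdot\sqrt{\eta/b})$, so replacing $m_{k+1}$ by $m_k$ changes the $\theta$-increment only at order $o(\eta)$. Hence the $\theta$-block is a pure drift with zero diffusion.

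To make the limit rigorous, we check the standard weak-approximation conditions. The conditional first moments of the one-step increments must equal $\eta\mu(S)+o(\eta)$, and the conditional second moments must equal $\eta\cdot(\eta/b)\,\sigma\sigma^\top+o(\eta^2/b)$, with bounded higher moments. A diffusion-approximation theorem (Kushner/Stroock--Varadhan type, or the Li--Tai--E weak-error framework) then gives convergence to the stated Itô SDE. All three blocks are driven by the same $\zeta_k$, hence by the same $W_t$; this is why $\sigma$ is the stacked $3p\times p$ matrix. Finally, a single application of Itô's lemma reproduces the stated drift and diffusion, which provides a consistency check.

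The main obstacle is justifying the limit near $t=0$ and where $v_t$ is small. There $B(t)$ is singular ($B(t)\sim\sqrt{\alpha_2 t}/(\alpha_1 t)$), and $m\oslash\sqrt v$ is not Lipschitz, so the usual regularity hypotheses fail. We would handle this by working on $[t_0,T]$ with $t_0>0$, restoring Adam's $\varepsilon$ (or assuming $v$ is bounded below) to obtain Lipschitz coefficients. We would then argue that the early-time segment contributes $o(1)$, since $m_t$ and $1-e^{-\alpha_1 t}$ vanish at the same rate, so $B(t)\,m_t\oslash\sqrt{v_t}$ stays bounded. A secondary delicacy is the simultaneous limits $\eta\to0$ and $b\to\infty$. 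We keep $\eta/b$ as the small noise parameter and treat the non-Gaussian $O(1/b)$ remainder as a negligible correction.
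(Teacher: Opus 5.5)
Your derivation is essentially the paper's own. It uses the same Gaussian surrogate for $g_k$, the same scaling $\beta_i=1-\eta\alpha_i$ with $\sqrt{\eta}\,\zeta_k$ read as the Brownian increment, the same absorption of the mean $b^{-1}\mathrm{diag}(\Sigma_t)$ of $(\Sigma_t^{1/2}\zeta_k)^{\odot 2}$ into $\overline{g\odot g}_t$ with its fluctuation dropped, and the same limit $(1-\eta\alpha_i)^{t/\eta}\to e^{-\alpha_i t}$ producing $B(t)$ and a noise-free $\theta$-block. So it is correct at the same formal level. Your $+$ sign on the noise spares the paper's silent $W\mapsto -W$ flip, and your $m_{k+1}\to m_k$ estimate justifies an index shift the paper makes without comment.

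Be aware, though, that your rigor paragraph certifies less than it seems.
\begin{itemize}
\item Since $\sqrt{\eta/b}\to 0$, a Stroock--Varadhan-type diffusion limit returns only the noise-free flow $\dot S=\mu(S)$, not the stated SDE.
\item An order-one Li--Tai--E weak bound holds equally for that noise-free flow, so neither theorem justifies the diffusion term. Justifying it would need a fluctuation CLT for the rescaled deviation from the flow.
\item In the SDE itself, the $v$-noise $\sqrt{\eta/b}\,\alpha_2 D_t\,\mathrm{d}W_t$ does not vanish at $v=0$. Starting from $v_0=0$, coordinates of $v_t$ generically turn negative immediately, so $\sqrt{v_t}$ is undefined. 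Restoring $\varepsilon$ does not cure this; only the localization away from $v=0$ that you also mention does.
\end{itemize}
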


\begin{proof}
We derive the continuous-time limit of the Adam optimizer starting from its discrete update rules. For a network $f$ parameterized with $\theta\in \mathbb{R}^p$, let $\ell_f(s;\theta)$ be the loss for sample $s$, let $\mathscr{L}_f(\xi;\theta)$ be the mini-batch loss for batch $\xi$, and let $\mathscr{L}_f(\theta)$ be the population loss over true dataset distribution. Let $\theta_k,\theta_t$ denote the parameter at iteration $k$ and time $t$, respectively.


\textbf{Mini-Batch Gradient as Wiener Process.} Let $g_k(s):= \nabla \ell_f(s;\theta_k)$ be the per-sample gradient for sample $s$ with the mean and covariance
\begin{align}
    \bar g_k = \mathbb{E}_s[g_k(s)],
    \qquad 
    \Sigma_k = \mathrm{Cov}_s\bigl[ g_k(s) \bigr]
    ,
\end{align}
respectively. For a batch $\xi_k:=\{s_i\}_{i=1}^{b}$ of size $b$, the gradient at time $k$ is given by:
\begin{align}
g_k &= \frac{1}{b}\sum_{i=1}^b g_k{(s_i)} 
,
\end{align}
by central limit theorem, so that $g_k$ distributionally converge to
\begin{align}
  g_k 
  \overset{d}{\longrightarrow} \mathcal N\!\left(\bar g_k,\frac{1}{b}\Sigma_k\right),
\end{align}
as batch size is sufficiently large $b \to \infty$. We therefore can model mini-batch gradient $g_k$ as a Wiener process 
\begin{align}
g_k = \bar g_k - \frac{1}{\sqrt{b}} \Sigma_k^{1/2} \zeta_k  ,
\qquad
(\Sigma_k^{1/2})(\Sigma_k^{1/2})^\top = \Sigma_k
,
\end{align}
where $\zeta_k \sim \mathcal{N}(0, I_p)$ is Wiener process adapted to the filtration generated from mini-batch sampling sequence $\{\xi_k\}$. Because gradient descent uses negative gradient, we take a negative sign on $\zeta_k$ for simplifying the later analysis algebraically.

\paragraph{Discrete Adam with Bias Correction.} Let $\eta > 0$ be sufficiently small learning rate, $\beta_1, \beta_2 \in (0,1)$ be exponential decay rates, and $\epsilon > 0$ be numerical stability constant. Let $\theta_t \in \mathbb{R}^p$, $m_t$, $v_t$ be parameter, first-moment, and second-moment estimates, with the initialization $m_0=0$ and $v_0=0$. Let $k = 0, 1, 2, \ldots$ be discrete iteration time points. The discrete Adam update rules with the bias-correction step written explicitly are:
\begin{align}
\bar g_k &:= \nabla[\mathscr{L}_f(\theta_k)] = \nabla[\mathscr{L}^{*}_f(\theta_k) + \frac{\lambda}{2} \|\theta_k\|_2^2 ] & (\text{coupled $\ell_2$-regularizer})\\
g_k &= \bar g_k - \frac{1}{\sqrt{b}}\Sigma_k^{1/2}\zeta_k & (\text{mini-batch gradient}) \label{eq:adam_minibatch_gradient_appendix}\\
m_{k+1} &= \beta_1 m_k + (1-\beta_1)g_k  &(\text{first-moment estimate}) \label{eq:adam_first_moment_appendix}\\
v_{k+1} &= \beta_2 v_k + (1-\beta_2)(g_k \odot g_k) &(\text{second-moment estimate}) \label{eq:adam_second_moment_appendix}\\
\hat m_{k+1} &= \frac{m_{k+1}}{1-\beta_1^{\,k+1}} &(\text{bias-corrected first-moment estimate}) \label{eq:adam_first_moment_biascorr_appendix}\\
\hat v_{k+1} &= \frac{v_{k+1}}{1-\beta_2^{\,k+1}} &(\text{bias-corrected second-moment estimate}) \label{eq:adam_second_moment_biascorr_appendix}\\
\theta_{k+1} &= \theta_k - \eta \, h(\hat m_{k+1}, \hat v_{k+1}) &(\text{bias-corrected parameter update}) \label{eq:adam_param_update_appendix}
,
\end{align}
where $\lambda$ is the regularization coefficient, and $\mathscr{L}_f(\theta_k)$ is the population coupled loss and $\mathscr{L}^{*}_f(\theta_k)$ is the population task loss, then the per-coordinate Adam update direction is
\begin{equation}
h(\hat m, \hat v) \;=\;
\hat m \oslash (\sqrt{\hat v} + \epsilon\mathbf{1}_p) \;=\; \begin{bmatrix}\dfrac{\hat m^{(1)}}{\sqrt{\hat v^{(1)}} + \epsilon}\\[4pt]\vdots\\\dfrac{\hat m^{(p)}}{\sqrt{\hat v^{(p)}} + \epsilon}\end{bmatrix}
,
\end{equation}
$\oslash$ denotes element-wise division, and $\varepsilon$ is a small positive constant for numerical stability.

\begin{remark}
Throughout this paper, we use the convention that the $\ell_2$ regularizer is coupled through the loss function. Thus, the gradients induced by the regularizer enter the computation of the optimizer's first- and second-moment estimates. When the $\ell_2$ penalty is instead decoupled and applied as a standalone update term,
\begin{align}
    \theta_{k+1}
    =
    \theta_k
    -
    \eta\, h(\hat{m}_{k+1}, \hat{v}_{k+1})
    -
    \eta\,\lambda\,\theta_k,
\end{align}
where the resulting optimizer is referred to as \emph{AdamW}.
\end{remark}

\paragraph{Continuous--Time Interpolation.} Taking continuous-time interpolation in discrete iteration $k$ by:
\begin{equation}
t = k\eta,
\end{equation}
therefore:
\begin{equation}
\dd t = \lim_{\Delta t \to 0}  
\Delta t = \lim_{\Delta t \to 0} \eta = \eta. 
\end{equation}

\paragraph{Wiener Increment in Gradient.} Let $g_k(s):= \nabla \ell_f(s;\theta_k)$ be the per-sample gradient for sample $s$ with the mean and covariance
\begin{align}
    \bar g_k = \mathbb{E}_s[g_k(s)],
    \qquad 
    \Sigma_k = \mathrm{Cov}_s\bigl[ g_k(s) \bigr]
    ,
\end{align}
respectively. For a batch $\xi_k:=\{s_i\}_{i=1}^{b}$ of size $b$, the gradient at time $k$ is given by:
\begin{align}
g_k &= \frac{1}{b}\sum_{i=1}^b g_k{(s_i)} 
,
\end{align}
by central limit theorem, so that $g_k$ distributionally converge to
\begin{align}
  g_k 
  \overset{d}{\longrightarrow} \mathcal N\!\left(\bar g_k,\frac{1}{b}\Sigma_k\right),
\end{align}
as batch size is sufficiently large $b \to \infty$. We therefore can model mini-batch gradient $g_k$ as a Wiener process 
\begin{align}
g_k = \bar g_k - \frac{1}{\sqrt{b}} \Sigma_k^{1/2} \zeta_k  ,
\qquad
(\Sigma_k^{1/2})(\Sigma_k^{1/2})^\top = \Sigma_k
\label{eq:discrete_minibatch_gradient_appendix}
,
\end{align}
where $\zeta_k \sim \mathcal{N}(0, I_p)$ is Wiener process adapted to the filtration generated from mini-batch sampling sequence $\{\xi_k\}$. Because gradient descent uses negative gradient, we take a negative sign on $\zeta_k$ for simplifying the later analysis algebraically. 

For the adapted noise vector $\zeta_k \sim \mathcal{N}(0, I_p) \in \mathbb{R}^p$ at iteration $k$ (time $t = k\eta$) corresponds to the Wiener increment over $[t, t+\eta]$:
\begin{align}
    \zeta_k = \frac{1} {\sqrt{\eta}}\zeta_k \sqrt{\eta} = \frac{1} {\sqrt{\eta}}\zeta_k \sqrt{\Delta t} 
    = \frac{1}{\sqrt{\eta}} \Delta W_k
    ,
\end{align}
where $\Delta t = \eta$ and the increment follows the property of Brownian motion:
\begin{align}
\Delta W_k =\zeta_k \sqrt{\Delta t}  \sim \mathcal{N}(0, \Delta t\; I_p).    
\end{align}

\begin{remark}[Sanity-Check Reference: Continuous SDE for SGD]
Therefore, for SGD, the continuous-time SDE limit is, with $t = k\eta$ and $\Delta t = \eta$:
\begin{align}
    \dd \theta_t &= \lim_{\Delta t \to 0} \bigl(\theta_{t+\Delta t} - \theta_t\bigr) \\
    &= \lim_{\Delta t \to 0} \Biggl[-\eta \Bigl(\bar g_k - \frac{1}{\sqrt{b}}\Sigma_k^{1/2}\zeta_k\Bigr) \Biggr] \\
    &=\lim_{\Delta t \to 0} \Biggl[ -\eta \bar g_k + \sqrt{\frac{\eta}{b}} \Sigma_k^{1/2} \sqrt{\eta} \zeta_k \Biggr] \\
    &=\lim_{\Delta t \to 0} \Biggl[ -\eta \bar g_k + \sqrt{\frac{\eta}{b}} \Sigma_k^{1/2} \Delta W_k \Biggr] \\
    &= - \bar g_t \dd t + \sqrt{\frac{\eta}{b}} \Sigma_t^{1/2} \dd W_t.
\end{align}
\end{remark}

\paragraph{Discrete Increment of First-Moment Estimate.} From equation~\eqref{eq:adam_first_moment_appendix}:
\begin{align}
m_{k+1} &= \beta_1 m_k + (1-\beta_1)g_k\\
m_{k+1} - m_k &= \beta_1 m_k + (1-\beta_1)g_k - m_k\\
&= (\beta_1 - 1)m_k + (1-\beta_1)g_k\\
&= -(1-\beta_1)m_k + (1-\beta_1)g_k\\
&= (1-\beta_1)(g_k - m_k)\\
&= -(1-\beta_1)(m_k - g_k) \label{eq:first_moment_increment_appendix}
\end{align}

Substituting equation~\eqref{eq:discrete_minibatch_gradient_appendix}:
\begin{align}
g_k = \bar g_k - \frac{1}{\sqrt{b}}\Sigma_k^{1/2}\zeta_k    
\end{align}
into equation~\eqref{eq:first_moment_increment_appendix} yields:
\begin{align}
m_{k+1} - m_k &= -(1-\beta_1)\left[m_k - \bar g_k + \frac{1}{\sqrt{b}}\Sigma_k^{1/2}\zeta_k\right]\\
&= -(1-\beta_1)[m_k - \bar g_k] - (1-\beta_1)\frac{1}{\sqrt{b}}\Sigma_k^{1/2}\zeta_k \\
&=-(1-\beta_1)[m_k - \bar g_k] - \frac{1-\beta_1}{\sqrt{b}}\Sigma_k^{1/2}\zeta_k
.
\end{align}

\paragraph{Continuous-Time Limit of First-Moment Estimate.} Use $\Delta t = \eta$ and $\Delta W_k = \sqrt{\Delta}\zeta_k$,
\begin{align}
    \dd m &= \lim_{\Delta t \to 0} \Delta m \\
    &= \lim_{\Delta t \to 0} (m_{k+1} - m_k) \\
    &= \lim_{\Delta t \to 0}
    \Biggl[
    -(1-\beta_1)[m_k - \bar g_k] - \frac{1-\beta_1}{\sqrt{b}}\Sigma_k^{1/2}\zeta_k
    \Biggr] \\
    &= \lim_{\Delta t \to 0}
    \Biggl[
    -(1-\beta_1)[m_k - \bar g_k] \frac{1}{\eta} \Delta t - \frac{1-\beta_1}{\sqrt{b}}\Sigma_k^{1/2} \frac{1}{\sqrt{\eta}} \sqrt{\Delta t}\zeta_k
    \Biggr] \\
    &= -\frac{1-\beta_1}{\eta}[m_t - \bar g_k]\dd t - \frac{1-\beta_1}{\sqrt{b\eta}}\Sigma_k^{1/2}\dd W_t \label{eq:first_moment_ct_limit_appendix}
    .
\end{align}

\paragraph{Expansion of Element-Wise Squared Gradient.} To derive the continuous-time limit of the second-moment estimate $\dd v$. We need to expand $g_k \odot g_k$ of equation~\eqref{eq:adam_second_moment_biascorr_appendix} into a tractable form. Consider:
\begin{align}
g_k = \bar g_k - \frac{1}{\sqrt{b}}\Sigma_k^{1/2}\zeta_k,    
\end{align}
we expand the element-wise quadratic product $g_k \odot g_k$ by:
\begin{align}
g_k \odot g_k &= \left[ \bar g_k - \frac{1}{\sqrt{b}}\Sigma_k^{1/2}\zeta_k \right] \odot \left[ \bar g_k - \frac{1}{\sqrt{b}}\Sigma_k^{1/2}\zeta_k \right] \\
&=\bar g_k \odot \bar g_k - 2\left( \bar g_k \odot \frac{1}{\sqrt{b}}\Sigma_k^{1/2}\zeta_k\right)  + \frac{1}{b}\left(\Sigma_k^{1/2}\zeta_k\right) \odot \left(\Sigma_k^{1/2}\zeta_k\right)\\
&= \left[\bar g_k\right]^{\odot 2} - \frac{2}{\sqrt{b}} \bar g_k \odot \left[\Sigma_k^{1/2}\zeta_k\right] + \frac{1}{b}(\Sigma_k^{1/2}\zeta_k)^{\odot 2}
.
\end{align}

Using the fundamental identity, for two vectors $u$ and $v$:
\begin{align}
    u \odot v = \mathrm{diag}(u)v,
\end{align}
then the Hadamard cross product can be written as:
\begin{equation}
\bar g_k \odot \left[\Sigma_k^{1/2}\zeta_k\right] = \mathrm{diag}(\bar g_k)\Sigma_k^{1/2}\zeta_k
.
\end{equation}
We write $g_k \odot g_k$ as:
\begin{align}
g_k \odot g_k=
 {\bar g_k}^{\odot 2} - \frac{2}{\sqrt{b}} \mathrm{diag}({\bar g_k}) \left[\Sigma_k^{1/2}\zeta_k\right] + \frac{1}{b}(\Sigma_k^{1/2}\zeta_k)^{\odot 2}
 \label{eq:squared_gradient_expansion_appendix}
 .
\end{align}

\paragraph{Statistics of Element-Wise Squared Gradient.}
Fix $\theta_k$ over all batches, the element-wise squared gradient $g_k \odot g_k$ admits a mean
\begin{align}
\mathbb{E}_{\xi_k}[g_k\odot g_k \mid \theta_k]
&=
\mathbb{E}
\left[
{\bar g_k}^{\odot 2}
-
\frac{2}{\sqrt{b}}
\operatorname{diag}\!\left({\bar g_k}\right)
\Sigma_k^{1/2}\zeta_k
+
\frac{1}{b}
\left(\Sigma_k^{1/2}\zeta_k\right)^{\odot 2}
\right]
\nonumber\\
&=
{\bar g_k}^{\odot 2}
-
\frac{2}{\sqrt{b}}
\operatorname{diag}\!\left({\bar g_k}\right)
\mathbb{E}\!\left[\Sigma_k^{1/2}\zeta_k\right]
+
\frac{1}{b}
\mathbb{E}
\left[
\left(\Sigma_k^{1/2}\zeta_k\right)^{\odot 2}
\right]
\nonumber\\
&=
{\bar g_k}^{\odot 2}
+
\frac{1}{b}
\operatorname{diag}\!\left(\Sigma_k\right)
\label{equ:gk_odot_gk_mean}
,
\end{align}
and a covariance
\begin{align}
\mathrm{Cov}_{\xi_k}(g_k\odot g_k \mid \theta_k)
&=
\mathrm{Cov}
\left(
-\frac{2}{\sqrt{b}}
\operatorname{diag}\!\left({\bar g_k}\right)
\Sigma_k^{1/2}\zeta_k
+
\frac{1}{b}
\left(\Sigma_k^{1/2}\zeta_k\right)^{\odot 2}
\right)
\nonumber\\
&=
\frac{4}{b}
\operatorname{diag}\!\left({\bar g_k}\right)
\Sigma_k
\operatorname{diag}\!\left({\bar g_k}\right)
+
\frac{1}{b^2}
\mathrm{Cov}
\left(
\left(\Sigma_k^{1/2}\zeta_k\right)^{\odot 2}
\right)
\nonumber\\
&=
\frac{4}{b}
\operatorname{diag}\!\left({\bar g_k}\right)
\Sigma_k
\operatorname{diag}\!\left({\bar g_k}\right)
+
\frac{1}{b^2}
\left[
2\left(\Sigma_k\odot \Sigma_k\right)
\right]
\nonumber\\
&=
\frac{4}{b}
\operatorname{diag}\!\left({\bar g_k}\right)
\Sigma_k
\operatorname{diag}\!\left({\bar g_k}\right)
+
\frac{2}{b^2}
\left(\Sigma_k\odot \Sigma_k\right)
\label{equ:gk_odot_gk_cov}
,
\end{align}
where the last identity follows componentwise from
\begin{align}
\left[
\mathrm{Cov}
\left(
\left(\Sigma_k^{1/2}\zeta_k\right)^{\odot 2}
\right)
\right]_{ij} &=
\mathbb{E}
\left[
\left(\Sigma_k^{1/2}\zeta_k\right)_i^2
\left(\Sigma_k^{1/2}\zeta_k\right)_j^2
\right]
-
\mathbb{E}
\left[
\left(\Sigma_k^{1/2}\zeta_k\right)_i^2
\right]
\mathbb{E}
\left[
\left(\Sigma_k^{1/2}\zeta_k\right)_j^2
\right]
\nonumber\\
&=
\left[
(\Sigma_k)_{ii}(\Sigma_k)_{jj}
+
2(\Sigma_k)_{ij}^2
\right]
-
(\Sigma_k)_{ii}(\Sigma_k)_{jj}
\nonumber\\
&=
2(\Sigma_k)_{ij}^2 = 2 (\Sigma_k \odot \Sigma_k)_{ij}.
\end{align}

\paragraph{Continuous-Time Limit of Second-Moment Estimate.} 
From equation~\eqref{eq:adam_second_moment_biascorr_appendix}:
\begin{align}
v_{k+1} - v_k &= \beta_2 v_k + (1-\beta_2)(g_k \odot g_k) - v_k\\
&= -(1-\beta_2)v_k + (1-\beta_2)(g_k \odot g_k)\\
&= -(1-\beta_2)[v_k - g_k \odot g_k]
,
\end{align}
substituting equation~\eqref{eq:squared_gradient_expansion_appendix} yields:
\begin{align}
v_{k+1} - v_k = &-(1-\beta_2)\Bigl[v_k - {\bar g_k}^{\odot 2} + \frac{2}{\sqrt{b}}\mathrm{diag}({\bar g_k})\Sigma_k^{1/2}\zeta_k - \frac{1}{b}(\Sigma_k^{1/2}\zeta_k)^{\odot 2}\Bigr]
\end{align}

Expanding yields:
\begin{align}
v_{k+1} - v_k = &-(1-\beta_2)\Bigl[v_k - {\bar g_k}^{\odot 2}\Bigr]
- \frac{2(1-\beta_2)}{\sqrt{b}}\mathrm{diag}({\bar g_k})\Sigma_k^{1/2}\zeta_k  + \frac{1-\beta_2}{b}(\Sigma_k^{1/2}\zeta_k)^{\odot 2} 
.
\end{align}

Consider the diffusion terms:
\begin{align}
\frac{2(1-\beta_2)}{\sqrt{b}}\mathrm{diag}({\bar g_k})\Sigma_k^{1/2}\zeta_k &= \frac{2(1-\beta_2)}{\sqrt{b}}\mathrm{diag}({\bar g_k})\Sigma_k^{1/2}\frac{\Delta W_k}{\sqrt{\eta}}\\
&= \frac{2(1-\beta_2)}{\sqrt{b\eta}}\mathrm{diag}({\bar g_k})\Sigma_k^{1/2}\Delta W_k
,
\end{align}
and:
\begin{align}
\left[\Sigma_k^{1/2}\zeta_k\right]_i^{2} &= \left[\Sigma_k^{1/2}\frac{\Delta W_k}{\sqrt{\eta}}\right]_i^{2}\\
&= \left(\sum_{j=1}^p (\Sigma_k)_{ij}^{1/2}\frac{\Delta W_k^{(j)}}{\sqrt{\eta}}\right)^2\\
&= \frac{1}{\eta}\sum_{j=1}^p\sum_{h=1}^p (\Sigma_k)_{ij}^{1/2}(\Sigma_k)_{ih}^{1/2}\Delta W_k^{(j)}\Delta W_k^{(h)}
\\
&= \frac{1}{\eta}\sum_{j=1}^p\sum_{h=1}^p (\Sigma_k)_{ij}^{1/2}(\Sigma_k)_{ih}^{1/2}\delta_{jh} \Delta t\\
&= \frac{1}{\eta}\sum_{j=1}^p [(\Sigma_k)_{ij}^{1/2}]^2 \, \Delta t \\
&= \frac{\Delta t}{\eta}\,(\Sigma_k)_{ii} \;=\; \frac{\Delta t}{\eta}\,\mathrm{diag}\bigl(\Sigma_k\bigr)_i\,,
\end{align}
so $(\Sigma^{1/2}\zeta_k)^{\odot 2} = (\Delta t/\eta)\,\mathrm{diag}(\Sigma_k)$. 

Take limits $\Delta t \to \dd t, \Delta W_k \to \dd W_t$, then the continuous-time limit of $\Delta v = v_{k+1} - v_k$ is:
\begin{align}
\dd v_t = &-\frac{1-\beta_2}{\eta}\Bigl[v_t - {\bar g_k}^{\odot 2}\Bigr]\dd t \nonumber\\
&\quad -\frac{2(1-\beta_2)}{\sqrt{b\eta}}\,\mathrm{diag}({\bar g_t})\,\Sigma_t^{1/2}\dd W_t
+ \frac{1-\beta_2}{\eta b}\,\mathrm{diag}\bigl(\Sigma_t\bigr)\dd t.
\label{eq:second_moment_ct_limit_appendix}
\end{align}

\paragraph{Continuous-Time Limit of Parameter Dynamics.} From equation~\eqref{eq:adam_param_update_appendix} (with $\varepsilon$ dropped), the discrete update uses the bias-corrected moments
\begin{align}
\hat m_k = m_k/(1-\beta_1^k), \qquad
\hat v_k = v_k/(1-\beta_2^k)
,
\end{align}
then
\begin{equation}
\theta_{k+1} \;=\; \theta_k - \eta\, \hat m_k \oslash \sqrt{\hat v_k} \;=\; \theta_k - \eta\,\frac{\sqrt{1-\beta_2^k}}{1-\beta_1^k}\;m_k \oslash \sqrt{v_k}.
\end{equation}
Under the joint scaling $\eta\to 0$ with $(1-\beta_i)/\eta = \alpha_i$ fixed and $t = \eta k$,
\begin{align}
1 - \beta_i^k \;=\; 1 - (1-\alpha_i\eta)^{t/\eta} \;\xrightarrow[\eta\to 0]{}\; 1 - e^{-\alpha_i t}, \notag\\
\frac{\sqrt{1-\beta_2^k}}{1-\beta_1^k} \;\xrightarrow[\eta\to 0]{}\; 
\frac{\sqrt{1-e^{-\alpha_2t}}}{1-e^{-\alpha_1t}} =:
B(t),
\label{eq:bias_correction_limit_appendix}
\end{align}
with $B(t)$ defined in equation~\eqref{eq:bias_correction_factor_restated}. The continuous-time $\theta$-dynamics is therefore
\begin{equation}
\dd\theta_t \;=\; -\,B(t)\,m_t \oslash \sqrt{v_t}\,\dd t,
\label{eq:theta_ct_limit_appendix}
\end{equation}
which is time-inhomogeneous through $B(t)$.

\paragraph{Joint-State SDE.} 
To simplify discussion, by equations~\eqref{eq:squared_gradient_expansion_appendix},~\eqref{equ:gk_odot_gk_mean},~\eqref{equ:gk_odot_gk_cov},
\begin{align}
&g_t \odot g_t=
 {\bar g_t}\odot {\bar g_t} - \frac{2}{\sqrt{b}} \mathrm{diag}({\bar g_t}) \left[\Sigma_t^{1/2}\zeta_t\right] + \frac{1}{b}(\Sigma_t^{1/2}\zeta_t)^{\odot 2}
 , \\
 &\mathbb{E}_{\xi_t}[g_t\odot g_t \mid \theta_t] = {\bar g_t} \odot {\bar g_t}
+
\frac{1}{b}
\operatorname{diag}\!\left(\Sigma_t\right), \\
&\mathrm{Cov}_{\xi_t}(g_t\odot g_t \mid \theta_t)=\frac{4}{b}
\operatorname{diag}\!\left({\bar g_t}\right)
\Sigma_t
\operatorname{diag}\!\left({\bar g_t}\right)
+
\frac{2}{b^2}
\left(\Sigma_t\odot \Sigma_t\right)
,
\end{align}
we set
\begin{align}
&\overline{g \odot g}_t \;:=\; \mathbb{E}[g_t\odot g_t \mid \theta_k] = \bar g_t \odot \bar g_t \;+\; \tfrac{1}{b}\,\mathrm{diag}\,(\Sigma_t), \label{eq:gsq_mean_def_appendix}\\
&D_t \;:=\; 2\,\mathrm{diag}\bigl(\bar g_t\bigr)\,\Sigma_t^{1/2}, \quad \frac{1}{b}D_tD_t^\top = \mathrm{Cov}[g_k\odot g_k \mid \theta_k] - O(\frac{1}{b^2})  \label{eq:diffusion_cross_term_def_assembled_appendix}
,
\end{align}
where $D_t$ is the diffusion factor of the leading cross-term in $g_k\odot g_k$; the sub-leading term has Wick's variance at a scale by $O(1/b^2)$ and is dropped if $b \to \infty$.

Set 
\begin{align}
\alpha_i := (1-\beta_i)/\eta
,
\end{align}
and combining equations~\eqref{eq:theta_ct_limit_appendix},~\eqref{eq:first_moment_ct_limit_appendix},~\eqref{eq:second_moment_ct_limit_appendix},
\begin{align}
\dd\theta_t
&=
-\,B(t)\,m_t \oslash \sqrt{v_t}\,\dd t,
\\
\dd m_t
&=
-\frac{1-\beta_1}{\eta}
\left[
m_t-{\bar g_t}
\right]\dd t
-
\frac{1-\beta_1}{\sqrt{b\eta}}
\Sigma_t^{1/2}\dd W_t
\nonumber\\
&=
-\alpha_1(m_t-\bar g_t)\dd t
-
\sqrt{\frac{\eta}{b}}\,
\alpha_1\Sigma_t^{1/2}\dd W_t,
\\
\dd v_t
&=
-\frac{1-\beta_2}{\eta}
\Bigl[
v_t-{\bar g_t}\odot{\bar g_t}
\Bigr]\dd t
\nonumber\\
&\phantom{=}\;
-
\frac{2(1-\beta_2)}{\sqrt{b\eta}}\,
\operatorname{diag}({\bar g_t})\,
\Sigma_t^{1/2}\dd W_t
+
\frac{1-\beta_2}{\eta b}\,
\operatorname{diag}\bigl(\Sigma_t\bigr)\dd t
\nonumber\\
&=
-\alpha_2
\Bigl[
v_t-\bar g_t\odot\bar g_t
-
\frac{1}{b}\operatorname{diag}(\Sigma_t) \Bigr]\dd t
-
\sqrt{\frac{\eta}{b}}\,
\alpha_2D_t\dd W_t
\nonumber\\
&=
-\alpha_2
\left(
v_t-\overline{g\odot g}_t
\right)\dd t
-
\sqrt{\frac{\eta}{b}}\,
\alpha_2D_t\dd W_t 
,
\end{align}
so that:
\begin{align}
    \dd
    \begin{pmatrix}
    \theta_t \\
    m_t \\
    v_t
    \end{pmatrix}
    =
    \begin{pmatrix}
- B(t)\,m_t \oslash \sqrt{v_t} \\
-\alpha_1\,(m_t - \bar g_t) \\
-\alpha_2\,(v_t - \overline{g\odot g}_t)
\end{pmatrix}
\dd t +
\sqrt{\frac{\eta}{b}}
\begin{pmatrix}
0_{p\times p} \\
\alpha_1\,\Sigma_t^{1/2} \\
\alpha_2\,D_t
\end{pmatrix}
\dd W_t.
\end{align}

Write the joint-state as
\begin{align}
S_t :=
\begin{pmatrix}
    \theta_t \\
    m_t \\
    v_t
    \end{pmatrix}
    \in 
    \mathbb{R}^{3p}
    ,
\end{align}
then Adam's optimization is characterized by
\begin{equation}
\dd S_t \;=\; \mu(S_t)\,\dd t \;+\; \sqrt{\frac{\eta}{b}}\,\sigma(S_t)\,\dd W_t,
\label{eq:adam_joint_sde_assembled_appendix}
\end{equation}
with $W_t$ a $p$-dimensional Wiener process and
\begin{equation}
\mu(S_t) \;=\;
\begin{pmatrix}
- B(t)\,m_t \oslash \sqrt{v_t} \\
-\alpha_1\,(m_t - \bar g_t) \\
-\alpha_2\,(v_t - \overline{g\odot g}_t)
\end{pmatrix} \in \mathbb{R}^{3p},
\qquad
\sigma(S_t) \;=\;
\begin{pmatrix}
0_{p\times p} \\
\alpha_1\,\Sigma_t^{1/2} \\
\alpha_2\,D_t
\end{pmatrix}
\in \mathbb{R}^{3p \times p}
.
\label{eq:adam_joint_drift_diffusion_assembled_appendix}
\end{equation}

The technical correctness verification is provided in later sections and experiments, such as Appendix~\ref{app:induced_radius_sde_appendix} through an numerical experiment.

\end{proof}

\begin{remark}[SGD's SDE as Special Case]
The joint-state SDE in equation~\eqref{eq:adam_joint_sde_assembled_appendix} contains the SGD's SDE as a limiting case. Specifically, take the no-momentum limit $\beta_1=0$, so that $m_t=g_t$, and ignore the second-moment state $v_t$ equivalently, take $\beta_2=0$ and replace $\sqrt{v_t}$ by $\mathbf 1_p$. Then Adam's $\theta$-update reduces to the SGD update
\begin{align}
\theta_{k+1}
=
\theta_k-\eta g_k,
\end{align}
and its continuous-time interpolation gives the standard SGD SDE
\begin{align}
\dd\theta_t
=
-\bar g_t\dd t
+
\sqrt{\eta/b}\,\Sigma_t^{1/2}\dd W_t .
\end{align}
\end{remark}

%% file: proofs/proof_induced_radius_SDE.tex
\begin{lemma}[Induced Radius SDE]
\label{lem:induced_radius_sde_restated}
Let $S_t=(\theta_t^\top,m_t^\top,v_t^\top)^\top\in\mathbb R^{3p}$ evolve according to Adam's joint-state SDE
\begin{align}
\dd S_t
=
\mu(S_t)\dd t
+
\sqrt{\frac{\eta}{b}}\,
\sigma(S_t)\dd W_t,
\end{align}
where
\begin{align}
\mu(S_t)
=
\begin{pmatrix}
-\,B(t)\,m_t\oslash\sqrt{v_t}
\\
-\,\alpha_1(m_t-\bar g_t)
\\
-\,\alpha_2(v_t-\overline{g\odot g}_t)
\end{pmatrix},
\qquad
\sigma(S_t)
=
\begin{pmatrix}
0
\\
\alpha_1\Sigma_t^{1/2}
\\
\alpha_2D_t
\end{pmatrix}
,
\end{align}
$W_t\in\mathbb R^p$ is a Wiener process, and $B(t)$ is the Adam bias-correction factor. Define
\begin{align}
r_t^2:=\|\theta_t\|_2^2 
,
\end{align}
then $r_t^2$ admits the dynamics
\begin{align}
\dd r_t^2
=
-2B(t)\theta_t^\top
\left(m_t\oslash\sqrt{v_t}\right)\dd t .
\label{eq:induced_radius_sde_restated}
\end{align}
This is not an ordinary differential equation (ODE) as the $m_t$ and $v_t$ are stochastic through its dependence on the joint state $S_t$.
\end{lemma}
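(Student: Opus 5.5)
The plan is to apply multivariate It\^o's lemma to the quadratic functional $F(S)=S^\top E S$. Here $E\in\mathbb R^{3p\times 3p}$ is the $\theta$-block projector $E=\mathrm{diag}(I_p,0_{p\times p},0_{p\times p})$, so that $F(S_t)=\|\theta_t\|_2^2=r_t^2$. The SDE is the one in Lemma~\ref{lem:adam_joint_sde_restated}. Since $F$ is a smooth (quadratic) function of the joint state and does not depend explicitly on $t$, It\^o's formula gives
\begin{align}
\dd F(S_t)=\nabla_S F(S_t)^\top \dd S_t+\frac{1}{2}\,\frac{\eta}{b}\,\mathrm{tr}\!\bigl(\sigma(S_t)^\top \nabla_S^2F(S_t)\,\sigma(S_t)\bigr)\dd t .
\notag
\end{align}
The gradient and Hessian are explicit: $\nabla_S F=2ES=(2\theta^\top,0^\top,0^\top)^\top$ and $\nabla_S^2F=2E$. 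The remaining work is to evaluate the three contributions.

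\textbf{First-order drift.} We have $\nabla_SF^\top\mu(S_t)=2\theta_t^\top\bigl(-B(t)\,m_t\oslash\sqrt{v_t}\bigr)$. The $m$- and $v$-blocks of $\mu$ are annihilated by $E$, so they drop out.

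\textbf{Martingale term.} The diffusion contribution is $\sqrt{\eta/b}\,\nabla_SF^\top\sigma(S_t)\,\dd W_t = 2\sqrt{\eta/b}\,(\theta_t^\top,0,0)\,\sigma(S_t)\,\dd W_t$. The $\theta$-block of $\sigma$ is $0_{p\times p}$, so this term vanishes identically.

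\textbf{It\^o correction.} The correction involves $\mathrm{tr}(\sigma^\top 2E\sigma)=2\,\|\sigma_\theta\|_F^2$, where $\sigma_\theta$ is the $\theta$-block. This again equals zero because $\sigma_\theta=0$. The key structural observation is that $E\sigma=0$, and this single fact kills both the stochastic term and the second-order term.

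Combining the three pieces yields $\dd r_t^2=-2B(t)\theta_t^\top(m_t\oslash\sqrt{v_t})\dd t$. I would close with the remark that, although no $\dd W_t$ appears, $r_t^2$ is still random: $m_t$ and $v_t$ are It\^o processes driven by $W_t$ through their own blocks. The equation is therefore a pathwise differential of a process of bounded variation, not an autonomous ODE.

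The main obstacle is not computational but conceptual. The It\^o formula needs $F\in C^2$ and a well-posed joint SDE. Moreover, the drift $m\oslash\sqrt v$ is singular where $v$ has zero coordinates, in particular at $t=0$ where $v_0=0$, and $B(t)$ blows up as $t\to 0$. I would handle this by working on $[t_0,\infty)$ with $t_0>0$, stopped at the exit time from $\{v_i>\delta\ \forall i\}$. On that set the coefficients are locally Lipschitz and It\^o's lemma applies. Letting $\delta\to0$ and $t_0\to0$ then gives the identity wherever the dynamics are defined. Alternatively, I would reinstate the $\epsilon\mathbf 1_p$ stability constant, which removes the singularity entirely.
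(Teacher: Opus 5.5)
Your proposal is correct and matches the paper's proof: the paper also writes $r_t^2=S_t^\top E S_t$ with the $\theta$-block projector and applies It\^o's lemma with $\nabla_S F=2ES$ and $\nabla_S^2F=2E$. It then uses the vanishing $\theta$-block of $\sigma(S_t)$ (your $E\sigma=0$) to kill both the martingale term and the It\^o correction. Your localization away from $t=0$ and from $\{v_i=0\}$ (where $B(t)\sim\sqrt{\alpha_2}/(\alpha_1\sqrt{t})$ and $m\oslash\sqrt{v}$ are singular), or reinstating $\epsilon\mathbf{1}_p$, is a rigor refinement the paper omits; it does not change the argument.
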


\begin{proof}
Recall from Lemma~\ref{lem:adam_joint_sde_restated} the joint-state SDE on $S_t = (\theta_t^\top, m_t^\top, v_t^\top)^\top \in \mathbb{R}^{3p}$,
\begin{align}
\dd S_t \;=\; \mu(S_t)\,\dd t \;+\; \sqrt{\eta/b}\,\sigma(S_t)\,\dd W_t,
\label{eq:adam_joint_sde_recall_appendix}
\end{align}
with $\mu(S_t)$ and $\sigma(S_t)$ as in equations~\eqref{eq:adam_joint_drift_diffusion_assembled_appendix}.

Define a projector $E \in \mathbb{R}^{3p \times 3p}$
\begin{align}
E \;:=\; \begin{pmatrix} I_p & 0_p & 0_p \\ 0_p & 0_p & 0_p \\ 0_p & 0_p & 0_p \end{pmatrix},
\label{eq:theta_block_projector_def_appendix}
\end{align}
where each block is $p \times p$. Then $E$ is symmetric and idempotent, $E^{\!\top} = E$ and $E^2 = E$, and for $S = (\theta^\top, m^\top, v^\top)^\top$ the squared parameter norm is the quadratic form
\begin{align}
r^2 \;=\; \|\theta\|_2^2 \;=\; S^{\!\top} E\, S.
\label{eq:squared_radius_quadform_appendix}
\end{align}

Apply It\^o's lemma to $f(S) := S^{\!\top} E\, S$ under equation~\eqref{eq:adam_joint_sde_recall_appendix}. The gradient and Hessian of $f$ are
\begin{align}
\nabla_S f(S) \;=\; 2\,E\,S, \qquad \nabla_S^2 f(S) \;=\; 2\,E,
\label{eq:quadform_grad_hess_appendix}
\end{align}
so
\begin{align}
\dd r_t^2
\;=\; 2\,S_t^{\!\top} E\,\mu(S_t)\,\dd t
\;+\; 2\sqrt{\eta/b}\,S_t^{\!\top} E\,\sigma(S_t)\,\dd W_t
\;+\; \tfrac{\eta}{b}\,\mathrm{tr}\!\bigl(\sigma(S_t)^{\!\top} E\,\sigma(S_t)\bigr)\,\dd t.
\label{eq:ito_on_quadform_appendix}
\end{align}
Each contraction with $E$ keeps only the $\theta$-block:
\begin{align}
S_t^{\!\top} E\,\mu(S_t) \;&=\; \theta_t^{\!\top}\,\mu_\theta(S_t) \;=\; -\,B(t)\,\theta_t^{\!\top}\!\bigl(m_t \oslash \sqrt{v_t}\bigr), \label{eq:projected_drift_appendix} \\
S_t^{\!\top} E\,\sigma(S_t) \;&=\; \theta_t^{\!\top}\,\sigma_\theta(S_t) \;=\; 0, \label{eq:projected_diffusion_zero_appendix} \\
\mathrm{tr}\!\bigl(\sigma(S_t)^{\!\top} E\,\sigma(S_t)\bigr) \;&=\; \mathrm{tr}\!\bigl(\sigma_\theta(S_t)^{\!\top}\sigma_\theta(S_t)\bigr) \;=\; 0. \label{eq:projected_ito_correction_zero_appendix}
\end{align}
Equations~\eqref{eq:projected_diffusion_zero_appendix}--\eqref{eq:projected_ito_correction_zero_appendix} use $\sigma_\theta \equiv 0$. Substituting equations~\eqref{eq:projected_drift_appendix}--\eqref{eq:projected_ito_correction_zero_appendix} into equation~\eqref{eq:ito_on_quadform_appendix} yields
\begin{align}
\dd r_t^2 \;=\; -\,2\,B(t)\,\theta_t^{\!\top}\!\bigl(m_t \oslash \sqrt{v_t}\bigr)\,\dd t,
\label{eq:induced_radius_sde_final_appendix}
\end{align}
which is equation~\eqref{eq:induced_radius_sde_restated}.
\end{proof}

%% file: proofs/proof_mean_field_mt_and_vt_limit.tex
\begin{lemma}[Mean-Field Limit of First- and Second-Moment Estimates]
\label{lem:meanfield_mt_and_vt_limit}
Let $m_t$ and $v_t$ be Adam's first- and second-moment states, let $g_t$ be
the mini-batch gradient at time $t$, let
$\bar g_t:= \mathbb{E}[g_t \mid \theta_t]$ be the mean of $g_t$, and let
$\overline{g\odot g}_t := \mathbb{E}[g_t\odot g_t \mid \theta_t]$ be the mean of
$g_t\odot g_t$. Assume that $\bar g_t$ and $\overline{g\odot g}_t$ vary slowly
in the long-term dynamics as $t \to \infty$. Then, under initialization
$m_0=0,v_0=0$, the first- and second-moment estimates admit the mean-field
limit
\begin{align}
m_t
&\to
\bar g_t
\left(
1-e^{-\alpha_1t}
\right),
\qquad
v_t
\to
\overline{g\odot g}_t
\left(
1-e^{-\alpha_2t}
\right),
\end{align}
so that
\begin{align}
m_t\oslash\sqrt{v_t}
&\to
\bar g_t\bigl(1-e^{-\alpha_1 t}\bigr)
\oslash
\sqrt{\overline{g\odot g}_t (1 - e^{-\alpha_2t})}
\nonumber\\
&\approx
\bar g_t\bigl(1-e^{-\alpha_1 t}\bigr)
\oslash
\sqrt{\overline{g\odot g}_t}.
\label{eq:meanfield_evolution}
\end{align}
\end{lemma}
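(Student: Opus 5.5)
The plan is to treat the $m$- and $v$-components of the joint-state SDE of Lemma~\ref{lem:adam_joint_sde_restated} as linear, exponentially mean-reverting Ornstein--Uhlenbeck-type equations driven by the slowly varying targets $\bar g_t$ and $\overline{g\odot g}_t$. The solution of each follows from a variation-of-constants formula, and the mean-field limit comes from freezing the slow target over the memory window $O(1/\alpha_i)$.

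\textbf{Solving the $m$-equation.} Take the $m$-block $\dd m_t=-\alpha_1(m_t-\bar g_t)\dd t+\sqrt{\eta/b}\,\alpha_1\Sigma_t^{1/2}\dd W_t$ with $m_0=0$. Multiply by the integrating factor $e^{\alpha_1 t}$ and apply It\^o's product rule. This is exact, since the integrating factor is deterministic. The result is
\begin{align}
m_t=\alpha_1\int_0^t e^{-\alpha_1(t-u)}\bar g_u\,\dd u+\sqrt{\tfrac{\eta}{b}}\,\alpha_1\int_0^t e^{-\alpha_1(t-u)}\Sigma_u^{1/2}\dd W_u .\notag
\end{align}
Taking the conditional expectation removes the stochastic integral, which is a martingale term with zero mean. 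By the slow-variation hypothesis I replace $\bar g_u$ by $\bar g_t$ inside the kernel, since the kernel concentrates on $u\in[t-O(1/\alpha_1),t]$. The kernel integrates to $1-e^{-\alpha_1 t}$, which gives $m_t\to\bar g_t(1-e^{-\alpha_1 t})$.

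\textbf{Controlling the errors.} Two error terms remain. The first is the freezing error, $\alpha_1\int_0^t e^{-\alpha_1(t-u)}(\bar g_u-\bar g_t)\dd u$. I bound it by $\sup|\dot{\bar g}|/\alpha_1$, using a first-order Taylor expansion of $\bar g$ and the identity $\int_0^\infty s\,\alpha_1 e^{-\alpha_1 s}\dd s=1/\alpha_1$. The second is the fluctuation. By It\^o's isometry its variance is at most $(\eta/b)\,(\alpha_1/2)\sup\|\Sigma\|$, which vanishes in the regime $\eta/b\to0$. The transient $e^{-\alpha_1 t}$ accounts for the stated exponential rate $\exp(-O(t))$.

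\textbf{The $v$-equation and the ratio.} The $v$-block is handled identically, with $\alpha_2$, the target $\overline{g\odot g}_t$, and the diffusion $\alpha_2 D_t$. This gives $v_t\to\overline{g\odot g}_t(1-e^{-\alpha_2 t})$. For the ratio $m_t\oslash\sqrt{v_t}$, I substitute both limits coordinatewise. This is justified by continuity of $(m,v)\mapsto m/\sqrt v$ away from $v=0$, since $\overline{g\odot g}_t\ge b^{-1}\mathrm{diag}(\Sigma_t)>0$. The final approximation drops the factor $(1-e^{-\alpha_2 t})^{-1/2}$, which tends to $1$ exponentially fast. Equivalently, this is the statement that $B(t)\to1$.

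\textbf{Main obstacle.} The hard step is making "slowly varying" precise. The targets $\bar g_t$ and $\overline{g\odot g}_t$ depend on $\theta_t$, which is itself driven by $m_t$ and $v_t$, so the system is genuinely coupled. I would argue time-scale separation instead of proving it rigorously. The $\theta$-drift is $O(B(t))=O(1)$, while the mean-reversion rates $\alpha_i=(1-\beta_i)/\eta$ are large, so $\dot{\bar g}=O(\|\nabla\bar g\|)$ is small relative to $\alpha_i$. I would state the result as an approximation with error $O(\alpha_i^{-1}+\sqrt{\eta\alpha_i/b})$, not as an exact limit.
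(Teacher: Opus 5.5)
Your treatment of $m_t$ and $v_t$ individually follows the paper's route, made quantitative. Both arguments treat each block as linear relaxation toward a slowly varying target, use the integrating factor $e^{\alpha_i t}$ with $m_0=v_0=0$, freeze the target, and discard the noise. The paper does this loosely: it takes conditional means of the SDE and then freezes $\bar g_t$ over all of $[0,t]$, including the early transient. You instead solve the full SDE in mild form and freeze only inside the kernel's $O(1/\alpha_i)$ memory window. You bound the freezing error by $\sup|\dot{\bar g}|/\alpha_1$, which is legitimate because $\theta_t$ has no diffusion term in the joint SDE, so $\dot{\bar g}_t=\nabla\bar g(\theta_t)\dot\theta_t$ exists. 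You bound the noise by It\^o's isometry. The expectation step is unnecessary, and if it means conditioning on $\theta_t$ it is not valid: $\theta_t$ is driven by the same Brownian path on $[0,t]$, so conditioning does not remove $\int_0^t e^{-\alpha_1(t-u)}\Sigma_u^{1/2}\,\dd W_u$. Your $L^2$ bound already makes that term negligible without any conditioning.

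The step that does not go through as written is the ratio. Your continuity argument relies on the floor $\overline{g\odot g}_t\ge b^{-1}\mathrm{diag}(\Sigma_t)$, but that floor shrinks with $b$ at the same rate as your fluctuation bound. Consider the noise-dominated regime $|\bar g_{t,i}|\lesssim\sqrt{(\Sigma_t)_{ii}/b}$, which is exactly the memorization regime where the paper uses this lemma. There the noise in $m_{t,i}$ has size about $\sqrt{\eta\alpha_1(\Sigma_t)_{ii}/(2b)}$, while $\sqrt{v_{t,i}}$ is of order $\sqrt{(\Sigma_t)_{ii}/b}$. So $m_{t,i}/\sqrt{v_{t,i}}$ deviates from the mean-field ratio by order $\sqrt{\eta\alpha_1}=\sqrt{1-\beta_1}$, independently of $b$. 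It even dominates the mean-field value once $|\bar g_{t,i}|\lesssim\sqrt{(1-\beta_1)(\Sigma_t)_{ii}/b}$.

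Hence $\eta/b\to0$ alone does not suffice. In your own picture of fixed $\beta_i$ with large $\alpha_i$, this error never vanishes. To rescue the pathwise claim you have three options:
\begin{itemize}
\item add the requirement $1-\beta_1=\eta\alpha_1\to0$, i.e.\ a double scaling $\eta\ll 1-\beta_1\ll 1$ (or the paper's fixed-$\alpha_i$ limit);
\item restrict to the signal-dominated regime;
\item read the ratio statement only at the level of conditional means, as the paper effectively does. The paper keeps the centered fluctuation of $m_t\oslash\sqrt{v_t}$ and later turns it into the effective diffusion $\sqrt{\eta}\,G(\theta_t)\,\dd W_t$, recording the mean discrepancy in $\mathcal R_{\mathrm{SM}}$.
\end{itemize}
Finally, dropping $(1-e^{-\alpha_2 t})^{-1/2}$ is the statement $\sqrt{1-e^{-\alpha_2 t}}\to1$, not $B(t)\to1$, so the two are not equivalent as you claim.
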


\begin{proof}
We start from the stochastic moment dynamics of the joint-state Adam SDE,
\begin{align}
\dd m_t
&=
-\alpha_1(m_t-\bar g_t)\dd t
-
\sqrt{\frac{\eta}{b}}\,
\alpha_1\Sigma_t^{1/2}\dd W_t,
\label{eq:full_mt_sde_for_meanfield}
\\
\dd v_t
&=
-\alpha_2
\left(
v_t-\overline{g\odot g}_t
\right)\dd t
-
\sqrt{\frac{\eta}{b}}\,
\alpha_2D_t\dd W_t .
\label{eq:full_vt_sde_for_meanfield}
\end{align}
The mean-field approximation replaces the stochastic moment dynamics by their
conditional mean dynamics. Since the stochastic terms are martingale
increments,
\begin{align}
\mathbb E[\dd W_t\mid \theta_t,m_t,v_t]
&=
0.
\end{align}
Taking conditional expectation in
\eqref{eq:full_mt_sde_for_meanfield} and
\eqref{eq:full_vt_sde_for_meanfield} gives
\begin{align}
\mathbb E[\dd m_t\mid \theta_t,m_t,v_t]
&=
-\alpha_1(m_t-\bar g_t)\dd t,
\\
\mathbb E[\dd v_t\mid \theta_t,m_t,v_t]
&=
-\alpha_2
\left(
v_t-\overline{g\odot g}_t
\right)\dd t.
\end{align}
Thus, under the mean-field dynamics,
\begin{align}
\dd m_t
&=
-\alpha_1(m_t-\bar g_t)\dd t,
\label{eq:meanfield_mt_dynamics}
\\
\dd v_t
&=
-\alpha_2
\left(
v_t-\overline{g\odot g}_t
\right)\dd t.
\label{eq:meanfield_vt_dynamics}
\end{align}

Because $\bar g_t$ varies slowly on the $m_t$ relaxation time scale, we freeze
$\bar g_t$ when solving equation~\eqref{eq:meanfield_mt_dynamics}. Hence
\begin{align}
\frac{\dd m_t}{\dd t}
&=
-\alpha_1m_t+\alpha_1\bar g_t .
\end{align}
Multiplying by $e^{\alpha_1t}$ gives
\begin{align}
e^{\alpha_1t}\frac{\dd m_t}{\dd t}
+
\alpha_1e^{\alpha_1t}m_t
&=
\alpha_1e^{\alpha_1t}\bar g_t
\nonumber\\
\frac{\dd}{\dd t}
\left(
e^{\alpha_1t}m_t
\right)
&=
\alpha_1e^{\alpha_1t}\bar g_t .
\end{align}
Integrating from $0$ to $t$,
\begin{align}
e^{\alpha_1t}m_t-m_0
&=
\int_0^t
\alpha_1e^{\alpha_1u}\bar g_t\,\dd u
\nonumber\\
&=
\bar g_t
\left(
e^{\alpha_1t}-1
\right).
\end{align}
Using $m_0=0$, we obtain
\begin{align}
m_t
&=
\bar g_t
\left(
1-e^{-\alpha_1t}
\right).
\label{eq:meanfield_mt_solution}
\end{align}

Similarly, because $\overline{g\odot g}_t$ varies slowly on the $v_t$
relaxation time scale, we freeze $\overline{g\odot g}_t$ when solving
\eqref{eq:meanfield_vt_dynamics}. Hence
\begin{align}
\frac{\dd v_t}{\dd t}
&=
-\alpha_2v_t+\alpha_2\overline{g\odot g}_t .
\end{align}
Multiplying by $e^{\alpha_2t}$ gives
\begin{align}
e^{\alpha_2t}\frac{\dd v_t}{\dd t}
+
\alpha_2e^{\alpha_2t}v_t
&=
\alpha_2e^{\alpha_2t}\overline{g\odot g}_t
\nonumber\\
\frac{\dd}{\dd t}
\left(
e^{\alpha_2t}v_t
\right)
&=
\alpha_2e^{\alpha_2t}\overline{g\odot g}_t .
\end{align}
Integrating from $0$ to $t$,
\begin{align}
e^{\alpha_2t}v_t-v_0
&=
\int_0^t
\alpha_2e^{\alpha_2u}\overline{g\odot g}_t\,\dd u
\nonumber\\
&=
\overline{g\odot g}_t
\left(
e^{\alpha_2t}-1
\right).
\end{align}
Therefore,
\begin{align}
v_t
&=
\overline{g\odot g}_t
+
\left(
v_0-\overline{g\odot g}_t
\right)e^{-\alpha_2t}.
\label{eq:meanfield_vt_solution_general}
\end{align}
Using $v_0=0$, this becomes
\begin{align}
v_t
&=
\overline{g\odot g}_t
\left(
1-e^{-\alpha_2t}
\right).
\label{eq:meanfield_vt_solution}
\end{align}
Combining equation~\eqref{eq:meanfield_mt_solution} and
\eqref{eq:meanfield_vt_solution}, we get
\begin{align}
m_t\oslash\sqrt{v_t}
&=
\bar g_t\bigl(1-e^{-\alpha_1 t}\bigr)
\oslash
\sqrt{\overline{g\odot g}_t (1 - e^{-\alpha_2t})}.
\end{align}
For large $t$ on the $v_t$ relaxation time scale, $\sqrt{
1-e^{-\alpha_2t}}$ decays faster than $1-e^{-\alpha_1t}$, so that
\begin{align}
m_t\oslash\sqrt{v_t}
&\approx
\bar g_t\bigl(1-e^{-\alpha_1 t}\bigr)
\oslash
\sqrt{\overline{g\odot g}_t}.
\end{align}
\end{proof}

%% file: proofs/proof_preconditioned_decomposition_of_theta_SDE.tex
\begin{lemma}[Preconditioned-Decomposition of Adam's $\theta$-SDE]
\label{lem:preconditioned_theta_sde_appendix}
Let $S_t=(\theta_t^\top,m_t^\top,v_t^\top)^\top$ evolve under Adam's joint-state SDE. Define
\begin{align}
&a(t)
:=
\sqrt{1-e^{-\alpha_2 t}},
\qquad
B(t)
:=
\frac{\sqrt{1-e^{-\alpha_2 t}}}
{1-e^{-\alpha_1 t}},\\
&\pi(\theta_t)
:=
\mathrm{diag}\!\left(\overline{g\odot g}_t\right)^{-1/2},
\qquad
G(\theta_t)=\frac{1}{\sqrt{b}}B(t)\pi(\theta_t)\Sigma_t^{1/2}.
\end{align}
where $\pi(\theta_t)$ is referred to as the preconditioner of Adam's SDE and $G(\theta_t)$ is referred to as the preconditioned diffusion factor. By dropping higher correction orders, Adam's parameter dynamics admit the preconditioned decomposition
\begin{align}
\dd\theta_t
\approx
-a(t)\pi(\theta_t)\bar g_t\dd t
+
\mathcal R_{\theta,\mathrm{SM}}(t)\dd t
+
\sqrt{\eta}\,G(\theta_t)\dd W_t
,
\label{eq:preconditioned_theta_sde_with_residual_appendix}
\end{align}
where 
\begin{align}
\mathcal R_{\theta,\mathrm{SM}}(t)
:=
-B(t)
\left[
\mathbb{E}\left[m_t\oslash\sqrt{v_t}\right]
-
\bar g_t\bigl(1-e^{-\alpha_1 t}\bigr)
\oslash
\sqrt{\overline{g\odot g}_t}
\right]
\label{eq:theta_slow_manifold_residual_def_appendix}
,
\end{align}
is referred to as the slow-manifold residual, which represents the residual term to the preconditioned term. This result immediately recovers standard SGD's SDE with $B(t)=1,m_t=g_t,v_t=\mathbf{1}_p,\pi(\theta_t)=I_p$.
\end{lemma}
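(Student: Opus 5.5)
We start from the $\theta$-block of the joint-state SDE in Lemma~\ref{lem:adam_joint_sde_restated}, $\dd\theta_t=-B(t)\,m_t\oslash\sqrt{v_t}\,\dd t$. The idea is to split $m_t\oslash\sqrt{v_t}$ into three parts: its mean-field value, the deviation of its conditional mean from that value, and a zero-mean fluctuation. Concretely, we write
\begin{align}
m_t\oslash\sqrt{v_t}
=
\bar g_t\bigl(1-e^{-\alpha_1 t}\bigr)\oslash\sqrt{\overline{g\odot g}_t}
+\Bigl(\mathbb E\bigl[m_t\oslash\sqrt{v_t}\bigr]-\bar g_t\bigl(1-e^{-\alpha_1 t}\bigr)\oslash\sqrt{\overline{g\odot g}_t}\Bigr)
+\Bigl(m_t\oslash\sqrt{v_t}-\mathbb E\bigl[m_t\oslash\sqrt{v_t}\bigr]\Bigr).
\notag
\end{align}
Multiplying by $-B(t)$ term by term gives the following:
\begin{itemize}
\item The second term is exactly $\mathcal R_{\theta,\mathrm{SM}}(t)$ as defined in the statement.
\item The first term becomes $-\frac{\sqrt{1-e^{-\alpha_2t}}}{1-e^{-\alpha_1 t}}(1-e^{-\alpha_1 t})\,\pi(\theta_t)\bar g_t=-a(t)\pi(\theta_t)\bar g_t$. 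This uses $\mathrm{diag}(\overline{g\odot g}_t)^{-1/2}\bar g_t=\bar g_t\oslash\sqrt{\overline{g\odot g}_t}$, and the $(1-e^{-\alpha_1t})$ factors cancel. This gives the preconditioned drift, matching the mean-field limit of Lemma~\ref{lem:meanfield_mt_and_vt_limit}.
\end{itemize}

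The remaining task is to identify the fluctuation term with $\sqrt{\eta}\,G(\theta_t)\dd W_t$. We linearize $h(m,v)=m\oslash\sqrt v$ around the mean-field state $(\bar m_t,\bar v_t)$. Its Jacobian in $m$ is $\mathrm{diag}(\bar v_t)^{-1/2}$. Its Jacobian in $v$ is $-\tfrac12\mathrm{diag}(\bar m_t\oslash\bar v_t^{3/2})$, and we drop this as a higher-order correction, on the grounds that $\bar m_t=O(\bar g_t)$ is small in the late stage.

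The fluctuation of $m_t$ comes from the Ornstein--Uhlenbeck part of $\dd m_t$, namely $\delta m_t=-\sqrt{\eta/b}\,\alpha_1\int_0^t e^{-\alpha_1(t-u)}\Sigma_u^{1/2}\dd W_u$. The key heuristic step is that $\alpha_1=(1-\beta_1)/\eta$ is large, so this filtered noise acts on the slow $\theta$ time scale like white noise. When integrated against $\dd t$, its effective increment is $\sqrt{\eta/b}\,\Sigma_t^{1/2}\,\dd W_t\cdot\eta^{-1/2}\cdot\sqrt{\eta}$, up to the sign convention from the minus sign in $g_k$. More concretely, we return to the discrete recursion: one Adam step injects $-\eta B\,\pi\,(1-\beta_1)\cdot\frac{1}{\sqrt b}\Sigma^{1/2}\zeta_k/(1-\beta_1)$ once the geometric EMA weights are summed, which equals $\eta B\pi b^{-1/2}\Sigma^{1/2}\zeta_k$. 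With $\zeta_k=\Delta W_k/\sqrt\eta$, this is $\sqrt\eta\,B\pi b^{-1/2}\Sigma^{1/2}\Delta W_k=\sqrt\eta\,G\,\Delta W_k$. Using $\mathrm{diag}(\bar v_t)^{-1/2}\approx\pi(\theta_t)$ (since $1-e^{-\alpha_2t}\to1$), the fluctuation term is $\sqrt\eta\,G(\theta_t)\dd W_t$ with $G=b^{-1/2}B\pi\Sigma^{1/2}$.

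Combining the three pieces gives the stated decomposition. The SGD sanity check follows by setting $B=1$, $\pi=I_p$, $a=1$, and $m_t=g_t$: the drift becomes $-\bar g_t\,\dd t$, the residual vanishes, and the noise becomes $\sqrt{\eta/b}\,\Sigma_t^{1/2}\dd W_t$.

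The main obstacle is this fluctuation step: justifying that the EMA-filtered noise, which has correlation time $1/\alpha_1=\eta/(1-\beta_1)$, yields a diffusion of order $\sqrt\eta$ rather than $\sqrt{\eta/b}$ times a vanishing factor. This needs a careful scaling argument, either via the discrete sum or a fast-variable elimination of $m_t$. It also needs a clear justification for discarding the $v$-fluctuation and the cross-covariance between the $m$ and $v$ noise, which share the same $W_t$. We would present these as the ``higher correction orders'' dropped in the statement.
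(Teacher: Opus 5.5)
Your proposal is correct and follows essentially the paper's route. Both use the same add-and-subtract of the mean-field value and the same split into the conditional-mean residual $\mathcal R_{\theta,\mathrm{SM}}$ plus a centered fluctuation. Both also use the same first-order Taylor expansion of $m\oslash\sqrt v$, keeping only the $\pi(\theta_t)\delta m_t$ term and discarding the $\delta v$ and quadratic remainders as higher order.

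The fluctuation step you flag as the main obstacle is handled in the paper by rearranging the first-moment equation as $m_t\,\dd t = g_t\,\dd t-\alpha_1^{-1}\dd m_t$. The paper then identifies $(g_t-\bar g_t)\,\dd t$ with $\sqrt{\eta/b}\,\Sigma_t^{1/2}\dd W_t$, drops the term $-\alpha_1^{-1}\dd(\delta m_t-\mathbb E_t[\delta m_t])$, and absorbs the sign into $W_t$. This is exactly the continuous-time form of your ``EMA weights sum to one'' argument.
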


\begin{proof}
We decompose the $\theta$-component of Adam's joint-state SDE
\begin{align}
\dd\theta_t
=
-B(t)\left(m_t\oslash\sqrt{v_t}\right)\dd t
\label{eq:theta_exact_adam_start_appendix}
,
\end{align}
where
\begin{align}
B(t)
=
\frac{\sqrt{1-e^{-\alpha_2 t}}}{1-e^{-\alpha_1 t}}
,
\qquad
a(t)
=
\sqrt{1-e^{-\alpha_2 t}} .
\end{align}
The SDE preconditioner is defined as
\begin{align}
\pi(\theta_t)
:=
\mathrm{diag}\!\left(\overline{g\odot g}_t\right)^{-1/2}.
\end{align}

\paragraph{Mean-Field Decomposition.}
By Lemma~\ref{lem:meanfield_mt_and_vt_limit}~(\nameref{lem:meanfield_mt_and_vt_limit}), consider the evolution
\begin{align}
m_t\oslash\sqrt{v_t}
\to
\bar g_t\bigl(1-e^{-\alpha_1 t}\bigr)
\oslash
\sqrt{\overline{g\odot g}_t}
,
\end{align}
we add and subtract the mean-field limit
\begin{align}
\bar g_t\bigl(1-e^{-\alpha_1 t}\bigr)
\oslash
\sqrt{\overline{g\odot g}_t}
\end{align}
from equation~\eqref{eq:theta_exact_adam_start_appendix}, then
\begin{align}
\dd\theta_t
&=
-
B(t)\left(m_t\oslash\sqrt{v_t}\right)\dd t \nonumber\\
&=
-
\underbrace{
B(t)
\left[
\bar g_t\bigl(1-e^{-\alpha_1 t}\bigr)
\oslash
\sqrt{\overline{g\odot g}_t}
\right]
}_{\text{mean-field component}}
\dd t
\nonumber\\
&\quad
-
\underbrace{
B(t)
\left[
m_t\oslash\sqrt{v_t}
-
\bar g_t\bigl(1-e^{-\alpha_1 t}\bigr)
\oslash
\sqrt{\overline{g\odot g}_t}
\right]
}_{\text{mean-field residual}}
\dd t .
\label{eq:theta_add_subtract_slow_manifold_appendix}
\end{align}
Since
\begin{align}
B(t)\bigl(1-e^{-\alpha_1 t}\bigr)
=
a(t),
\label{eq:theta_bias_correction_identity_appendix}
\end{align}
and
\begin{align}
\bar g_t
\oslash
\sqrt{\overline{g\odot g}_t}
=
\pi(\theta_t)\bar g_t,
\label{eq:theta_preconditioner_identity_appendix}
\end{align}
the first term (mean-field component) in equation~\eqref{eq:theta_add_subtract_slow_manifold_appendix} becomes
\begin{align}
-B(t)
\left[
\bar g_t\bigl(1-e^{-\alpha_1 t}\bigr)
\oslash
\sqrt{\overline{g\odot g}_t}
\right]\dd t
=
-a(t)\pi(\theta_t)\bar g_t\dd t .
\label{eq:theta_preconditioned_drift_derived_appendix}
\end{align}

\paragraph{Martingale Decomposition of Mean-Field Residual.}
Let
\begin{align}
\mathbb E_t[\cdot]
:=
\mathbb E[\cdot\mid \theta_t].
\end{align}
Define
\begin{align}
\mathcal R_{\theta,\mathrm{SM}}(t)
:=
-B(t)
\left[
\mathbb E_t\!\left[m_t\oslash\sqrt{v_t}\right]
-
\bar g_t\bigl(1-e^{-\alpha_1 t}\bigr)
\oslash
\sqrt{\overline{g\odot g}_t}
\right]
\label{eq:theta_slow_manifold_residual_def_appendix_2}
\end{align}
as the small-manifold residual. Then the second term in equation~\eqref{eq:theta_add_subtract_slow_manifold_appendix} admits a martingale decomposition
\begin{align}
&-B(t)
\left[
m_t\oslash\sqrt{v_t}
-
\bar g_t\bigl(1-e^{-\alpha_1 t}\bigr)
\oslash
\sqrt{\overline{g\odot g}_t}
\right]\dd t
\nonumber\\
&\qquad
=
\mathcal R_{\theta,\mathrm{SM}}(t)\dd t
-
B(t)
\left[
m_t\oslash\sqrt{v_t}
-
\mathbb E_t\!\left[m_t\oslash\sqrt{v_t}\right]
\right]\dd t .
\label{eq:theta_residual_mean_centered_split_appendix}
\end{align}

\paragraph{Effective Diffusion.}
To analyze the martingale residual
\begin{align}
    m_t\oslash\sqrt{v_t}
-
\mathbb E_t\!\left[m_t\oslash\sqrt{v_t}\right]
,
\end{align}
define
\begin{align}
\bar m_t
:=
\bigl(1-e^{-\alpha_1 t}\bigr)\bar g_t,
\qquad
\bar v_t
:=
\overline{g\odot g}_t,
\end{align}
and
\begin{align}
\delta m_t
:=
m_t-\bar m_t,
\qquad
\delta v_t
:=
v_t-\bar v_t .
\end{align}
A componentwise Taylor expansion of $m\oslash\sqrt v$ around $(\bar m_t,\bar v_t)$ gives
\begin{align}
m_t\oslash\sqrt{v_t}
&=
\bar m_t\oslash\sqrt{\bar v_t}
+
\pi(\theta_t)\delta m_t
-
\frac12
\left(
\bar m_t\oslash \bar v_t^{3/2}
\right)\odot \delta v_t
+
\mathcal Q_{\theta}(t),
\label{eq:theta_taylor_adaptive_normalization_appendix}
\end{align}
where
\begin{align}
\mathcal Q_{\theta}(t)
=
O\!\left(
\|\delta m_t\|\,\|\delta v_t\|
+
\|\delta v_t\|_2^2
\right).
\label{eq:theta_taylor_remainder_order_appendix}
\end{align}
Taking the conditional mean of equation~\eqref{eq:theta_taylor_adaptive_normalization_appendix} and subtracting it from equation~\eqref{eq:theta_taylor_adaptive_normalization_appendix}, we obtain
\begin{align}
m_t\oslash\sqrt{v_t}
-
\mathbb E_t\!\left[m_t\oslash\sqrt{v_t}\right]
&=
\pi(\theta_t)
\left(
\delta m_t-\mathbb E_t[\delta m_t]
\right)
\nonumber\\
&\quad
-
\frac12
\left(
\bar m_t\oslash \bar v_t^{3/2}
\right)
\odot
\left(
\delta v_t-\mathbb E_t[\delta v_t]
\right)
\nonumber\\
&\quad
+
\mathcal Q_{\theta}(t)
-
\mathbb E_t[\mathcal Q_{\theta}(t)] .
\label{eq:centered_adaptive_direction_expansion_appendix}
\end{align}

\paragraph{Variation Residual of First-Moment Estimate.}
We now analyze the variation residual of first-moment estimate
\begin{align}
\delta m_t-\mathbb E_t[\delta m_t]
.
\end{align}
From Adam's joint-state SDE, the first-moment component satisfies
\begin{align}
\dd m_t
=
-\alpha_1 m_t\dd t
+
\alpha_1 g_t\dd t .
\label{eq:first_moment_sde_appendix}
\end{align}
Rearranging equation~\eqref{eq:first_moment_sde_appendix} gives
\begin{align}
m_t\dd t
=
g_t\dd t
-
\frac{1}{\alpha_1}\dd m_t .
\label{eq:mt_dt_identity_appendix}
\end{align}
Taking conditional expectations in equation~\eqref{eq:mt_dt_identity_appendix} gives
\begin{align}
\mathbb E_t[m_t]\dd t
=
\bar g_t\dd t
-
\frac{1}{\alpha_1}\dd\mathbb E_t[m_t] .
\label{eq:emt_dt_identity_appendix}
\end{align}
Subtracting equation~\eqref{eq:emt_dt_identity_appendix} from equation~\eqref{eq:mt_dt_identity_appendix} yields
\begin{align}
\left(m_t-\mathbb E_t[m_t]\right)\dd t
=
\left(g_t-\bar g_t\right)\dd t
-
\frac{1}{\alpha_1}\dd\left(m_t-\mathbb E_t[m_t]\right).
\label{eq:centered_mt_dt_identity_appendix}
\end{align}
Since
\begin{align}
\delta m_t-\mathbb E_t[\delta m_t]
=
m_t-\mathbb E_t[m_t],
\label{eq:delta_m_centered_identity_appendix}
\end{align}
we get
\begin{align}
\left(
\delta m_t-\mathbb E_t[\delta m_t]
\right)\dd t
=
\left(g_t-\bar g_t\right)\dd t
-
\frac{1}{\alpha_1}
\dd\left(
\delta m_t-\mathbb E_t[\delta m_t]
\right).
\label{eq:delta_m_centered_dt_identity_appendix}
\end{align}
In continuous-time diffusion scaling,
\begin{align}
\left(g_t-\bar g_t\right)\dd t
=
\sqrt{\eta/b}\,
\Sigma_t^{1/2}\dd W_t .
\label{eq:centered_minibatch_gradient_diffusion_appendix}
\end{align}
Substituting equation~\eqref{eq:centered_minibatch_gradient_diffusion_appendix} into equation~\eqref{eq:delta_m_centered_dt_identity_appendix} gives
\begin{align}
\left(
\delta m_t-\mathbb E_t[\delta m_t]
\right)\dd t
=
\sqrt{\eta/b}\,
\Sigma_t^{1/2}\dd W_t
-
\frac{1}{\alpha_1}
\dd\left(
\delta m_t-\mathbb E_t[\delta m_t]
\right).
\label{eq:first_moment_centered_fluctuation_appendix}
\end{align}
Define
\begin{align}
\dd\mathcal R_{m}(t)
:=
-
\frac{1}{\alpha_1}
\dd\left(
\delta m_t-\mathbb E_t[\delta m_t]
\right).
\label{eq:first_moment_higher_order_remainder_def_appendix}
\end{align}
Then
\begin{align}
\left(
\delta m_t-\mathbb E_t[\delta m_t]
\right)\dd t
=
\sqrt{\eta/b}\,
\Sigma_t^{1/2}\dd W_t
+
\dd\mathcal R_{m}(t).
\label{eq:first_moment_centered_fluctuation_final_appendix}
\end{align}

Substituting equation~\eqref{eq:first_moment_centered_fluctuation_final_appendix} into the first term of equation~\eqref{eq:centered_adaptive_direction_expansion_appendix} gives
\begin{align}
&-B(t)\pi(\theta_t)
\left(
\delta m_t-\mathbb E_t[\delta m_t]
\right)\dd t
\nonumber\\
&\qquad
=
-B(t)\sqrt{\eta/b}\,
\pi(\theta_t)\Sigma_t^{1/2}\dd W_t
-
B(t)\pi(\theta_t)\dd\mathcal R_{m}(t).
\label{eq:theta_leading_martingale_extraction_appendix}
\end{align}
Absorbing the sign into $W_t$, and using the definition
\begin{align}
G(\theta_t)
:=
\frac{1}{\sqrt{b}}B(t)\pi(\theta_t)\Sigma_t^{1/2},
\label{eq:theta_effective_diffusion_factor_appendix}
\end{align}
we obtain
\begin{align}
-B(t)\pi(\theta_t)
\left(
\delta m_t-\mathbb E_t[\delta m_t]
\right)\dd t
=
\sqrt{\eta}\,G(\theta_t)\dd W_t
-
B(t)\pi(\theta_t)\dd\mathcal R_{m}(t).
\label{eq:theta_leading_martingale_final_appendix}
\end{align}

\paragraph{Produce Claims.}
Combining equations~\eqref{eq:centered_adaptive_direction_expansion_appendix} and~\eqref{eq:theta_leading_martingale_final_appendix}, we get
\begin{align}
&-B(t)
\left[
m_t\oslash\sqrt{v_t}
-
\mathbb E_t\!\left[m_t\oslash\sqrt{v_t}\right]
\right]\dd t
\nonumber\\
&\qquad
=
\sqrt{\eta}\,G(\theta_t)\dd W_t
+
\dd\mathcal W_{\theta}(t),
\label{eq:theta_centered_residual_martingale_split_appendix}
\end{align}
where
\begin{align}
\dd\mathcal W_{\theta}(t)
&:=
-B(t)\pi(\theta_t)\dd\mathcal R_{m}(t)
\nonumber\\
&\quad
+
\frac{B(t)}{2}
\left[
\left(
\bar m_t\oslash \bar v_t^{3/2}
\right)
\odot
\left(
\delta v_t-\mathbb E_t[\delta v_t]
\right)
\right]\dd t
\nonumber\\
&\quad
-
B(t)
\left[
\mathcal Q_{\theta}(t)
-
\mathbb E_t[\mathcal Q_{\theta}(t)]
\right]\dd t .
\label{eq:theta_higher_order_martingale_residual_appendix}
\end{align}

Combining equations~\eqref{eq:theta_preconditioned_drift_derived_appendix}, \eqref{eq:theta_residual_mean_centered_split_appendix}, and~\eqref{eq:theta_centered_residual_martingale_split_appendix}, we obtain
\begin{align}
\dd\theta_t
=
-a(t)\pi(\theta_t)\bar g_t\dd t
+
\mathcal R_{\theta,\mathrm{SM}}(t)\dd t
+
\sqrt{\eta}\,G(\theta_t)\dd W_t
+
\dd\mathcal W_{\theta}(t).
\label{eq:preconditioned_theta_sde_with_residual_exact_appendix}
\end{align}

Dropping higher-order terms and keeping the leading diffusion order gives
\begin{align}
\dd\theta_t
\approx
-a(t)\pi(\theta_t)\bar g_t\dd t
+
\mathcal R_{\theta,\mathrm{SM}}(t)\dd t
+
\sqrt{\eta}\,G(\theta_t)\dd W_t
 .
\label{eq:preconditioned_theta_sde_with_residual_derived_appendix}
\end{align}
This is the claimed preconditioned decomposition.
\end{proof}

\begin{remark}
In particular, in late-stage training with sufficiently large batch size $b$, the reduction of $m_t,v_t$
\begin{align}
m_t
\to
\bar g_t\bigl(1-e^{-\alpha_1 t}\bigr),
\qquad
v_t
\to
\overline{g\odot g}_t,
\end{align}
is referred to as the slow-manifold reduction. When the residual $\mathcal R_{\theta,\mathrm{SM}}(t)$ is negligible, then the reduced preconditioned $\theta$-SDE can further be simplified into
\begin{align}
\dd\theta_t
\approx
-a(t)\pi(\theta_t)\bar g_t\dd t
+
\sqrt{\eta}\,G(\theta_t)\dd W_t.
\label{eq:reduced_preconditioned_theta_sde_appendix}
\end{align}

\end{remark}

\begin{remark}[Quick Sanity Check]
As a sanity check, if $\pi(\theta_t)=I_p$, $B(t)\to 1$, and the residual is ignored at leading order, then
\begin{align}
G(\theta_t)
\to
\frac{1}{\sqrt{b}}\Sigma_t^{1/2},
\end{align}
and the preconditioned $\theta$-SDE reduces to
\begin{align}
\dd\theta_t
=
-\bar g_t\dd t
+
\sqrt{\eta/b}\,\Sigma_t^{1/2}\dd W_t,
\end{align}
which is the standard continuous-time SDE approximation of mini-batch SGD.
\end{remark}

%% file: proofs/proof_preconditioned_decomposition_of_radius_SDE.tex
\begin{lemma}[Preconditioned-Decomposition of Adam's Radius SDE]
\label{lem:preconditioned_radius_sde_appendix}
Let $S_t=(\theta_t^\top,m_t^\top,v_t^\top)^\top$ evolve under Adam's joint-state SDE, and let $r_t^2:=\|\theta_t\|_2^2$. Define
\begin{align}
&a(t)
:=
\sqrt{1-e^{-\alpha_2 t}},
\qquad B(t)
:=
\frac{\sqrt{1-e^{-\alpha_2 t}}}
{1-e^{-\alpha_1 t}} \nonumber \\
&\pi(\theta)
:=
\mathrm{diag}\!\left(\overline{g\odot g}(\theta)\right)^{-1/2},
\qquad
G(\theta_t)=\frac{1}{\sqrt{b}}B(t)\pi(\theta_t)\Sigma_t^{1/2}.
\end{align}
Then the residual-corrected preconditioned squared-radius dynamics are
\begin{align}
\dd r_t^2
\approx
\left[
-2a(t)\theta_t^\top\pi(\theta_t)\bar g_t
+
\mathcal R_{\mathrm{SM}}(t)
+
\eta\,
\mathrm{tr}\!\left(G(\theta_t)G(\theta_t)^\top\right)
\right]\dd t
+
2\sqrt{\eta}\,\theta_t^\top G(\theta_t)\dd W_t ,
\label{eq:preconditioned_radius_sde_with_residual_appendix}
\end{align}
where the slow-manifold residual is
\begin{align}
\mathcal R_{\mathrm{SM}}(t)
:=
-2B(t)\theta_t^\top
\left[
\mathbb{E}\left[m_t\oslash\sqrt{v_t}\right]
-
\bar g_t\bigl(1-e^{-\alpha_1 t}\bigr)
\oslash
\sqrt{\overline{g\odot g}_t}
\right].
\label{eq:slow_manifold_residual_def_precond_appendix}
\end{align}
\end{lemma}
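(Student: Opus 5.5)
The plan is to apply It\^o's lemma to $f(\theta)=\|\theta\|_2^2$ along the preconditioned $\theta$-SDE of Lemma~\ref{lem:preconditioned_theta_sde_appendix}, not along the raw joint-state form of Lemma~\ref{lem:induced_radius_sde_restated}. In the joint-state form the $\theta$-block has zero diffusion, so no It\^o correction appears there. The diffusion $\sqrt{\eta}\,G(\theta_t)\dd W_t$ only appears after the fluctuation $m_t-\mathbb{E}_t[m_t]$ has been pushed into the $\theta$-dynamics. So the first step is to write, keeping the residual term explicitly,
\begin{align}
\dd\theta_t \approx \bigl[-a(t)\pi(\theta_t)\bar g_t + \mathcal R_{\theta,\mathrm{SM}}(t)\bigr]\dd t + \sqrt{\eta}\,G(\theta_t)\dd W_t ,
\end{align}
and to treat it as an It\^o diffusion in $\theta$ alone, with drift $b_\theta$ and diffusion factor $\sqrt{\eta}G$.

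Next I apply It\^o's lemma with $\nabla f=2\theta$ and $\nabla^2 f=2I_p$. This gives
\begin{align}
\dd r_t^2 = 2\theta_t^\top b_\theta\,\dd t + 2\sqrt{\eta}\,\theta_t^\top G(\theta_t)\dd W_t + \eta\,\mathrm{tr}\bigl(G(\theta_t)G(\theta_t)^\top\bigr)\dd t .
\end{align}
I then expand the drift term by term. The first piece, $2\theta_t^\top(-a(t)\pi(\theta_t)\bar g_t)$, is exactly the preconditioned drift in the statement. The second piece, $2\theta_t^\top\mathcal R_{\theta,\mathrm{SM}}(t)$, equals $\mathcal R_{\mathrm{SM}}(t)$ once I substitute the definition in equation~\eqref{eq:theta_slow_manifold_residual_def_appendix}: the factor $-2B(t)\theta_t^\top[\cdots]$ matches equation~\eqref{eq:slow_manifold_residual_def_precond_appendix}. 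The stated identity then follows after collecting terms.

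The step I expect to be the main obstacle is justifying the quadratic-variation term. The decomposition in Lemma~\ref{lem:preconditioned_theta_sde_appendix} is only approximate: it drops the remainder $\dd\mathcal W_\theta(t)$, which contains $\dd\mathcal R_m$, the $\delta v$ fluctuations, and $\mathcal Q_\theta$. I need to argue that these terms contribute nothing at leading order to either the drift or the quadratic variation of $r_t^2$. For the drift, I would first take conditional expectations: the centred terms $\delta v_t-\mathbb{E}_t[\delta v_t]$ and $\mathcal Q_\theta-\mathbb{E}_t[\mathcal Q_\theta]$ have zero conditional mean. For the term $\dd\mathcal R_m=-\alpha_1^{-1}\dd(\cdot)$, I would note that it is a total differential of an $O(\sqrt{\eta/b})$ centred process, damped at rate $\alpha_1$. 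For the quadratic variation, the cross-variation between $\sqrt{\eta}G\,\dd W$ and $\dd\mathcal W_\theta$ is of higher order in $\eta/b$, since $\delta v$ contributes at $O(\sqrt{\eta/b})$ multiplied by the small factor $\bar m_t\oslash\bar v_t^{3/2}$. I would therefore state the result as holding up to these dropped orders, consistent with the ``$\approx$'' in the statement.
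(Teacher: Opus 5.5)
Your computation is the paper's, in a cleaner order. The paper first applies It\^o's lemma to $S_t^\top E_\theta S_t$ and obtains the exact identity $\dd r_t^2=-2B(t)\theta_t^\top(m_t\oslash\sqrt{v_t})\,\dd t$, with no It\^o term because $\sigma_\theta\equiv 0$. It then splits the drift by adding and subtracting the mean-field limit. Finally it simply appends $\eta\,\mathrm{tr}(GG^\top)\,\dd t+2\sqrt{\eta}\,\theta_t^\top G\,\dd W_t$ ``in the reduced preconditioned diffusion approximation''. You instead apply It\^o's lemma once to the approximate $\theta$-SDE of Lemma~\ref{lem:preconditioned_theta_sde_appendix}. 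Your identification $2\theta_t^\top\mathcal R_{\theta,\mathrm{SM}}(t)=\mathcal R_{\mathrm{SM}}(t)$ and the trace term are correct.

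The gap is in the step you flagged as the main obstacle: the argument you sketch for discarding $\dd\mathcal W_\theta(t)$ fails, because that remainder is not higher order in quadratic variation. Its $\delta v$ and $\mathcal Q_\theta$ pieces carry a factor $\dd t$, so they have zero quadratic variation whatever the size of $\bar m_t\oslash\bar v_t^{3/2}$. The only piece that matters is $-B(t)\pi(\theta_t)\,\dd\mathcal R_m(t)$, which you do not examine. Consider the identity
\begin{align*}
\bigl(\delta m_t-\mathbb{E}_t[\delta m_t]\bigr)\dd t=\sqrt{\eta/b}\,\Sigma_t^{1/2}\dd W_t+\dd\mathcal R_m(t).
\end{align*}
Its left side is a $\dd t$ term with no martingale part. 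So the martingale part of $\dd\mathcal R_m$ is exactly $-\sqrt{\eta/b}\,\Sigma_t^{1/2}\dd W_t$. The prefactor $\alpha_1^{-1}$ is cancelled by the $\alpha_1$ in the diffusion coefficient of $\dd m_t$; $\mathcal R_m$ is small as a process, but its differential is not. Hence the martingale part of $-B\pi\,\dd\mathcal R_m$ is exactly the negative of the retained term $\sqrt{\eta}\,G(\theta_t)\dd W_t$. Its cross-variation with that term is $-\eta\,GG^\top\dd t$, and the full $\theta$-dynamics have zero quadratic variation, exactly as Lemma~\ref{lem:induced_radius_sde_restated} says.

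Consequently $\eta\,\mathrm{tr}(GG^\top)\dd t+2\sqrt{\eta}\,\theta_t^\top G\,\dd W_t$ cannot be derived from the joint-state SDE by dropping higher-order terms. It is a modelling closure: the coloured fluctuation $\int(m_u-\mathbb{E}_u[m_u])\,\dd u$ is replaced by a Brownian term. That is reasonable only when $\alpha_1$ is large compared with the rates at which $\theta_t$, $\pi(\theta_t)$ and $\Sigma_t$ change (adiabatic elimination of the momentum, a Wong--Zakai-type limit). In the exact dynamics the same $O(\eta)$ effect sits in the drift, as the correlation between $\theta_t$ and $\delta m_t$ inside $-2B(t)\theta_t^\top(m_t\oslash\sqrt{v_t})$. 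An Ornstein--Uhlenbeck computation for $\delta m_t$, with $\pi$ and $\Sigma$ frozen, gives $-2B\,\mathbb{E}[\theta_t^\top\pi\,\delta m_t]\approx\eta\,\mathrm{tr}(GG^\top)$ at leading order once $\alpha_1 t\gg1$.

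So you should state the white-noise closure as an explicit assumption; the paper's proof tacitly makes the same one. You must also read the expectation in $\mathcal R_{\mathrm{SM}}$ as one that does not condition on $\theta_t$. With $\mathbb{E}_t=\mathbb{E}[\cdot\mid\theta_t]$, as in the proof you cite, the tower property applied to the exact identity already gives $\mathbb{E}[\dd r_t^2]=\mathbb{E}[-2a(t)\theta_t^\top\pi(\theta_t)\bar g_t+\mathcal R_{\mathrm{SM}}(t)]\,\dd t$. Adding $\eta\,\mathrm{tr}(GG^\top)$ on top would then count the effect twice.
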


\begin{proof}
Let
\begin{align}
E_\theta
:=
\begin{pmatrix}
I_p & 0 & 0\\
0 & 0 & 0\\
0 & 0 & 0
\end{pmatrix}
\in \mathbb R^{3p\times 3p}
\end{align}
be a projector. Since $S_t=(\theta_t^\top,m_t^\top,v_t^\top)^\top$, the squared radius can be written as the joint-state quadratic form
\begin{align}
r_t^2
=
\|\theta_t\|_2^2
=
S_t^\top E_\theta S_t.
\label{eq:radius_joint_quadform_appendix}
\end{align}
Therefore,
\begin{align}
\nabla_S r_t^2
=
2E_\theta S_t
=
\begin{pmatrix}
2\theta_t\\
0\\
0
\end{pmatrix},
\qquad
\nabla_S^2 r_t^2
=
2E_\theta.
\label{eq:joint_radius_grad_hess_appendix}
\end{align}

\begin{remark}
Since $\theta_t$ is the $\theta$-component of the joint state $S_t=(\theta_t^\top,m_t^\top,v_t^\top)^\top$, and the components of $S_t$ are coupled through Adam's dynamics, we should regard $\|\theta_t\|_2^2$ as a function of the full joint state rather than of an isolated variable $\theta_t$. Thus, when applying It\^o's lemma to the joint-state SDE, we introduce the projection map $E_\theta:S_t\mapsto\theta_t$ and write the squared radius as the bilinear form $S_t^\top E_\theta S_t$.
\end{remark}

\paragraph{Radius Dynamics.}
Applying It\^o's lemma to $r_t^2=S_t^\top E_\theta S_t$ under Adam's joint-state SDE gives
\begin{align}
\dd r_t^2
&=
(\nabla_S r_t^2)^\top \dd S_t
+
\frac12
\mathrm{tr}
\left[
\left(\frac{\eta}{b}\sigma(S_t)\sigma(S_t)^\top\right)
\nabla_S^2 r_t^2
\right]\dd t.
\label{eq:joint_radius_ito_start_appendix}
\end{align}
The joint-state diffusion has zero $\theta$-block, because the Brownian noise enters only through the $(m,v)$ components. Since $\nabla_S^2 r_t^2=2E_\theta$ only selects the $\theta$-block, the second-order term vanishes:
\begin{align}
\mathrm{tr}
\left[
\left(\frac{\eta}{b}\sigma(S_t)\sigma(S_t)^\top\right)
\nabla_S^2 r_t^2
\right]
=
0.
\label{eq:joint_radius_ito_second_order_zero_appendix}
\end{align}
Thus,
\begin{align}
\dd r_t^2
&=
(\nabla_S r_t^2)^\top \dd S_t
\nonumber\\
&=
2\theta_t^\top \dd\theta_t.
\label{eq:joint_radius_reduces_to_theta_appendix}
\end{align}
From the $\theta$-component of Adam's joint-state SDE,
\begin{align}
\dd\theta_t
=
-B(t)\left(m_t\oslash\sqrt{v_t}\right)\dd t,
\label{eq:theta_component_dynamics_appendix}
\end{align}
we obtain the exact induced squared-radius identity
\begin{align}
\dd r_t^2
=
-2B(t)\theta_t^\top
\left(m_t\oslash\sqrt{v_t}\right)\dd t.
\label{eq:exact_radius_identity_appendix}
\end{align}

\paragraph{Mean-Field Decomposition.} By Lemma~\ref{lem:meanfield_mt_and_vt_limit}~(\nameref{lem:meanfield_mt_and_vt_limit}), consider the mean-field limit
\begin{align}
m_t\oslash\sqrt{v_t}
\to
\bar g_t\bigl(1-e^{-\alpha_1 t}\bigr)
\oslash
\sqrt{\overline{g\odot g}_t}
,
\end{align}
we add and subtract
\begin{align}
\bar g_t\bigl(1-e^{-\alpha_1 t}\bigr)
\oslash
\sqrt{\overline{g\odot g}_t}
\end{align}
inside the Adam drift:
\begin{align}
-2B(t)\theta_t^\top
\left(m_t\oslash\sqrt{v_t}\right)
&=
-2B(t)\theta_t^\top
\left[
\bar g_t\bigl(1-e^{-\alpha_1 t}\bigr)
\oslash
\sqrt{\overline{g\odot g}_t}
\right]
\nonumber\\
&\quad
-2B(t)\theta_t^\top
\left[
m_t\oslash\sqrt{v_t}
-
\bar g_t\bigl(1-e^{-\alpha_1 t}\bigr)
\oslash
\sqrt{\overline{g\odot g}_t}
\right].
\label{eq:radius_drift_add_subtract_appendix}
\end{align}
Using the continuous-time bias-correction factor
\begin{align}
B(t)
=
\frac{\sqrt{1-e^{-\alpha_2 t}}}{1-e^{-\alpha_1 t}},
\end{align}
and the definition $a(t)=\sqrt{1-e^{-\alpha_2 t}}$, we have
\begin{align}
B(t)\bigl(1-e^{-\alpha_1 t}\bigr)
=
a(t).
\label{eq:B_times_a_identity_appendix}
\end{align}
Moreover, since
\begin{align}
\pi(\theta_t)
=
\mathrm{diag}\!\left(\overline{g\odot g}_t\right)^{-1/2},
\end{align}
we have the element-wise identity
\begin{align}
\bar g_t
\oslash
\sqrt{\overline{g\odot g}_t}
=
\pi(\theta_t)\bar g_t.
\label{eq:preconditioner_identity_appendix}
\end{align}
Therefore, the first term in equation~\eqref{eq:radius_drift_add_subtract_appendix} becomes
\begin{align}
-2B(t)\theta_t^\top
\left[
\bar g_t\bigl(1-e^{-\alpha_1 t}\bigr)
\oslash
\sqrt{\overline{g\odot g}_t}
\right]
&=
-2a(t)\theta_t^\top\pi(\theta_t)\bar g_t.
\label{eq:preconditioned_drift_part_appendix}
\end{align}

\paragraph{Martingale Residual of Mean-Field Decomposition.}
The second term in equation~\eqref{eq:radius_drift_add_subtract_appendix} is exactly the slow-manifold residual:
\begin{align}
\mathcal R_{\mathrm{SM}}(t)
:=
-2B(t)\theta_t^\top
\left[
\mathbb{E}\left[
m_t\oslash\sqrt{v_t}\right]
-
\bar g_t\bigl(1-e^{-\alpha_1 t}\bigr)
\oslash
\sqrt{\overline{g\odot g}_t}
\right].
\label{eq:slow_manifold_residual_derived_appendix}
\end{align}
Combining equations~\eqref{eq:radius_drift_add_subtract_appendix}--\eqref{eq:slow_manifold_residual_derived_appendix}, the exact induced squared-radius has a drift
\begin{align}
\left[
-2a(t)\theta_t^\top\pi(\theta_t)\bar g_t
+
\mathcal R_{\mathrm{SM}}(t)
\right]\dd t.
\label{eq:exact_radius_decomposed_appendix}
\end{align}

\paragraph{Effective Diffusion.}
Finally, in the reduced preconditioned diffusion approximation, the mini-batch gradient fluctuations induce the $\theta$-diffusion term $\sqrt{\eta}\,G(\theta_t)\dd W_t$. Applying It\^o's lemma to this diffusion contribution gives the additional correction
\begin{align}
\eta\,
\mathrm{tr}\!\left(G(\theta_t)G(\theta_t)^\top\right)\dd t
+
2\sqrt{\eta}\,\theta_t^\top G(\theta_t)\dd W_t.
\label{eq:diffusion_correction_appendix}
\end{align}

\paragraph{Produce Claims.}
Adding equation~\eqref{eq:diffusion_correction_appendix} to equation~\eqref{eq:exact_radius_decomposed_appendix} yields
\begin{align}
\dd r_t^2
\approx
\left[
-2a(t)\theta_t^\top\pi(\theta_t)\bar g_t
+
\mathcal R_{\mathrm{SM}}(t)
+
\eta\,
\mathrm{tr}\!\left(G(\theta_t)G(\theta_t)^\top\right)
\right]\dd t
+
2\sqrt{\eta}\,\theta_t^\top G(\theta_t)\dd W_t.
\end{align}
This proves equation~\eqref{eq:preconditioned_radius_sde_with_residual_appendix}.
\end{proof}

%% file: proofs/proof_memorization_regime_identities.tex
\begin{lemma}[Memorization-Regime Preconditioner and Effective Diffusion Identities]
\label{lem:late_stage_preconditioned_diffusion_isotropy}
Let
\begin{align}
\pi(\theta_t)
:=
\mathrm{diag}\!\left(\overline{g\odot g}_t\right)^{-1/2},
\end{align}
and define
\begin{align}
G(\theta_t)
:=
\frac{1}{\sqrt b}B(t)\pi(\theta_t)\Sigma_t^{1/2},
\qquad
B(t)
:=
\frac{\sqrt{1-e^{-\alpha_2 t}}}{1-e^{-\alpha_1 t}}.
\end{align}
Assume that, in the memorization regime,
\begin{align}
\overline{g\odot g}_t
=
\bar g_t\odot\bar g_t
+
\frac{1}{b}\mathrm{diag}(\Sigma_t)
\approx
\frac{1}{b}\mathrm{diag}(\Sigma_t),
\label{eq:late_stage_second_moment_noise_floor}
\end{align}
and that the diagonal Adam preconditioner admits the trace-matched scalar approximation
\begin{align}
\pi(\theta_t)
\approx
s(\theta_t)^{-1}I_p,
\qquad
\frac{1}{s(\theta)}
:=
\frac{1}{p}\mathrm{tr}\bigl(\pi(\theta)\bigr).
\label{eq:trace_matched_scalar_preconditioner}
\end{align}
Equivalently, under the corresponding scalar covariance approximation,
\begin{align}
\frac{1}{b}\Sigma_t
\approx
s(\theta_t)^2I_p .
\label{eq:late_stage_scalar_covariance_matching}
\end{align}
Then the following two late-stage identities hold:
\begin{align}
\mathrm{tr}(\pi(\theta_t))
&\approx
\sqrt b\,
\mathrm{tr}\!\bigl[\left(
\mathrm{diag}(\Sigma_t)
\right)^{-1/2}\bigr],
\label{eq:late_stage_trace_pi_identity}
\\
\mathrm{tr}\!\left(G(\theta_t)G(\theta_t)^\top\right)
&\approx
p.
\label{eq:late_stage_trace_GGt_identity}
\end{align}
Moreover,
\begin{align}
G(\theta_t)G(\theta_t)^\top
\approx
I_p
\qquad
(t\to\infty).
\label{eq:late_stage_GGt_identity}
\end{align}
\end{lemma}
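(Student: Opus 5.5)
The plan is direct substitution of the two stated approximations into the definitions of $\pi(\theta_t)$ and $G(\theta_t)$, followed by the late-time limit of $B(t)$.

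\textbf{Trace identity.} Start from the noise-floor assumption \eqref{eq:late_stage_second_moment_noise_floor}. Then $\overline{g\odot g}_t\approx \tfrac1b\mathrm{diag}(\Sigma_t)$. Since $\pi$ is diagonal with entries $(\overline{g\odot g}_t)_i^{-1/2}$, this gives the entrywise relation
\begin{align}
(\pi(\theta_t))_{ii}\approx\sqrt b\,(\Sigma_t)_{ii}^{-1/2}.
\end{align}
Summing over $i$ yields \eqref{eq:late_stage_trace_pi_identity} immediately. Note also that, with the scalar matching \eqref{eq:late_stage_scalar_covariance_matching}, each $(\Sigma_t)_{ii}/b\approx s^2$. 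Hence $\pi\approx s^{-1}I_p$, and \eqref{eq:trace_matched_scalar_preconditioner} and \eqref{eq:late_stage_scalar_covariance_matching} are mutually consistent with the definition of $s$.

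\textbf{Product $GG^\top$.} Next compute
\begin{align}
G G^\top=\tfrac{1}{b}B(t)^2\,\pi\,\Sigma_t\,\pi .
\end{align}
Insert $\pi\approx s^{-1}I_p$ and $\Sigma_t\approx b s^2 I_p$ to get $GG^\top\approx B(t)^2 I_p$.

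Then take $t\to\infty$. Since $\alpha_1,\alpha_2>0$, both exponentials vanish and $B(t)\to 1$, which gives \eqref{eq:late_stage_GGt_identity}. Taking the trace gives \eqref{eq:late_stage_trace_GGt_identity}.

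As a cross-check I would redo the trace without full isotropy, using only the noise floor: $\mathrm{tr}(\pi\Sigma_t\pi)=\sum_i(\Sigma_t)_{ii}\pi_{ii}^2\approx \sum_i b=pb$. This yields $\mathrm{tr}(GG^\top)\approx B(t)^2p\to p$. So the trace identity needs only \eqref{eq:late_stage_second_moment_noise_floor}, not isotropy.

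\textbf{Main obstacle.} The step I expect to be delicate is the full matrix identity $GG^\top\approx I_p$. Its off-diagonal entries are $b^{-1}B^2\pi_{ii}\pi_{jj}(\Sigma_t)_{ij}$, and these vanish only under the isotropic-covariance assumption (Observation~\ref{obs:small_isotropic_noise}). I will state explicitly that it is \eqref{eq:late_stage_scalar_covariance_matching} that controls these terms, giving an error of order $\|\Sigma_t/b-s^2I_p\|/s^2$.

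I will likewise note that the approximation in \eqref{eq:late_stage_second_moment_noise_floor} requires $\|\bar g_t\|^2\ll \mathrm{tr}(\Sigma_t)/b$ coordinatewise. This is the memorization-regime hypothesis, and it is checked empirically in Figure~\ref{fig:preconditioned_radius_sde_appendix} rather than derived.
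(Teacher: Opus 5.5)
Your proposal is correct and follows the paper's proof essentially step for step. The noise-floor substitution gives $\pi_{ii}\approx\sqrt b\,(\Sigma_t)_{ii}^{-1/2}$ and hence the trace identity. Inserting $\pi\approx s^{-1}I_p$ and $\Sigma_t\approx b s^2 I_p$ into $GG^\top=\tfrac{1}{b}B(t)^2\pi\Sigma_t\pi$ then gives $B(t)^2 I_p\to I_p$, whose trace is $p$.

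Your cross-check goes slightly beyond what the paper states. Under the noise floor alone, every diagonal entry of $GG^\top$ is $\approx B(t)^2$. Each off-diagonal entry is $\approx B(t)^2$ times the corresponding correlation coefficient of $\Sigma_t$. So the trace identity needs no isotropy, and the matrix identity needs only approximate diagonality of $\Sigma_t$. That is weaker than the isotropy you say is required.
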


\begin{proof}
From equation~\eqref{eq:late_stage_second_moment_noise_floor},
\begin{align}
\overline{g\odot g}_t
\approx
\frac{1}{b}\mathrm{diag}(\Sigma_t).
\end{align}
Therefore
\begin{align}
\pi(\theta_t)
&=
\mathrm{diag}\!\left(\overline{g\odot g}_t\right)^{-1/2}
\nonumber\\
&\approx
\mathrm{diag}\!\left(
\frac{1}{b}\mathrm{diag}(\Sigma_t)
\right)^{-1/2}
\nonumber\\
&=
\sqrt b\,
\mathrm{diag}\!\left(
\mathrm{diag}(\Sigma_t)
\right)^{-1/2}.
\label{eq:pi_noise_floor_expansion}
\end{align}
Taking traces gives
\begin{align}
\mathrm{tr}(\pi(\theta_t))
&\approx
\sqrt b\,
\mathrm{tr}\!\bigl[\left(
\mathrm{diag}(\Sigma_t)
\right)^{-1/2}\bigr].
\end{align}
This proves equation~\eqref{eq:late_stage_trace_pi_identity}.

Next, by definition,
\begin{align}
G(\theta_t)
=
\frac{1}{\sqrt b}B(t)\pi(\theta_t)\Sigma_t^{1/2}.
\end{align}
Hence
\begin{align}
G(\theta_t)G(\theta_t)^\top
&=
\frac{B(t)^2}{b}
\pi(\theta_t)
\Sigma_t^{1/2}
\left(\Sigma_t^{1/2}\right)^\top
\pi(\theta_t)^\top
\nonumber\\
&=
\frac{B(t)^2}{b}
\pi(\theta_t)
\Sigma_t
\pi(\theta_t)^\top .
\label{eq:GGt_exact_late_stage_appendix}
\end{align}
Since $\pi(\theta_t)$ is diagonal, it is symmetric. Thus
\begin{align}
\pi(\theta_t)^\top
=
\pi(\theta_t),
\end{align}
and equation~\eqref{eq:GGt_exact_late_stage_appendix} becomes
\begin{align}
G(\theta_t)G(\theta_t)^\top
=
\frac{B(t)^2}{b}
\pi(\theta_t)
\Sigma_t
\pi(\theta_t).
\label{eq:GGt_exact_symmetric_late_stage_appendix}
\end{align}
Using the trace-matched scalar approximation
\begin{align}
\pi(\theta_t)
\approx
s(\theta_t)^{-1}I_p,
\end{align}
we obtain
\begin{align}
G(\theta_t)G(\theta_t)^\top
&\approx
\frac{B(t)^2}{b}
\left(s(\theta_t)^{-1}I_p\right)
\Sigma_t
\left(s(\theta_t)^{-1}I_p\right)
\nonumber\\
&=
\frac{B(t)^2}{b\,s(\theta_t)^2}
\Sigma_t.
\label{eq:GGt_trace_matched_scalar_appendix}
\end{align}
By
\begin{align}
\frac{1}{b}\Sigma_t
\approx
s(\theta_t)^2I_p,
\end{align}
or equivalently,
\begin{align}
\Sigma_t
\approx
b\,s(\theta_t)^2I_p.
\label{eq:sigma_late_stage_matching_appendix}
\end{align}
Substituting equation~\eqref{eq:sigma_late_stage_matching_appendix} into equation~\eqref{eq:GGt_trace_matched_scalar_appendix} gives
\begin{align}
G(\theta_t)G(\theta_t)^\top
&\approx
\frac{B(t)^2}{b\,s(\theta_t)^2}
b\,s(\theta_t)^2I_p
\nonumber\\
&=
B(t)^2I_p .
\label{eq:GGt_B_squared_identity_appendix}
\end{align}
Finally,
\begin{align}
B(t)^2
=
\frac{1-e^{-\alpha_2 t}}{\left(1-e^{-\alpha_1 t}\right)^2}
\to
1
\qquad
(t\to\infty).
\end{align}
Therefore,
\begin{align}
G(\theta_t)G(\theta_t)^\top
\approx
I_p
\qquad
(t\to\infty).
\end{align}
Taking traces yields
\begin{align}
\mathrm{tr}\!\left(G(\theta_t)G(\theta_t)^\top\right)
\approx
\mathrm{tr}(I_p)
=
p.
\end{align}
This proves equations~\eqref{eq:late_stage_trace_GGt_identity} and~\eqref{eq:late_stage_GGt_identity}.
\end{proof}

%% file: proofs/proof_preconditioned_latestage_radius_SDE.tex
\begin{lemma}[Reduced Late-Stage Radius SDE with Slow-Manifold, Preconditioned Residual (Restated)]
\label{lem:reduced_late_stage_radius_sde_restated}
Let $r_t:=\|\theta_t\|_2$, where $r_t^2$ follows the residual-corrected preconditioned radius SDE in equation~\eqref{eq:preconditioned_radius_sde_with_residual_appendix} of Lemma~\ref{lem:preconditioned_radius_sde_appendix}~(\nameref{lem:preconditioned_radius_sde_appendix}). In the late-stage regime, assume
\begin{align}
B(t)
:=
\frac{\sqrt{1-e^{-\alpha_2 t}}}
{1-e^{-\alpha_1 t}} \to 1,
\qquad
G(\theta_t)G(\theta_t)^\top\approx I_p,
\label{eq:late_stage_zero_residual_observations_appendix}
\end{align}
so that the residuals from $a(t)$ and the diffusion covariance are neglected. Then the reduced late-stage squared-radius SDE is
\begin{align}
\dd r_t^2
\approx
\left[
-\frac{2\lambda}{s(\theta_t)}r_t^2
-
\frac{2}{s(\theta_t)}
\theta_t^\top\bigl(\bar g_t-\lambda\theta_t\bigr)
+
\eta\,p
+
\mathcal R_{\mathrm{SM}}(t)
+
\mathcal R_{\pi}(t)
\right]\dd t
+
2\sqrt{\eta}\,r_t\,\dd W_t^{(r)},
\label{eq:reduced_late_stage_radius_sde_restated}
\end{align}
where
\begin{align}
\mathcal R_{\mathrm{SM}}(t)
&:=
-2B(t)\theta_t^\top
\left[
\mathbb{E}\left[
m_t\oslash\sqrt{v_t}\right]
-
\bar g_t\bigl(1-e^{-\alpha_1 t}\bigr)
\oslash
\sqrt{\overline{g\odot g}_t}
\right],
\label{eq:slow_manifold_residual_def_late_stage_appendix}
\\
\mathcal R_{\pi}(t)
&:=
-2\theta_t^\top
\left[
\pi(\theta_t)-s(\theta_t)^{-1}I_p
\right]
\bar g_t,
\label{eq:precond_residual_def_late_stage_appendix} \\
W_t^{(r)}
&:=
\int_0^t e_r(\theta_u)^\top G(\theta_u)\dd W_u,
\qquad
e_r(\theta)
:=
\frac{\theta}{\|\theta\|_2},
\label{eq:radial_brownian_def_late_stage_appendix}
\end{align}
and under $G(\theta_t)G(\theta_t)^\top\approx I_p$, $W_t^{(r)}$ is a one-dimensional Brownian motion.

\end{lemma}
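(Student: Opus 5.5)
The plan is to start from the residual-corrected preconditioned radius SDE of Lemma~\ref{lem:preconditioned_radius_sde_appendix},
\begin{align}
\dd r_t^2
\approx
\left[
-2a(t)\theta_t^\top\pi(\theta_t)\bar g_t
+
\mathcal R_{\mathrm{SM}}(t)
+
\eta\,\mathrm{tr}\!\left(G(\theta_t)G(\theta_t)^\top\right)
\right]\dd t
+
2\sqrt{\eta}\,\theta_t^\top G(\theta_t)\dd W_t, \notag
\end{align}
and reduce each of its three pieces separately under the late-stage assumptions~\eqref{eq:late_stage_zero_residual_observations_appendix}. First, since $B(t)\to1$ forces both $1-e^{-\alpha_1 t}\to1$ and $a(t)=\sqrt{1-e^{-\alpha_2 t}}\to1$, I will set $a(t)=1$ in the drift. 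The factors of $a(t)$ and $B(t)$ inside $\mathcal R_{\mathrm{SM}}$ are kept as they are, since that residual is carried verbatim.

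Second, for the preconditioned drift I add and subtract the trace-matched scalar preconditioner $s(\theta_t)^{-1}I_p$ of Observation~\ref{obs:small_isotropic_noise}:
\begin{align}
-2\theta_t^\top\pi(\theta_t)\bar g_t
=
-\frac{2}{s(\theta_t)}\theta_t^\top\bar g_t
+
\mathcal R_\pi(t). \notag
\end{align}
This is an exact identity, with $\mathcal R_\pi$ as defined in~\eqref{eq:precond_residual_def_late_stage_appendix}. I then split the coupled gradient as $\bar g_t=(\bar g_t-\lambda\theta_t)+\lambda\theta_t$, which isolates the task part from the $\ell_2$ part. Because $\theta_t^\top\theta_t=r_t^2$, the scalar term becomes
\begin{align}
-\frac{2\lambda}{s}r_t^2-\frac{2}{s}\theta_t^\top(\bar g_t-\lambda\theta_t). \notag
\end{align}
This gives exactly the first two drift terms of~\eqref{eq:reduced_late_stage_radius_sde_restated}.

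Third, the It\^o correction $\eta\,\mathrm{tr}(GG^\top)$ becomes $\eta p$ by Lemma~\ref{lem:late_stage_preconditioned_diffusion_isotropy}. For the martingale part, I write
\begin{align}
2\sqrt\eta\,\theta_t^\top G\,\dd W_t
=
2\sqrt\eta\, r_t\, e_r(\theta_t)^\top G\,\dd W_t
=
2\sqrt\eta\, r_t\,\dd W^{(r)}_t, \notag
\end{align}
using $\theta_t=r_t e_r(\theta_t)$. It then remains to check that $W^{(r)}$ is a standard Brownian motion, via L\'evy's characterization. It is a continuous local martingale with $W^{(r)}_0=0$, being an It\^o integral against $W$. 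Its quadratic variation is
\begin{align}
\dd\langle W^{(r)}\rangle_t
=
e_r^\top GG^\top e_r\,\dd t
\approx
e_r^\top e_r\,\dd t
=
\dd t, \notag
\end{align}
using $GG^\top\approx I_p$ and $\|e_r\|_2=1$. Collecting the three pieces yields~\eqref{eq:reduced_late_stage_radius_sde_restated}.

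The step I expect to be the main obstacle is the Brownian identification. L\'evy's theorem needs the quadratic variation to equal $t$ exactly, whereas here $GG^\top\approx I_p$ holds only approximately and only as $t\to\infty$. In the early stage, $B(t)^2\neq1$ and $\Sigma_t$ is anisotropic. I would handle this by stating the claim in the same approximate sense as the rest of the lemma. Concretely, I would bound the deviation of $\dd\langle W^{(r)}\rangle_t/\dd t$ from $1$ by $\|GG^\top-I_p\|_{\mathrm{op}}$, which decays in the late stage by Observation~\ref{obs:small_isotropic_noise}. Any leftover drift error from this mismatch, and from $a(t)\neq1$, is absorbed into the residual terms that are later set to zero. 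Everything else in the argument is exact algebra on top of the earlier lemmas.
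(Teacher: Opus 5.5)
Your proposal is correct and follows the paper's proof essentially step for step. You set $a(t)\to 1$, split $\pi(\theta_t)=s(\theta_t)^{-1}I_p+\bigl[\pi(\theta_t)-s(\theta_t)^{-1}I_p\bigr]$ to produce $\mathcal R_\pi$, split $\bar g_t=\lambda\theta_t+(\bar g_t-\lambda\theta_t)$, replace $\eta\,\mathrm{tr}(GG^\top)$ by $\eta p$, and identify $W^{(r)}$ via L\'evy's characterization with $\langle W^{(r)}\rangle_t\approx t$ (the paper also only asserts this approximately).

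One small precision: $\mathcal R_{\mathrm{SM}}$ and $\mathcal R_\pi$ are fixed by their definitions, so the reduced drift differs from the starting one by $2(a(t)-1)\theta_t^\top\pi(\theta_t)\bar g_t$ plus $\eta\bigl(p-\mathrm{tr}(GG^\top)\bigr)$. These errors are simply dropped under the $\approx$, as the paper does, rather than literally absorbed into the residuals.
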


\begin{remark}
Moreover,
\begin{align}
\mathcal R_{\mathrm{SM}}(t)+\mathcal R_{\pi}(t)
&=
-2B(t)\theta_t^\top
\mathbb{E}\left[m_t\oslash\sqrt{v_t}\right]
+
2a(t)\theta_t^\top\pi(\theta_t)\bar g_t
\nonumber\\
&\quad
-2\theta_t^\top\pi(\theta_t)\bar g_t
+
\frac{2}{s(\theta_t)}\theta_t^\top\bar g_t
\nonumber\\
&=
-2B(t)\theta_t^\top
\mathbb{E}\left[m_t\oslash\sqrt{v_t}\right]
+
2\bigl(a(t)-1\bigr)\theta_t^\top\pi(\theta_t)\bar g_t
+
\frac{2}{s(\theta_t)}\theta_t^\top\bar g_t.
\label{eq:residual_sum_exact_appendix}
\end{align}
In particular, in the late-stage limit $a(t)\to 1$,
\begin{align}
\mathcal R_{\mathrm{SM}}(t)+\mathcal R_{\pi}(t)
\approx
-2B(t)\theta_t^\top
\mathbb{E}\left[m_t\oslash\sqrt{v_t}\right]
+
\frac{2}{s(\theta_t)}\theta_t^\top\bar g_t
.
\label{eq:residual_sum_late_stage_appendix}
\end{align}
\end{remark}

\begin{proof}
Starting from the residual-corrected preconditioned radius SDE in equation~\eqref{eq:preconditioned_radius_sde_with_residual_appendix} of Lemma~\ref{lem:preconditioned_radius_sde_appendix}~(\nameref{lem:preconditioned_radius_sde_appendix}),
\begin{align}
\dd r_t^2
&\approx
\left[
-2a(t)\theta_t^\top\pi(\theta_t)\bar g_t
+
\mathcal R_{\mathrm{SM}}(t)
+
\eta\,
\mathrm{tr}\!\left(G(\theta_t)G(\theta_t)^\top\right)
\right]\dd t
\nonumber\\
&\quad+
2\sqrt{\eta}\,\theta_t^\top G(\theta_t)\dd W_t.
\label{eq:late_stage_radius_start_appendix}
\end{align}
Under $a(t)\to 1$,
\begin{align}
-2a(t)\theta_t^\top\pi(\theta_t)\bar g_t
\approx
-2\theta_t^\top\pi(\theta_t)\bar g_t.
\label{eq:late_stage_drop_a_appendix}
\end{align}
Decompose the preconditioner as
\begin{align}
\pi(\theta_t)
=
s(\theta_t)^{-1}I_p
+
\left[
\pi(\theta_t)-s(\theta_t)^{-1}I_p
\right].
\label{eq:precond_decomposition_late_stage_appendix}
\end{align}
Substituting this decomposition gives
\begin{align}
-2\theta_t^\top\pi(\theta_t)\bar g_t
&=
-\frac{2}{s(\theta_t)}\theta_t^\top\bar g_t
+
\mathcal R_{\pi}(t),
\label{eq:derive_precond_residual_late_stage_appendix}
\end{align}
where
\begin{align}
\mathcal R_{\pi}(t)
=
-2\theta_t^\top
\left[
\pi(\theta_t)-s(\theta_t)^{-1}I_p
\right]
\bar g_t.
\label{eq:precond_residual_derived_late_stage_appendix}
\end{align}
Next, decompose the mean gradient as
\begin{align}
\bar g_t
=
\lambda\theta_t
+
\bigl(\bar g_t-\lambda\theta_t\bigr).
\label{eq:mean_gradient_decomposition_late_stage_appendix}
\end{align}
Then
\begin{align}
-\frac{2}{s(\theta_t)}\theta_t^\top\bar g_t
&=
-\frac{2\lambda}{s(\theta_t)}r_t^2
-
\frac{2}{s(\theta_t)}
\theta_t^\top
\bigl(\bar g_t-\lambda\theta_t\bigr).
\label{eq:derive_task_gradient_term_late_stage_appendix}
\end{align}
Combining equations~\eqref{eq:derive_precond_residual_late_stage_appendix} and~\eqref{eq:derive_task_gradient_term_late_stage_appendix}, we obtain
\begin{align}
-2a(t)\theta_t^\top\pi(\theta_t)\bar g_t
\approx
-\frac{2\lambda}{s(\theta_t)}r_t^2
-
\frac{2}{s(\theta_t)}
\theta_t^\top
\bigl(\bar g_t-\lambda\theta_t\bigr)
+
\mathcal R_{\pi}(t).
\label{eq:late_stage_drift_with_precond_residual_appendix}
\end{align}

Under $G(\theta_t)G(\theta_t)^\top\approx I_p$,
\begin{align}
\eta\,
\mathrm{tr}\!\left(G(\theta_t)G(\theta_t)^\top\right)
\approx
\eta\,p.
\label{eq:late_stage_ito_trace_reduction_appendix}
\end{align}
Also,
\begin{align}
2\sqrt{\eta}\,\theta_t^\top G(\theta_t)\dd W_t
&=
2\sqrt{\eta}\,r_t e_r(\theta_t)^\top G(\theta_t)\dd W_t
\nonumber\\
&=
2\sqrt{\eta}\,r_t\,\dd W_t^{(r)}.
\label{eq:late_stage_martingale_reduction_appendix}
\end{align}
The quadratic variation is
\begin{align}
\langle W^{(r)}\rangle_t
&=
\int_0^t
e_r(\theta_u)^\top
G(\theta_u)G(\theta_u)^\top
e_r(\theta_u)\dd u
\nonumber\\
&\approx
\int_0^t
e_r(\theta_u)^\top e_r(\theta_u)\dd u
=
t.
\label{eq:late_stage_radial_brownian_qv_appendix}
\end{align}
Thus, by L\'evy's characterization~\citep{karatzas1991brownian}, $W_t^{(r)}$ is a one-dimensional Brownian motion.

Substituting equations~\eqref{eq:late_stage_drift_with_precond_residual_appendix}, \eqref{eq:late_stage_ito_trace_reduction_appendix}, and~\eqref{eq:late_stage_martingale_reduction_appendix} into equation~\eqref{eq:late_stage_radius_start_appendix} yields equation~\eqref{eq:reduced_late_stage_radius_sde_restated}
\begin{align}
\dd r_t^2
\approx
\left[
-\frac{2\lambda}{s(\theta_t)}r_t^2
-
\frac{2}{s(\theta_t)}
\theta_t^\top\bigl(\bar g_t-\lambda\theta_t\bigr)
+
\eta\,p
+
\mathcal R_{\mathrm{SM}}(t)
+
\mathcal R_{\pi}(t)
\right]\dd t
+
2\sqrt{\eta}\,r_t\,\dd W_t^{(r)}.
\end{align}

\end{proof}

\begin{remark}
Empirical study in Figure~\ref{fig:late_stage_radius_sde_appendix} shows that $\mathcal R_{\mathrm{SM}}(t) + \mathcal R_{\pi}(t)$ is negligible during the memorization regime, so that the radius SDE in grokking can be approximated through
\begin{align}
\dd r_t^2
\approx
\left[
-\frac{2\lambda}{s(\theta_t)}r_t^2
-
\frac{2}{s(\theta_t)}
\theta_t^\top\bigl(\bar g_t-\lambda\theta_t\bigr)
+
\eta\,p
\right]\dd t
+
2\sqrt{\eta}\,r_t\,\dd W_t^{(r)}.
\end{align}

We now analyze the cancellation of $\mathcal R_{\mathrm{SM}}(t) + \mathcal R_{\pi}(t)$. Expanding $\mathcal R_{\mathrm{SM}}(t)$ gives
\begin{align}
\mathcal R_{\mathrm{SM}}(t)
&=
-2B(t)\theta_t^\top
\mathbb{E}\left[m_t\oslash\sqrt{v_t}\right]
\nonumber\\
&\quad
+
2B(t)\theta_t^\top
\left[
\bar g_t\bigl(1-e^{-\alpha_1 t}\bigr)
\oslash
\sqrt{\overline{g\odot g}_t}
\right].
\label{eq:slow_manifold_residual_expanded_appendix}
\end{align}
Using
\begin{align}
B(t)\bigl(1-e^{-\alpha_1 t}\bigr)
=
a(t),
\qquad
\bar g_t\oslash\sqrt{\overline{g\odot g}_t}
=
\pi(\theta_t)\bar g_t,
\end{align}
we obtain
\begin{align}
\mathcal R_{\mathrm{SM}}(t)
=
-2B(t)\theta_t^\top
\mathbb{E}\left[m_t\oslash\sqrt{v_t}\right]
+
2a(t)\theta_t^\top\pi(\theta_t)\bar g_t.
\label{eq:slow_manifold_residual_expanded_precond_appendix}
\end{align}
Adding the explicit expression for $\mathcal R_{\pi}(t)$,
\begin{align}
\mathcal R_{\mathrm{SM}}(t)+\mathcal R_{\pi}(t)
&=
-2B(t)\theta_t^\top\mathbb{E}\left[m_t\oslash\sqrt{v_t}\right]
+
2a(t)\theta_t^\top\pi(\theta_t)\bar g_t
\nonumber\\
&\quad
-2\theta_t^\top
\left[
\pi(\theta_t)-s(\theta_t)^{-1}I_p
\right]
\bar g_t
\nonumber\\
&=
-2B(t)\theta_t^\top\mathbb{E}\left[m_t\oslash\sqrt{v_t}\right]
+
2\bigl(a(t)-1\bigr)\theta_t^\top\pi(\theta_t)\bar g_t
+
\frac{2}{s(\theta_t)}\theta_t^\top\bar g_t.
\label{eq:residual_cancellation_derivation_appendix}
\end{align}
This proves equation~\eqref{eq:residual_sum_exact_appendix}. In the late-stage limit $a(t), B(t)\to 1$, the middle term vanishes, and therefore
\begin{align}
\mathcal R_{\mathrm{SM}}(t)+\mathcal R_{\pi}(t)
\approx
-2B(t)\theta_t^\top\mathbb{E}\left[m_t\oslash\sqrt{v_t}\right]
+
\frac{2}{s(\theta_t)}\theta_t^\top\bar g_t.
\end{align}

Hence the residual sum is small when gradient and its first-moment estimate are near zeros.

\end{remark}

%% file: proofs/proof_rho_M.tex
\begin{theorem}[Scaling Law of Memorization Radius (Restated)]
\label{thm:memorization_radius_scaling_restated}
Consider the joint Adam SDE in equation~\eqref{eq:adam_joint_sde_assembled_appendix} of Lemma~\ref{lem:adam_joint_sde_restated}~(\nameref{lem:adam_joint_sde_restated})
on the joint state $S_t=(\theta_t^\top,m_t^\top,v_t^\top)^\top\in\mathbb R^{3p}$,
\begin{align}
\dd S_t
=
\mu(S_t)\dd t
+
\sqrt{\eta/b}\,\sigma(S_t)\dd W_t,
\end{align}
and assume that the bias-correction factor satisfies $B(t)\to 1$ in the early hitting regime. Let
\begin{align}
\tau_M(S_0)
:=
\inf\{t\ge 0:\theta_t(S_0) \in \partial M\}
\end{align}
be the first-hitting time of the memorization boundary. Define the exit-value function
\begin{align}
u(S)
:=
\mathbb E\!\left[
\|\theta_{\tau_M(S_0)}\|_2^2
\,\middle|\,
S_0=S
\right],
\label{eq:memorization_exit_value_function_restated}
\end{align}
and the mean squared memorization radius
\begin{align}
\rho_M^2:=
\mathbb E_{S_0\sim\Theta_S}\!\left[u(S_0)\right],
\label{eq:rho_M_exit_value_def_restated}
\end{align}
where $\Theta_S$ is the initialization distribution of the joint state, with $S_0=(\theta_0^\top,0_p^\top,0_p^\top)^\top$. 

Let $E_\theta\in\mathbb R^{3p\times 3p}$ be the $\theta$-block projector
\begin{align}
E_\theta
:=
\begin{pmatrix}
I_p & 0 & 0\\
0 & 0 & 0\\
0 & 0 & 0
\end{pmatrix},
\label{eq:theta_projector_memorization_restated}
\end{align}
so that for any joint state $S=(\theta,m,v)$,
\begin{align}
\|\theta\|_2^2
=
S^\top E_\theta S.
\end{align}

Let $\Phi_t(S_0)$ denote the deterministic Adam flow generated by the drift field $\mu(S)$,
\begin{align}
\frac{\dd}{\dd t}\Phi_t(S_0)
=
\mu(\Phi_t(S_0)),
\qquad
\Phi_0(S_0)=S_0.
\label{eq:deterministic_adam_flow_restated}
\end{align}
Define the deterministic first-hitting time
\begin{align}
\tau_M^{(0)}(S_0)
:=
\inf\{t\ge 0: \theta(\Phi_t(S_0))\in\partial M\},
\label{eq:tau_M_det_exit_time_thm_appendix}
\end{align}
and the deterministic exit-radius-squared map
\begin{align}
R^2(S_0)
:=
\Phi_{\tau_M^{(0)}(S_0)}(S_0)^\top
E_\theta
\Phi_{\tau_M^{(0)}(S_0)}(S_0).
\label{eq:exit_radius_squared_map_thm_appendix}
\end{align}
Let
\begin{align}
\Sigma_S(S)
:=
\sigma(S)\sigma(S)^\top
\label{eq:joint_diffusion_def_appendix}
\end{align}
be the joint-state diffusion covariance. Then, for sufficiently small $\eta/b$,
\begin{align}
\rho_M^2 =
\rho_M^{(0)\,2}
+
(\eta/b)c_M
+
(\eta/b)^2c_M^{(2)}
+
O\bigl((\eta/b)^3\bigr),
\label{eq:law_rho_M_restated}
\end{align}
where
\begin{align}
\rho_M^{(0)\,2}
&:=
\mathbb E_{S_0\sim\Theta_S}
\!\left[
R^2(S_0)
\right],
\label{eq:rho_M0_def_restated}
\\
c_M
&:=
\mathbb E_{S_0\sim\Theta_S}
\int_0^{\tau_M^{(0)}(S_0)}
\frac12\,
\mathrm{tr}\!\left(
\Sigma_S(\Phi_t(S_0))
\nabla_S^2R^2(\Phi_t(S_0))
\right)
\dd t,
\label{eq:cM_def_restated}
\\
c_M^{(2)}
&:=
\mathbb E_{S_0\sim\Theta_S}
\int_0^{\tau_M^{(0)}(S_0)}
\frac12\,
\mathrm{tr}\!\left(
\Sigma_S(\Phi_t(S_0))
\nabla_S^2u_1(\Phi_t(S_0))
\right)
\dd t.
\label{eq:cM2_def_restated}
\end{align}
Here $u_1$ is the first-order perturbation corrector
\begin{align}
u_1(S_0)
:=
\int_0^{\tau_M^{(0)}(S_0)}
\frac12\,
\mathrm{tr}\!\left(
\Sigma_S(\Phi_t(S_0))
\nabla_S^2R^2(\Phi_t(S_0))
\right)
\dd t.
\label{eq:u1_line_integral_restated}
\end{align}
The constants $\rho_M^{(0)\,2}$, $c_M$, and $c_M^{(2)}$ are task-determined and independent of $\eta$ and $b$; $\ell_2$ regularization coefficient $\lambda$ enters $(\rho_M^{(0)})^2$ through deterministic gradient flow with scaling law $(\rho_M^{(0)})^2
\propto
\exp\!\left(-O(\lambda)\right)$.
\end{theorem}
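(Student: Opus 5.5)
The plan is to treat $u(S)$ in \eqref{eq:memorization_exit_value_function_restated} as the solution of a Dirichlet exit problem and expand it in $\varepsilon:=\eta/b$. By Dynkin's formula applied to the joint-state SDE of Lemma~\ref{lem:adam_joint_sde_restated}, $u$ satisfies
\begin{align}
\mu(S)^\top\nabla_S u(S)+\frac{\varepsilon}{2}\,\mathrm{tr}\bigl(\Sigma_S(S)\nabla_S^2u(S)\bigr)=0 \quad\text{in the pre-exit region},\qquad u(S)=S^\top E_\theta S \quad\text{on } \{\theta\in\partial M\}.
\end{align}
I would then posit the regular expansion $u=u_0+\varepsilon u_1+\varepsilon^2u_2+O(\varepsilon^3)$ and match powers of $\varepsilon$. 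This gives a hierarchy of first-order transport equations along the deterministic drift $\mu$. The advantage is that each equation in the hierarchy can be solved by characteristics using the deterministic flow $\Phi_t$ of \eqref{eq:deterministic_adam_flow_restated}.

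\emph{Order $\varepsilon^0$.} The equation is $\mu^\top\nabla u_0=0$ with boundary data $\|\theta\|_2^2$. So $u_0$ is constant along characteristics, and it equals the squared radius at the deterministic hitting time $\tau_M^{(0)}$. That is, $u_0=R^2$ as defined in \eqref{eq:exit_radius_squared_map_thm_appendix}. Averaging over $\Theta_S$ gives $\rho_M^{(0)\,2}$.

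\emph{Order $\varepsilon^1$.} The equation is $\mu^\top\nabla u_1=-\tfrac12\mathrm{tr}(\Sigma_S\nabla^2 u_0)$ with $u_1=0$ on the boundary. Since $\frac{\dd}{\dd t}u_1(\Phi_t)=\mu^\top\nabla u_1$, integrating from $0$ to $\tau_M^{(0)}$ and using the zero boundary value gives the line integral \eqref{eq:u1_line_integral_restated}.

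\emph{Order $\varepsilon^2$.} The same argument with source $-\tfrac12\mathrm{tr}(\Sigma_S\nabla^2u_1)$ produces $u_2$. Averaging $u_1$ and $u_2$ over $\Theta_S$ yields $c_M$ and $c_M^{(2)}$. These are independent of $\eta$ and $b$ once $\alpha_i$ are held fixed under the scaling of Lemma~\ref{lem:adam_joint_sde_restated}. The assumption $B(t)\to1$ makes $\mu$ autonomous, so $\Phi_t$ is a genuine flow and the method of characteristics applies.

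The main obstacle is justifying the expansion itself. Two issues arise. First, $\sigma$ has a zero $\theta$-block, so the generator is degenerate. Second, $R^2$ need not be smooth where characteristics hit $\partial M$ tangentially. I would assume transversal hitting, $\nabla_\theta\widehat{\mathscr L}_f\cdot\dot\theta\neq0$ on $\partial M$. With that assumption, the implicit function theorem makes $\tau_M^{(0)}$, and hence $R^2$ and $u_1$, $C^2$ in $S_0$, so the Hessians in \eqref{eq:cM_def_restated}--\eqref{eq:cM2_def_restated} exist. The remainder is then controlled by applying Dynkin's formula to $u-u_0-\varepsilon u_1-\varepsilon^2u_2$, whose generator residual is $O(\varepsilon^3)$, together with finiteness of $\mathbb E[\tau_M]$ near $\tau_M^{(0)}$. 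I would state these as regularity hypotheses rather than prove them in full.

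Finally, for the $\lambda$ dependence I would use the radius identity of Lemma~\ref{lem:induced_radius_sde_restated} along $\Phi_t$. In the small task-gradient regime, $\dd\|\theta\|^2\approx-2\lambda\|\theta\|^2\dd t$, which gives $R^2\approx\|\theta_0\|^2e^{-2\lambda\tau_M^{(0)}}$. Hence $\rho_M^{(0)\,2}\propto\exp(-O(\lambda))$.
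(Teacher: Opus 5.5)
Your proposal is essentially the paper's proof. It sets up the same Dirichlet exit problem for $u$ and the same regular expansion in $\varepsilon=\eta/b$, solved order by order along the deterministic Adam characteristics $\Phi_t$ (giving $u_0=R^2$ and the line integrals for $u_1$ and $u_2$), then averages over $\Theta_S$ and uses the same small-task-gradient $\ell_2$-contraction heuristic for $\exp(-O(\lambda))$. Your transversality, smoothness and Dynkin-remainder hypotheses make explicit what the paper's purely formal expansion leaves unstated; however, neither argument justifies the claimed $b$-independence of the constants, since the drift's $v$-block contains $\overline{g\odot g}_t=\bar g_t\odot\bar g_t+\frac{1}{b}\mathrm{diag}(\Sigma_t)$, so $\Phi_t$, $R^2$ and $u_1$ themselves depend on $b$.
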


\begin{proof}
Let
\begin{align}
\tau_M(S_0) := \inf\{t\ge 0:\theta_t(S_0) \in \partial M\}
\label{eq:tau_M_hitting_time_proof_appendix}
\end{align}
be the first-hitting time of the memorization boundary for the trajectory $\{\theta_t\}$ starting from $S_0=(\theta_0^\top,0_p^\top,0_p^\top)^\top$.

\paragraph{Bilinear Radius Projector.} Since the parameter $\theta_t$ is the $\theta$-component of the joint state $S_t$, define the $\theta$-block projector
\begin{align}
E_\theta
:=
\begin{pmatrix}
I_p & 0 & 0\\
0 & 0 & 0\\
0 & 0 & 0
\end{pmatrix}
\in\mathbb R^{3p\times 3p}.
\label{eq:theta_projector_memorization_proof_appendix}
\end{align}
Thus, for any joint state $S$, the squared parameter radius is the joint-state quadratic form
\begin{align}
r^2(S)
:=
\|\theta\|_2^2
=
S^\top E_\theta S.
\label{eq:joint_state_radius_def_memorization_proof_appendix}
\end{align}
In particular,
\begin{align}
\nabla_S r^2(S)=2E_\theta S,
\qquad
\nabla_S^2 r^2(S)=2E_\theta.
\label{eq:joint_state_radius_grad_hess_memorization_proof_appendix}
\end{align}

\paragraph{Define Dirichlet Boundary Problem.} We introduce the exit-value function
\begin{align}
u(S)
:=
\mathbb{E}\!\left[r^2(S_{\tau_M})\,\middle|\,S_0=S\right]
=
\mathbb{E}\!\left[S_{\tau_M}^{\top}E_\theta S_{\tau_M}\,\middle|\,S_0=S\right],
\label{eq:exit_value_function_def_appendix}
\end{align}
which is the expected squared parameter norm at the first-hitting time of $\partial M$, starting from $S$.

We now derive the PDE satisfied by $u$. The joint Adam SDE has the form
\begin{align}
\dd S_t
=
\mu(S_t)\,\dd t
+
\sqrt{\eta/b}\,\sigma(S_t)\,\dd W_t.
\label{eq:joint_sde_proof_local_appendix}
\end{align}
Therefore its infinitesimal generator is

\begin{figure}[h]
    \centering

\begin{align}
\mathcal{L}_S f(S)
=
\mu(S)^\top\nabla_S f(S)
+
\frac12\,\tikzmarknode{etab}{\boxed{\frac{\eta}{b}}}\,
\mathrm{tr}\!\left(
\Sigma_S(S)\nabla_S^2 f(S)
\right),
\qquad
\Sigma_S(S):=\sigma(S)\sigma(S)^\top .
\label{eq:generator_def_appendix}
\end{align}

\begin{tikzpicture}[overlay,remember picture]
\draw[<-] ([yshift=-2pt]etab.south) -- ++(0,-0.5)  
                                -- ++(4.2,0);   
\node[anchor=north west,align=left,font=\footnotesize]
  at ([yshift=-0.55cm,xshift=2pt]etab.south)
  {Learning rate $\eta$ and batch size $b$\\ modulate infinitesimal generator};
\end{tikzpicture}

\vspace{3\baselineskip}

\end{figure}

\begin{remark}
The structure of the infinitesimal generator $\mathcal{L}_S f(S)$ shows that
this operator contains a term modulated by the learning rate $\eta$ and the
batch size $b$ through the coefficient $\eta/b$. This motivates us to treat
$\mathcal{L}_S f(S)$ as an operator expansion with respect to the coefficient
$\eta/b$, so that we can apply a regular perturbation expansion to study the
corresponding expansion of the memorization radius with respect to $\eta/b$.
\end{remark}

\paragraph{Dirichlet Boundary PDE.} Since $u(S)$ is the expected boundary value of the process stopped at $\partial M$, it solves the Dirichlet boundary value problem~\citep{oksendal2003sde}
\begin{align}
\mathcal{L}_S u(S) &= 0,
\qquad &&\theta\in M^\circ,
\label{eq:dirichlet_pde_appendix}\\
u(S) &= S^\top E_\theta S,
\qquad &&\theta\in\partial M.
\label{eq:dirichlet_bc_appendix}
\end{align}
Here equation~\eqref{eq:dirichlet_pde_appendix} states that $u(S_t)$ is harmonic with respect to the stopped Adam diffusion before hitting the boundary, while equation~\eqref{eq:dirichlet_bc_appendix} assigns the squared radius as the boundary payoff.

Let $\varepsilon:=\eta/b$. We split the generator into its deterministic and stochastic parts:
\begin{align}
\mathcal{L}_S
=
\mathcal{L}_S^{(0)}
+
\varepsilon \mathcal{L}_S^{(1)},
\qquad
\mathcal{L}_S^{(0)}f
:=
\mu^\top\nabla_S f,
\qquad
\mathcal{L}_S^{(1)}f
:=
\frac12\,
\mathrm{tr}\!\left(
\Sigma_S\nabla_S^2 f
\right).
\label{eq:generator_split_appendix}
\end{align}
Thus the Dirichlet PDE becomes
\begin{align}
\left(\mathcal{L}_S^{(0)}+\varepsilon\mathcal{L}_S^{(1)}\right)u(S)=0,
\qquad \theta\in M^\circ,
\label{eq:dirichlet_pde_eps_appendix}
\end{align}
with boundary condition
\begin{align}
u(S)=S^\top E_\theta S,
\qquad \theta\in\partial M.
\label{eq:dirichlet_bc_eps_appendix}
\end{align}

\paragraph{Solving PDE via Regular Perturbation Expansion.} 
We solve this PDE by a regular perturbation expansion in the small parameter $\varepsilon$. Write
\begin{align}
u(S)
=
u_0(S)
+
\varepsilon u_1(S)
+
\varepsilon^2 u_2(S)
+
O(\varepsilon^3).
\label{eq:perturbation_ansatz_appendix}
\end{align}
Substituting equation~\eqref{eq:perturbation_ansatz_appendix} into equation~\eqref{eq:dirichlet_pde_eps_appendix} gives
\begin{align}
0
&=
\left(\mathcal{L}_S^{(0)}+\varepsilon\mathcal{L}_S^{(1)}\right)
\left(u_0+\varepsilon u_1+\varepsilon^2u_2+O(\varepsilon^3)\right)
\nonumber\\
&=
\mathcal{L}_S^{(0)}u_0
+
\varepsilon
\left(
\mathcal{L}_S^{(0)}u_1+\mathcal{L}_S^{(1)}u_0
\right)
+
\varepsilon^2
\left(
\mathcal{L}_S^{(0)}u_2+\mathcal{L}_S^{(1)}u_1
\right)
+
O(\varepsilon^3).
\label{eq:perturbation_expanded_pde_appendix}
\end{align}
Since this identity must hold for all sufficiently small $\varepsilon$, each coefficient of $\varepsilon$ must vanish. Therefore,
\begin{align}
\varepsilon^0:\quad
&\mathcal{L}_S^{(0)}u_0=0,
\label{eq:order0_pde_appendix}\\
\varepsilon^1:\quad
&\mathcal{L}_S^{(0)}u_1=-\mathcal{L}_S^{(1)}u_0,
\label{eq:order1_pde_appendix}\\
\varepsilon^2:\quad
&\mathcal{L}_S^{(0)}u_2=-\mathcal{L}_S^{(1)}u_1.
\label{eq:order2_pde_appendix}
\end{align}
The boundary condition is expanded in the same way:
\begin{align}
u_0+\varepsilon u_1+\varepsilon^2u_2+O(\varepsilon^3)
=
S^\top E_\theta S,
\qquad \theta\in\partial M.
\label{eq:perturbation_expanded_bc_appendix}
\end{align}
Matching powers of $\varepsilon$ on the boundary gives
\begin{align}
u_0\big|_{\theta\in\partial M}
&=
S^\top E_\theta S,
\label{eq:order0_bc_appendix}\\
u_1\big|_{\theta\in\partial M}
&=
0,
\label{eq:order1_bc_appendix}\\
u_2\big|_{\theta\in\partial M}
&=
0.
\label{eq:order2_bc_appendix}
\end{align}

\paragraph{Zero-Order Expansion.}
We now solve these equations by characteristics. The characteristic curves of $\mathcal{L}_S^{(0)}=\mu^\top\nabla_S$ are the deterministic Adam trajectories
\begin{align}
\frac{\dd}{\dd t}\Phi_t(S_0)
=
\mu(\Phi_t(S_0)),
\qquad
\Phi_0(S_0)=S_0
\label{eq:deterministic_flow_def_appendix}
,
\end{align}
so that
\begin{align}
\dot\theta^{(0)}
&=
-m^{(0)}\oslash \sqrt{v^{(0)}},
\nonumber\\
\dot m^{(0)}
&=
-\alpha_1\bigl(m^{(0)}-\bar g(\theta^{(0)})\bigr),
\nonumber\\
\dot v^{(0)}
&=
-\alpha_2\bigl(v^{(0)}-\overline{g\odot g}(\theta^{(0)})\bigr).
\label{eq:joint_deterministic_ode_appendix}
\end{align}
For any smooth function $w$, along the deterministic flow we have
\begin{align}
\frac{\dd}{\dd t}w(\Phi_t(S_0))
=
\mathcal{L}_S^{(0)}w(\Phi_t(S_0)).
\label{eq:transport_identity_appendix}
\end{align}

Let $\tau_M^{(0)}(S_0)$ be the deterministic hitting time
\begin{align}
\tau_M^{(0)}(S_0)
:=
\inf\{t\ge 0:(\Phi_t(S_0))_\theta\in\partial M\}.
\label{eq:tau_M_det_hitting_time_proof_appendix}
\end{align}
At order $\varepsilon^0$, equation~\eqref{eq:order0_pde_appendix} gives
\begin{align}
\frac{\dd}{\dd t}u_0(\Phi_t(S_0))=0.
\end{align}
Thus $u_0$ is constant along the deterministic trajectory. Evaluating it at the deterministic hitting time and using the boundary condition in equation~\eqref{eq:order0_bc_appendix}, we obtain
\begin{align}
u_0(S_0)
&=
u_0(\Phi_{\tau_M^{(0)}(S_0)}(S_0))
\nonumber\\
&=
\Phi_{\tau_M^{(0)}(S_0)}(S_0)^\top
E_\theta
\Phi_{\tau_M^{(0)}(S_0)}(S_0)
\nonumber\\
&=
\left\|
\bigl(\Phi_{\tau_M^{(0)}(S_0)}(S_0)\bigr)_\theta
\right\|_2^2.
\label{eq:u0_solution_appendix}
\end{align}
Define the deterministic exit-radius-squared map
\begin{align}
R^2(S_0)
:=
\Phi_{\tau_M^{(0)}(S_0)}(S_0)^\top
E_\theta
\Phi_{\tau_M^{(0)}(S_0)}(S_0).
\label{eq:exit_radius_squared_map_proof_appendix}
\end{align}
Equivalently,
\begin{align}
R^2(S_0)
=
\left\|
\bigl(\Phi_{\tau_M^{(0)}(S_0)}(S_0)\bigr)_\theta
\right\|_2^2.
\label{eq:exit_radius_squared_map_equiv_proof_appendix}
\end{align}
Then $u_0(S_0)=R^2(S_0)$.

\paragraph{First-Order Expansion.}
At order $\varepsilon^1$, equation~\eqref{eq:order1_pde_appendix} and the transport identity give
\begin{align}
\frac{\dd}{\dd t}u_1(\Phi_t(S_0))
=
-\mathcal{L}_S^{(1)}u_0(\Phi_t(S_0)).
\label{eq:u1_transport_appendix}
\end{align}
Integrating from $0$ to $\tau_M^{(0)}(S_0)$ gives
\begin{align}
u_1(\Phi_{\tau_M^{(0)}(S_0)}(S_0))
-
u_1(S_0)
=
-
\int_0^{\tau_M^{(0)}(S_0)}
\mathcal{L}_S^{(1)}u_0(\Phi_t(S_0))
\,\dd t.
\label{eq:u1_integrated_appendix}
\end{align}
Since $u_1=0$ on $\partial M$, the first term on the left vanishes. Therefore,
\begin{align}
u_1(S_0)
=
\int_0^{\tau_M^{(0)}(S_0)}
\mathcal{L}_S^{(1)}u_0(\Phi_t(S_0))
\,\dd t.
\label{eq:u1_before_trace_appendix}
\end{align}
Using the definition of $\mathcal{L}_S^{(1)}$ and $u_0=R^2$ along the deterministic flow,
\begin{align}
u_1(S_0)
=
\int_0^{\tau_M^{(0)}(S_0)}
\frac12\,
\mathrm{tr}\!\left(
\Sigma_S(\Phi_t(S_0))
\nabla_S^2 R^2(\Phi_t(S_0))
\right)
\,\dd t.
\label{eq:u1_line_integral_appendix}
\end{align}

\paragraph{Second-Order Expansion.}
At order $\varepsilon^2$, the same argument gives
\begin{align}
\frac{\dd}{\dd t}u_2(\Phi_t(S_0))
=
-\mathcal{L}_S^{(1)}u_1(\Phi_t(S_0)).
\label{eq:u2_transport_appendix}
\end{align}
Since $u_2=0$ on $\partial M$, integrating along the deterministic characteristic yields
\begin{align}
u_2(S_0)
=
\int_0^{\tau_M^{(0)}(S_0)}
\mathcal{L}_S^{(1)}u_1(\Phi_t(S_0))
\,\dd t.
\label{eq:u2_before_trace_appendix}
\end{align}
Thus,
\begin{align}
u_2(S_0)
=
\int_0^{\tau_M^{(0)}(S_0)}
\frac12\,
\mathrm{tr}\!\left(
\Sigma_S(\Phi_t(S_0))
\nabla_S^2 u_1(\Phi_t(S_0))
\right)
\,\dd t.
\label{eq:u2_line_integral_appendix}
\end{align}

\paragraph{Produce Claims.} 
Combining the three orders, we have
\begin{align}
u(S_0)
=
R^2(S_0)
+
\varepsilon u_1(S_0)
+
\varepsilon^2u_2(S_0)
+
O(\varepsilon^3).
\label{eq:u_expansion_full_appendix}
\end{align}

Finally, the mean squared memorization radius is obtained by averaging this exit-value function over the initialization distribution:
\begin{align}
\rho_M^2&:=
\mathbb{E}\!\left[\|\theta_{\tau_M}\|_2^2\right]
=
\mathbb{E}\!\left[S_{\tau_M}^{\top}E_\theta S_{\tau_M}\right]
=
\mathbb{E}_{S_0\sim\Theta_S}\!\left[u(S_0)\right]
\nonumber\\
&=
\mathbb{E}_{S_0\sim\Theta_S}\!\left[R^2(S_0)\right]
+
\varepsilon
\mathbb{E}_{S_0\sim\Theta_S}\!\left[u_1(S_0)\right]
+
\varepsilon^2
\mathbb{E}_{S_0\sim\Theta_S}\!\left[u_2(S_0)\right]
+
O(\varepsilon^3).
\label{eq:rho_M_average_appendix}
\end{align}
Define
\begin{align}
\rho_M^{(0)\,2}
&:=
\mathbb{E}_{S_0\sim\Theta_S}\!\left[R^2(S_0)\right],
\nonumber\\
c_M
&:=
\mathbb{E}_{S_0\sim\Theta_S}\!\left[u_1(S_0)\right],
\nonumber\\
c_M^{(2)}
&:=
\mathbb{E}_{S_0\sim\Theta_S}\!\left[u_2(S_0)\right].
\label{eq:rho_M_constants_identification_appendix}
\end{align}
Substituting these definitions into equation~\eqref{eq:rho_M_average_appendix} gives
\begin{align}
\rho_M^2=
\rho_M^{(0)\,2}
+
\varepsilon c_M
+
\varepsilon^2 c_M^{(2)}
+
O(\varepsilon^3).
\label{eq:rho_M_eps_law_appendix}
\end{align}
Since $\varepsilon=\eta/b$, we obtain
\begin{align}
\rho_M^2=
\rho_M^{(0)\,2}
+
(\eta/b)c_M
+
(\eta/b)^2c_M^{(2)}
+
O\bigl((\eta/b)^3\bigr),
\end{align}
which proves the theorem.
\end{proof}

\paragraph{Dependence of $\rho_M^{(0)}$ on $\lambda$ under weak task gradient.}
The leading constant $(\rho_M^{(0)})^2$ is determined by the deterministic
gradient flow. In the regime where the task gradient is small,
\begin{align}
\nabla\mathscr L_f^*(\theta_t)
&\approx
0,
\end{align}
the deterministic flow is dominated by the $\ell_2$ drift:
\begin{align}
\dd\theta_t
&\approx
-\lambda\theta_t\,\dd t .
\end{align}
Hence
\begin{align}
\dd\|\theta_t\|_2^2
&=
2\theta_t^\top\dd\theta_t
\nonumber\\
&\approx
-2\lambda\|\theta_t\|_2^2\,\dd t .
\end{align}
Equivalently,
\begin{align}
\frac{1}{\|\theta_t\|_2^2}\dd\|\theta_t\|_2^2
&\approx
-2\lambda\,\dd t .
\end{align}
Integrating along the deterministic memorization path gives
\begin{align}
\|\theta_{\tau_M^{(0)}}\|_2^2
&\approx
\|\theta_0\|_2^2
\exp\!\left(-2\lambda\tau_M^{(0)}\right).
\end{align}
Therefore,
\begin{align}
(\rho_M^{(0)})^2
&\propto
\exp\!\left(-O(\lambda\tau_M^{(0)})\right).
\end{align}
When $\tau_M^{(0)}$ is weakly dependent on $\lambda$ in the considered regime,
\begin{align}
(\rho_M^{(0)})^2
&\propto
\exp\!\left(-O(\lambda)\right).
\end{align}
Since
\begin{align}
\rho_M^2
=
(\rho_M^{(0)})^2
+
O(\eta),
\end{align}
the same leading dependence gives
\begin{align}
\rho_M^2
&\propto
\exp\!\left(-O(\lambda)\right).
\end{align}


%% file: proofs/proof_rho_G.tex
\begin{theorem}[Scaling Law of Generalization Radius, restated]
\label{thm:generalization_radius_scaling_restated}
We do not assume a global minizer, for each initialization $S_0\in\Theta_S$, let
\begin{align}
\theta^\star(S_0)
:=
\arg\min_{\theta(S_0)}
\left\{
\mathscr{L}^{*}_f(\theta(S_0))
+
\frac{\lambda}{2}\|\theta(S_0)\|_2^2
\right\} \in G
\label{eq:regularized_minimizer_def_appendix}
\end{align}
denote the regularized local minimizer selected by the trajectory starting from initial state $S_0$, and $\mathscr{L}^{*}_f(\theta(S_0))$ is the task loss. The trajectory $\{S_t\}$ will be confined in the basin centered at $\theta^\star(S_0)$. Define
\begin{align}
\bigl(\rho_G^{(0)}\bigr)^2
:=
\mathbb E_{S_0\in\Theta_S}
\left[
\|\theta^\star(S_0)\|_2^2
\right].
\label{eq:rho_G_landscape_def_appendix}
\end{align}
Let
\begin{align}
\tau_G(S_0)
:=
\inf\{t\ge 0:\theta_t(S_0) \in \partial G\},
\end{align}
and define the generalization radius by
\begin{align}
\rho_G^2:=
\mathbb E_{S_0\in\Theta_S}
\left[
\mathbb E
\left[
\|\theta_{\tau_G(S_0)}\|_2^2
\mid
S_0
\right]
\right].
\label{eq:rho_G_tau_def_appendix}
\end{align}

Let
\begin{align}
s^\star(S_0)
:=
s(\theta^\star(S_0)),
\qquad
D^\star(S_0)
:=
G(\theta^\star(S_0))G(\theta^\star(S_0))^\top,
\label{eq:s_D_star_def_generalization_appendix}
\end{align}
where the effective scalar preconditioner at the selected minimizer is defined by
\begin{align}
\frac{1}{s^\star(S_0)}
=
\frac{1}{p}\mathrm{tr}\bigl(\pi(\theta^\star(S_0))\bigr),
\qquad
\pi(\theta^\star(S_0))
=
\mathrm{diag}\!\left(\overline{g\odot g}(\theta^\star(S_0))\right)^{-1/2}.
\label{eq:effective_preconditioner_def_appendix}
\end{align}
Then, generalization radius approximately admits,
\begin{align}
\rho_G^2 \approx
\bigl(\rho_G^{(0)}\bigr)^2
+
\frac{\eta}{\lambda}\,c_G
+
O\!\left(\frac{\eta^2}{\lambda^2}\right),
\label{eq:rho_G_squared_expansion_restated}
\end{align}
where
\begin{align}
\|\theta^\star(S_0)\|_2^2
\propto
\exp(-O(\lambda))
\end{align}
and
\begin{align}
c_G
:=
\mathbb E_{S_0\in\Theta_S}
\left[
\tfrac12 s^\star(S_0)\,
\mathrm{tr}\!\left(D^\star(S_0)\right)
\right]
>0 .
\label{eq:c_G_bar_def_appendix}
\end{align}
\end{theorem}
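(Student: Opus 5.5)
The plan is to work entirely with the reduced late-stage radius SDE of Lemma~\ref{lem:reduced_late_stage_radius_sde}, with the residual sum $\mathcal R_{\mathrm{SM}}+\mathcal R_\pi$ set to zero as justified there. First fix an initialization $S_0$ and linearize the dynamics around the selected regularized minimizer $\theta^\star(S_0)$. Near $\theta^\star$ the stationarity condition gives $\bar g(\theta^\star)=0$, so $\nabla\mathscr L^*_f(\theta^\star)=-\lambda\theta^\star$. Writing $\delta_t:=\theta_t-\theta^\star$ and freezing the preconditioner at $s^\star=s(\theta^\star)$ and the diffusion at $D^\star=G(\theta^\star)G(\theta^\star)^\top$, the preconditioned $\theta$-SDE of Lemma~\ref{lem:preconditioned_theta_sde_appendix} becomes an Ornstein--Uhlenbeck process:
\begin{align}
\dd\delta_t\approx-\frac{1}{s^\star}H^\star\delta_t\,\dd t+\sqrt{\eta}\,G(\theta^\star)\dd W_t,
\qquad
H^\star:=\nabla^2\mathscr L^*_f(\theta^\star)+\lambda I_p .
\end{align}
The aim is then to compute $\mathbb E\|\theta\|_2^2=\|\theta^\star\|_2^2+2\theta^{\star\top}\mathbb E[\delta]+\mathbb E\|\delta\|_2^2$ under the quasi-stationary law attained at the hitting time $\tau_G$.

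The second step handles the mean and variance separately. The mean $\mathbb E[\delta_t]$ decays deterministically, so at equilibrium the cross term vanishes to leading order. For the variance, apply It\^o's lemma to $\|\delta_t\|_2^2$ (equivalently, take the stationary version of the reduced radius SDE, where the drift must vanish in expectation):
\begin{align}
0\approx-\frac{2}{s^\star}\mathbb E[\delta^\top H^\star\delta]+\eta\,\mathrm{tr}(D^\star).
\end{align}
In the late-stage regime the $\ell_2$ term dominates the radial curvature, since the task curvature is nearly flat along the shrinking radial direction in the memorization annulus. Replacing $H^\star$ by $\lambda I_p$ in the radial balance gives $\mathbb E\|\delta\|_2^2\approx \eta\, s^\star\mathrm{tr}(D^\star)/(2\lambda)$. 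Averaging over $S_0\sim\Theta_S$ then yields $(\rho_G^{(0)})^2+(\eta/\lambda)c_G$ with $c_G=\mathbb E[\tfrac12 s^\star\mathrm{tr}(D^\star)]>0$. By Observation~\ref{obs:small_isotropic_noise}, $\mathrm{tr}(D^\star)\approx p$ and $s^\star\propto 1/\sqrt b$ via Lemma~\ref{lem:late_stage_preconditioned_diffusion_isotropy}, which gives the stated weak $1/\sqrt b$ dependence of $c_G$.

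The third step collects the error terms. The next order comes from the nonlinear dependence of $s(\theta)$ and $G(\theta)$ on $\delta$, which is quadratic in $\delta$. Since $\|\delta\|^2=O(\eta/\lambda)$, these contribute $O(\eta^2/\lambda^2)$. The claim $\|\theta^\star\|^2\propto\exp(-O(\lambda))$ follows from the same contraction argument used after Theorem~\ref{thm:memorization_radius_scaling}: integrating $\dd r^2\approx-2\lambda r^2\dd t/s$ from $\rho_M$ over an $O(1)$ effective time window.

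The main obstacle is justifying that the value of $\|\theta\|^2$ at the hitting time of $\partial G$ equals its quasi-stationary value. The trajectory hits $\partial G$ from outside, not after equilibrating. I would argue that $\partial G$ is, to leading order, a level set at distance $O(\sqrt{\eta/\lambda})$ from $\theta^\star$, namely the support scale of the OU law. The hitting position therefore agrees with the stationary second moment up to the order retained. Failing a clean argument, I would state this identification as an explicit modeling step, consistent with the paper's \emph{equilibrium} description of the third stage.
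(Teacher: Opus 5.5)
Your proposal follows the paper's proof essentially step for step: linearize about $\theta^\star(S_0)$, freeze the preconditioner at $s^\star$, use the dominant-regularization replacement $\nabla\bar g^*(\theta^\star)+\lambda I_p\approx\lambda I_p$, and impose the stationary It\^o balance with centered fluctuations to get $\mathbb E\|\delta\|_2^2\approx\eta\,s^\star\mathrm{tr}(D^\star)/(2\lambda)$; then average over $S_0$ and read off the $\lambda$- and $b$-dependence from the contraction heuristic and $s^\star=O(1/\sqrt b)$ (the paper writes the balance for the reduced squared-radius SDE rather than for $\|\delta_t\|_2^2$, which is equivalent under these approximations). The paper likewise simply posits $\mathbb E[\|\theta_{\tau_G}\|_2^2\mid S_0]\approx\mathbb E[\|\theta_t\|_2^2\mid S_0]$ as a ``local stationary approximation,'' so stating it as an explicit modeling step is exactly its treatment, whereas your $O(\sqrt{\eta/\lambda})$-distance heuristic would not by itself justify it, since a path entering $\partial G$ from larger radii makes $\theta^{\star\top}\delta_{\tau_G}$ non-centered.
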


\begin{proof}
For a trajectory $\{S_t\}$ starting from $S_0\in\Theta_S$, write
\begin{align}
\theta^\star
&:=
\theta^\star(S_0),
&
\theta_t
&=
\theta^\star+\delta_t .
\end{align}
By definition of the local minimizer,
\begin{align}
\bar g^*(\theta^\star)+\lambda\theta^\star
=
0
\label{eq:generalization_local_minimizer_stationary_condition_appendix}
,
\end{align}
where $\bar g^*(\bullet)$ denotes the task loss mean. By Lemma~\ref{lem:reduced_late_stage_radius_sde_restated}~(\nameref{lem:reduced_late_stage_radius_sde_restated}), together with  Lemma~\ref{lem:late_stage_preconditioned_diffusion_isotropy}~(\nameref{lem:late_stage_preconditioned_diffusion_isotropy}),
\begin{align}
G(\theta_t)G(\theta_t)^\top
\approx
I_p,
\label{eq:generalization_GG_identity_appendix}
\end{align}
and the residual
\begin{align}
\mathcal R_{\mathrm{SM}}(t)+\mathcal R_{\pi}(t)\approx 0,
\end{align}
the reduced late-stage squared-radius SDE gives
\begin{align}
\dd r_t^2
\approx
\left[
-\frac{2}{s(\theta_t)}
\theta_t^\top
\left(
\bar g^*_t+\lambda\theta_t
\right)
+
\eta p
\right]\dd t
+
2\sqrt{\eta}\,r_t\,\dd W_t^{(r)},
\label{eq:generalization_dr2_start_appendix}
\end{align}
where
\begin{align}
r_t^2
=
\|\theta_t\|_2^2 .
\end{align}

\paragraph{Linearization Near Local Minizer.}
The Taylor expansion of $\bar g^*$ around $\theta^\star$ gives
\begin{align}
\bar g^*_t
&=
\bar g^*(\theta^\star+\delta_t)
\nonumber\\
&=
\bar g^*(\theta^\star)
+
\nabla \bar g^*(\theta^\star)\delta_t
+
O\!\left(\|\delta_t\|_2^2\right).
\label{eq:generalization_mean_gradient_taylor_appendix}
\end{align}
Therefore,
\begin{align}
\bar g^*_t+\lambda\theta_t
&=
\bar g^*(\theta^\star)
+
\nabla \bar g^*(\theta^\star)\delta_t
+
\lambda\theta^\star
+
\lambda\delta_t
+
O\!\left(\|\delta_t\|_2^2\right)
\nonumber\\
&=
\left(
\nabla \bar g^*(\theta^\star)+\lambda I_p
\right)\delta_t
+
O\!\left(\|\delta_t\|_2^2\right),
\label{eq:generalization_regularized_drift_linearization_appendix}
\end{align}
where equation~\eqref{eq:generalization_local_minimizer_stationary_condition_appendix} was used. Under the dominant-regularization approximation,
\begin{align}
\nabla \bar g^*(\theta^\star)+\lambda I_p
&\approx
\lambda I_p,
\label{eq:generalization_dominant_regularization_appendix}
\\
\bar g^*_t+\lambda\theta_t
&\approx
\lambda\delta_t
+
O\!\left(\|\delta_t\|_2^2\right).
\label{eq:generalization_local_drift_expansion_appendix}
\end{align}
Using $\theta_t=\theta^\star+\delta_t$,
\begin{align}
\theta_t^\top
\left(
\bar g^*_t+\lambda\theta_t
\right)
&\approx
(\theta^\star+\delta_t)^\top
\left[
\lambda\delta_t
+
O\!\left(\|\delta_t\|_2^2\right)
\right]
\nonumber\\
&=
\lambda\theta^{\star\top}\delta_t
+
\lambda\|\delta_t\|_2^2
+
O\!\left(\|\delta_t\|_2^2\right).
\label{eq:generalization_radial_drift_expansion_appendix}
\end{align}

\paragraph{Local Stationary Approximation.}
Under the local stationary approximation,
\begin{align}
s(\theta_t)
&=
s^\star(S_0)
+
O\!\left(\|\delta_t\|_2\right),
&
s^\star(S_0)
&:=
s(\theta^\star(S_0)).
\label{eq:generalization_sstar_freezing_appendix}
\end{align}
Taking conditional expectation in equation~\eqref{eq:generalization_dr2_start_appendix} gives
\begin{align}
0
&\approx
-\frac{2}{s^\star(S_0)}
\mathbb E
\left[
\theta_t^\top
\left(
\bar g^*_t+\lambda\theta_t
\right)
\mid
S_0
\right]
+
\eta p
+
O\!\left(\frac{\eta^2}{\lambda^2}\right),
\label{eq:generalization_stationary_dr2_balance_appendix}
\end{align}
since
\begin{align}
\mathbb E
\left[
r_t\,\dd W_t^{(r)}
\mid
S_0
\right]
=
0.
\end{align}
The local stationary fluctuation is centered, so
\begin{align}
\mathbb E[\delta_t\mid S_0]
&=
0,
&
\mathbb E[\theta^{\star\top}\delta_t\mid S_0]
&=
0.
\label{eq:generalization_centered_fluctuation_appendix}
\end{align}
Combining equations~\eqref{eq:generalization_radial_drift_expansion_appendix},
\eqref{eq:generalization_stationary_dr2_balance_appendix}, and
\eqref{eq:generalization_centered_fluctuation_appendix},
\begin{align}
0
&\approx
-\frac{2\lambda}{s^\star(S_0)}
\mathbb E
\left[
\|\delta_t\|_2^2
\mid
S_0
\right]
+
\eta p
+
O\!\left(\frac{\eta^2}{\lambda^2}\right).
\end{align}
Hence
\begin{align}
\mathbb E
\left[
\|\delta_t\|_2^2
\mid
S_0
\right]
\approx
\frac{\eta}{\lambda}
\frac{s^\star(S_0)}{2}p
+
O\!\left(\frac{\eta^2}{\lambda^2}\right).
\label{eq:generalization_delta_second_moment_appendix}
\end{align}
Also,
\begin{align}
\|\theta_t\|_2^2
&=
\|\theta^\star+\delta_t\|_2^2
\nonumber\\
&=
\|\theta^\star\|_2^2
+
2\theta^{\star\top}\delta_t
+
\|\delta_t\|_2^2 .
\end{align}
Taking conditional expectation gives
\begin{align}
\mathbb E
\left[
\|\theta_t\|_2^2
\mid
S_0
\right]
&=
\|\theta^\star(S_0)\|_2^2
+
\mathbb E
\left[
\|\delta_t\|_2^2
\mid
S_0
\right]
\nonumber\\
&\approx
\|\theta^\star(S_0)\|_2^2
+
\frac{\eta}{\lambda}
\frac{s^\star(S_0)}{2}p
+
O\!\left(\frac{\eta^2}{\lambda^2}\right).
\label{eq:generalization_conditional_radius_appendix}
\end{align}

\paragraph{Solving Generalization Radius.}
Under the local stationary approximation in the generalization basin,
\begin{align}
\mathbb E
\left[
\|\theta_{\tau_G(S_0)}\|_2^2
\mid
S_0
\right]
&\approx
\mathbb E
\left[
\|\theta_t\|_2^2
\mid
S_0
\right].
\end{align}
Therefore,
\begin{align}
\mathbb E
\left[
\|\theta_{\tau_G(S_0)}\|_2^2
\mid
S_0
\right]
&\approx
\|\theta^\star(S_0)\|_2^2
+
\frac{\eta}{\lambda}
\frac{s^\star(S_0)}{2}p
+
O\!\left(\frac{\eta^2}{\lambda^2}\right).
\label{eq:generalization_hitting_radius_conditional_appendix}
\end{align}
Averaging over $S_0\in\Theta_S$,
\begin{align}
\rho_G^2
&:=
\mathbb E_{S_0\in\Theta_S}
\left[
\mathbb E
\left[
\|\theta_{\tau_G}\|_2^2
\mid
S_0
\right]
\right]
\nonumber\\
&\approx
\mathbb E_{S_0\in\Theta_S}
\left[
\|\theta^\star(S_0)\|_2^2
\right]
+
\frac{\eta}{\lambda}
\mathbb E_{S_0\in\Theta_S}
\left[
\frac{s^\star(S_0)}{2}p
\right]
+
O\!\left(\frac{\eta^2}{\lambda^2}\right).
\end{align}
By definition,
\begin{align}
\bigl(\rho_G^{(0)}\bigr)^2
&:=
\mathbb E_{S_0\in\Theta_S}
\left[
\|\theta^\star(S_0)\|_2^2
\right],
\\
c_G
&:=
\mathbb E_{S_0\in\Theta_S}
\left[
\frac{s^\star(S_0)}{2}p
\right].
\end{align}
Thus,
\begin{align}
\rho_G^2
\approx
\bigl(\rho_G^{(0)}\bigr)^2
+
\frac{\eta}{\lambda}c_G
+
O\!\left(\frac{\eta^2}{\lambda^2}\right),
\end{align}
which proves the claimed scaling law. 

\paragraph{Regularizer $\lambda$-Dependence of $\rho_G^{(0)}$.}
By definition,
\begin{align}
\bigl(\rho_G^{(0)}\bigr)^2
:=
\mathbb E_{S_0\in\Theta_S}
\left[
\|\theta^\star(S_0)\|_2^2
\right].
\end{align}
The dependence on $\lambda$ enters $(\rho_G^{(0)})^2$ through the deterministic
regularized trajectory selecting the local minimizer $\theta^\star(S_0)$. Along
the deterministic component of the regularized flow,
\begin{align}
\dd \theta_t
\approx
-\left(\nabla\mathscr L_f^*(\theta_t)+\lambda\theta_t\right)\dd t .
\end{align}
When the task-gradient contribution is small in the generalization basin, the
radial dynamics satisfy
\begin{align}
\dd\|\theta_t\|_2^2
=
2\theta_t^\top \dd\theta_t
\approx
-2\lambda\|\theta_t\|_2^2\,\dd t .
\end{align}
Therefore,
\begin{align}
\frac{1}{\|\theta_t\|_2^2}
\dd\|\theta_t\|_2^2
\approx
-2\lambda\,\dd t,
\end{align}
and integration along the deterministic approach to the generalization basin gives
\begin{align}
\|\theta^\star(S_0)\|_2^2
\propto
\exp(-O(\lambda)).
\end{align}
Averaging over $S_0\in\Theta_S$, we obtain
\begin{align}
\bigl(\rho_G^{(0)}\bigr)^2
=
\mathbb E_{S_0\in\Theta_S}
\left[
\|\theta^\star(S_0)\|_2^2
\right]
\propto
\exp(-O(\lambda)).
\end{align}

\paragraph{Batch-Size $b$-Dependence of $c_G$.}
The weak batch-size dependence of $c_G$ enters through the effective scalar
preconditioner $s^\star(S_0)$. From the definition of $c_G$,
\begin{align}
c_G
&:=
\mathbb E_{S_0\in\Theta_S}
\left[
\frac{s^\star(S_0)}{2}p
\right].
\end{align}
By the late-stage preconditioning identity,
\begin{align}
\frac{1}{s^\star(S_0)}
&=
\frac{1}{p}
\mathrm{tr}
\left(
\pi(\theta^\star(S_0))
\right),
\\
\pi(\theta^\star(S_0))
&=
\mathrm{diag}
\left(
\overline{g\odot g}(\theta^\star(S_0))
\right)^{-1/2}.
\end{align}
In the memorization regime,
\begin{align}
\overline{g\odot g}(\theta^\star(S_0))
&\approx
\frac{1}{b}
\mathrm{diag}(\Sigma(\theta^\star(S_0))).
\end{align}
Therefore,
\begin{align}
\pi(\theta^\star(S_0))
&\approx
\sqrt b\,
\mathrm{diag}
\left(
\mathrm{diag}(\Sigma(\theta^\star(S_0)))
\right)^{-1/2},
\\
\frac{1}{s^\star(S_0)}
&=
O(\sqrt b),
\\
s^\star(S_0)
&=
O\!\left(\frac{1}{\sqrt b}\right).
\end{align}
Using the late-stage identity $G(\theta)G(\theta)^\top\approx I_p$, the diffusion trace
contribution is $\mathrm{tr}(G(\theta)G(\theta)^\top)\approx p$. Hence
\begin{align}
c_G
=
\mathbb E_{S_0\in\Theta_S}
\left[
\frac{s^\star(S_0)}{2}p
\right]
=
O\!\left(\frac{1}{\sqrt b}\right).
\end{align}
\end{proof}

%% file: proofs/proof_tau.tex
\begin{theorem}[Scaling Law of Solution Transition Time, restated]
\label{thm:solution_transition_time_scaling_restated}
Assume the reduced late-stage radius SDE in
Lemma~\ref{lem:reduced_late_stage_radius_sde_restated}~(\nameref{lem:reduced_late_stage_radius_sde_restated}), with conditions in Lemma~\ref{lem:late_stage_preconditioned_diffusion_isotropy}~(\nameref{lem:late_stage_preconditioned_diffusion_isotropy})
\begin{align}
G(\theta_t)G(\theta_t)^\top
\approx
I_p,
\end{align}
and the residual approximation
\begin{align}
\mathcal R_{\mathrm{SM}}(t)+\mathcal R_{\pi}(t)\approx 0 .
\label{eq:transition_residual_cancellation_restated}
\end{align}
Let $\bar g(\theta)$ be the population mean gradient and let
$s(\theta)>0$ denote the scalar preconditioning factor satisfying
\begin{align}
\frac{1}{s(\theta)}
=
\frac{1}{p}\mathrm{tr}\bigl(\pi(\theta)\bigr).
\label{eq:s_theta_transition_def_restated}
\end{align}
Let $\theta_t^{(0)}(S_0)$ be the deterministic late-stage trajectory from
$\partial M$ to $\partial G$ starting with initial state $S_0$, and let $\tau_{M\to G}^{(0)}(S_0)$ be its
deterministic transition time. 
For each deterministic late-stage trajectory
$\theta_t^{(0)}(S_0)$ from $\partial M$ to $\partial G$, define its pathwise
effective radial contraction rate by
\begin{align}
\kappa(S_0)
:=
\frac{1}{\tau_{M\to G}^{(0)}(S_0)}
\int_0^{\tau_{M\to G}^{(0)}(S_0)}
\left[
\frac{\lambda}{s(\theta_t^{(0)}(S_0))}
+
\frac{
\theta_t^{(0)\top}(S_0)
\bigl(\bar g(\theta_t^{(0)}(S_0))-\lambda\theta_t^{(0)}(S_0)\bigr)
}{
s(\theta_t^{(0)}(S_0))\|\theta_t^{(0)}(S_0)\|_2^2
}
\right]\dd t .
\end{align}
The scalar $\bar s$ is defined through the ensemble-averaged effective
contraction rate
\begin{align}
\frac{\lambda}{\bar s}
:=
\mathbb{E}_{S_0 \in \Theta_S}
\left[
\kappa(S_0)
\right].
\label{eq:sbar_transition_def_restated}
\end{align}


Let $\rho_M$ and $\rho_G$ be the memorization and generalization radii.
Assume the asymptotic expansions
\begin{align}
\rho_M^2
&=
(\rho_M^{(0)})^2
+
\frac{\eta}{b}c_M
+
O\bigl((\eta/b)^2\bigr),
\label{eq:rho_M_squared_asymptotic_transition_restated}
\\
\rho_G^2
&\approx
\bigl(\rho_G^{(0)}\bigr)^2
+
\frac{\eta}{\lambda}c_G
+
O\!\left(\frac{\eta^2}{\lambda^2}\right),
\label{eq:rho_G_squared_asymptotic_transition_restated}
\end{align}
where $c_M$ is the memorization-radius correction from
Theorem~\ref{thm:memorization_radius_scaling_restated}, and
\begin{align}
\bigl(\rho_G^{(0)}\bigr)^2
&:=
\mathbb E_{S_0\in\Theta_S}
\left[
\|\theta^\star(S_0)\|_2^2
\right],
\label{eq:rho_G_star_transition_def_restated}
\\
\theta^\star(S_0)
&:=
\arg\min_{\theta(S_0)}
\left\{
\mathscr L^{*}_f(\theta(S_0))
+
\frac{\lambda}{2}\|\theta(S_0)\|_2^2
\right\},
\label{eq:theta_star_transition_def_restated}
\\
c_G
&:=
\mathbb E_{S_0\in\Theta_S}
\left[
\frac{s^\star(S_0)}{2}
\mathrm{tr}\!\left(D^\star(S_0)\right)
\right],
\label{eq:c_G_transition_def_restated}
\\
s^\star(S_0)
&:=
s(\theta^\star(S_0)),
\qquad
D^\star(S_0)
:=
G(\theta^\star(S_0))G(\theta^\star(S_0))^\top .
\label{eq:sstar_Dstar_transition_def_restated}
\end{align}
Under the late-stage identity
$G(\theta^\star(S_0))G(\theta^\star(S_0))^\top\approx I_p$,
equation~\eqref{eq:c_G_transition_def_restated} reduces to
\begin{align}
c_G
=
\mathbb E_{S_0\in\Theta_S}
\left[
\frac{s^\star(S_0)}{2}p
\right].
\label{eq:c_G_transition_identity_restated}
\end{align}

Then, in iteration time, the expected transition time from $\partial M$ to
$\partial G$ satisfies
\begin{align}
\mathbb{E}_{S_0 \in \Theta_S}\bigl[\tau_{M\to G}\bigr]
\approx
\frac{\bar s}{\eta\,\lambda}
\log\!\frac{\rho_M^{(0)}}{\rho_G^{(0)}}
+
\frac{c_\tau}{b\,\lambda}
+
\frac{c_\tau^{(2)}}{\lambda^2}
+
O\!\left(\frac{\eta}{\lambda^3}\right),
\label{eq:law_transition_time_restated}
\end{align}
where
\begin{align}
c_\tau
&:=
\bar s\,\frac{c_M}{2(\rho_M^{(0)})^2},
\label{eq:c_tau_components_appendix}
\\
c_\tau^{(2)}
&:=
-\bar s\,\frac{c_G}{2(\rho_G^{(0)})^2}
+
\frac{\bar s^2(p-2)}{4}
\left[
\frac{1}{(\rho_G^{(0)})^2}
-
\frac{1}{(\rho_M^{(0)})^2}
\right].
\label{eq:c_tau2_components_appendix}
\end{align}
The constants $\bar s$, $c_\tau$, and $c_\tau^{(2)}$ are independent of
$\eta$ and $\lambda$ in the considered scaling regime; a weak batch-size $b$-dependence enters through the scalar preconditioning scale $\bar s = O(\frac{1}{\sqrt{b}})$, and hence $c_\tau = O(\frac{1}{\sqrt{b}})$ and $c_\tau^{(2)} = O(\frac{1}{b})$.
\end{theorem}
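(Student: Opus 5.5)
The plan is to convert the reduced late-stage squared-radius SDE of Lemma~\ref{lem:reduced_late_stage_radius_sde_restated}, with $\mathcal R_{\mathrm{SM}}+\mathcal R_\pi\approx 0$ and $G G^\top\approx I_p$, into an SDE for $\log r_t$. I will then solve the resulting first-passage problem from $\partial M$ to $\partial G$ by Dynkin's formula, perturbatively in $\eta$, and finally convert continuous time to iteration count by dividing by $\eta$, since $t=\eta k$.

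\textbf{Step 1 (logarithmic radius SDE).} Apply It\^o's lemma to $\tfrac12\log r_t^2$. The reduced SDE has quadratic variation $\dd\langle r^2\rangle_t = 4\eta r_t^2\,\dd t$. Hence
\begin{align}
\dd\log r_t \approx \left[-\kappa_t + \frac{\eta(p-2)}{2r_t^2}\right]\dd t + \frac{\sqrt{\eta}}{r_t}\,\dd W_t^{(r)},
\end{align}
where the instantaneous contraction rate is
\begin{align}
\kappa_t := \frac{\lambda}{s(\theta_t)} + \frac{\theta_t^\top(\bar g_t-\lambda\theta_t)}{s(\theta_t)\,r_t^2},
\end{align}
i.e.\ exactly the integrand defining $\kappa(S_0)$. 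The term $\eta(p-2)/(2r^2)$ combines the It\^o drift $\eta p/(2r^2)$ with the $-\eta/r^2$ concavity correction of the logarithm.

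\textbf{Step 2 (Dynkin / first-passage balance).} Integrate from $\tau_M$ to $\tau_G$ and take expectations. The martingale term vanishes by optional stopping, assuming integrability of the stopped process. This gives
\begin{align}
\mathbb E\Big[\int_{\tau_M}^{\tau_G}\kappa_t\,\dd t\Big] = \mathbb E[\log r_{\tau_M}] - \mathbb E[\log r_{\tau_G}] + \frac{\eta(p-2)}{2}\,\mathbb E\Big[\int_{\tau_M}^{\tau_G} r_t^{-2}\,\dd t\Big].
\end{align}

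At leading order I replace the pathwise integral of $\kappa_t$ by its value along the deterministic late-stage trajectory, $\kappa(S_0)\,\tau^{(0)}_{M\to G}(S_0)$. After averaging over $S_0$, the rate becomes $\lambda/\bar s$ via~\eqref{eq:sbar_transition_def_restated}, so that
\begin{align}
\mathbb E[\tau_{M\to G}]_{\text{cont}} \approx \frac{\bar s}{\lambda}\left(\log\rho_M - \log\rho_G + \text{It\^o term}\right).
\end{align}

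\textbf{Step 3 (expanding each piece).}
\begin{itemize}
\item[(a)] Using~\eqref{eq:rho_M_squared_asymptotic_transition_restated}, $\log\rho_M = \log\rho_M^{(0)} + \tfrac{\eta}{b}\,c_M/(2(\rho_M^{(0)})^2) + \dots$. After multiplying by $\bar s/\lambda$ and dividing by $\eta$, this yields $\tfrac{\bar s}{\eta\lambda}\log\rho_M^{(0)}$ plus $c_\tau/(b\lambda)$ with $c_\tau$ as in~\eqref{eq:c_tau_components_appendix}.
\item[(b)] Similarly, $\log\rho_G = \log\rho_G^{(0)} + \tfrac{\eta}{\lambda}\,c_G/(2(\rho_G^{(0)})^2) + O(\eta^2/\lambda^2)$. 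This gives the $-\bar s\,c_G/(2(\rho_G^{(0)})^2\lambda^2)$ part of $c_\tau^{(2)}$.
\item[(c)] For the It\^o term, evaluate $\int r^{-2}\,\dd t$ along the leading-order path $r_t=\rho_M^{(0)}e^{-(\lambda/\bar s)t}$. This gives $\tfrac{\bar s}{2\lambda}\big[(\rho_G^{(0)})^{-2}-(\rho_M^{(0)})^{-2}\big]$. Multiplying by $\eta(p-2)/2$, then by $\bar s/\lambda$, and dividing by $\eta$ produces the second part of $c_\tau^{(2)}$ in~\eqref{eq:c_tau2_components_appendix}.
\item[(d)] Cross terms between these corrections are one order higher, which I collect into $O(\eta/\lambda^3)$.
\end{itemize}

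The $b$-dependence follows from $s=O(1/\sqrt b)$, by the same argument as for $c_G$ in Theorem~\ref{thm:generalization_radius_scaling_restated}. Since $\bar s$ is a harmonic-type average of $s$, we get $\bar s=O(1/\sqrt b)$. Hence $c_\tau\propto\bar s$ is $O(1/\sqrt b)$, and $c_\tau^{(2)}$, which contains $\bar s^2$ together with $\bar s\,c_G$, is $O(1/b)$.

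\textbf{Main obstacle.} The hardest step is Step 2's replacement of $\mathbb E[\int\kappa_t\,\dd t]$ by $(\lambda/\bar s)\,\mathbb E[\tau]$. The rate is state-dependent, and $\tau$ is correlated with the path. I will handle this by writing $\kappa_t=\kappa(S_0)+O(\sqrt\eta)$ fluctuations around the deterministic path. The $O(\sqrt\eta)$ fluctuations have zero mean at first order, so their contribution enters only at order $\eta$ and is absorbed into the remainder. The choice of $\bar s$ as the ensemble-averaged contraction rate in~\eqref{eq:sbar_transition_def_restated} is designed precisely so that the leading term is exact after averaging. I will state explicitly that I interchange the average over $S_0$ with the reciprocal, i.e.\ that $\tau^{(0)}$ is concentrated across seeds, as supported by Table~\ref{tab:radius_shell_core_s5}.
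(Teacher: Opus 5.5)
Your route is correct at the paper's level of rigor, but it is genuinely different from the paper's.

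\textbf{The paper's route.} It passes to the radius SDE $\dd r_t=[-\kappa r_t+\eta(p-1)/(2r_t)]\dd t+\sqrt{\eta}\,\dd W_t^{(r)}$, with the averaged rate $\kappa=\lambda/\bar s$ frozen to a constant. It then poses the backward Kolmogorov problem $\mathcal L_r T=-1$, $T(r_G)=0$, and solves it exactly. This uses the integrating factor $r^{p-1}e^{-\kappa r^2/\eta}$ and the condition $I(r)q(r)\to 0$ as $r\to\infty$, and writes the result through the incomplete gamma function $\Gamma(p/2,\beta x^2)$. Only then does it expand for $\beta x^2=\lambda x^2/(\bar s\eta)\gg 1$, obtaining $T=\kappa^{-1}\log(r_0/r_G)+\eta(p-2)\bigl[r_G^{-2}-r_0^{-2}\bigr]/(4\kappa^2)+O(\eta^2/\kappa^3)$.

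\textbf{Your route.} You apply It\^o to $\log r_t$ and use optional stopping, so the logarithmic term is immediate. The $(p-2)$ term appears directly as the drift $\eta(p-2)/(2r^2)$. Evaluating $\int r_t^{-2}\,\dd t$ along $r_t=\rho_M^{(0)}e^{-\kappa t}$ gives exactly the paper's coefficient. Your Steps 3(a)--(b) reproduce $c_\tau$ and the $c_G$ part of $c_\tau^{(2)}$.

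\textbf{What each route buys.} Yours is more elementary and more transparent: it needs no special functions and no condition at infinity. The pathwise integral $\int\kappa_t\,\dd t$ also matches the time-average definition of $\kappa(S_0)$ directly. You also make explicit the interchange of $\mathbb{E}_{S_0}$ with the reciprocal, which the paper uses silently. The paper's route gives an exact integral representation within the effective model, so every higher order follows mechanically. Yours needs a fresh approximation of $\mathbb{E}\int r_t^{-2}\,\dd t$ at each further order.

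\textbf{A correction to your ``main obstacle'' paragraph.} An $O(\eta)$ error in continuous time becomes $O(1)$ after dividing by $\eta$. That is the same order as the retained term $c_\tau^{(2)}/\lambda^2$, not part of the $O(\eta/\lambda^3)$ remainder. So if $\kappa_t$ genuinely varies along the path, two effects feed into that retained coefficient: the second-order fluctuation of $\kappa(\theta_t)$, and the mismatch between $\tau$ and $\tau^{(0)}$ against a time-varying rate. Your fluctuation argument does not dispose of either.

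\textbf{The fix.} Do what the paper does: adopt the averaged effective radial model with constant rate $\lambda/\bar s$ as an explicit modelling step.
\begin{itemize}
\item Then $\int_{\tau_M}^{\tau_G}\kappa_t\,\dd t=(\lambda/\bar s)(\tau_G-\tau_M)$ holds identically, and your balance identity is exact within the model.
\item The only remaining approximation is replacing $\mathbb{E}\int r_t^{-2}\,\dd t$ by its deterministic value. This costs $O(\eta^2)$ in continuous time, which genuinely lies inside the remainder.
\end{itemize}
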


\begin{proof}
Let
\begin{align}
r_t:=\|\theta_t\|_2 .
\end{align}
By Lemma~\ref{lem:reduced_late_stage_radius_sde_restated}~(\nameref{lem:reduced_late_stage_radius_sde_restated}), under the late-stage observations
\begin{align}
a(t)\to 1,
\qquad
G(\theta_t)G(\theta_t)^\top\approx I_p,
\qquad
\mathcal R_{\mathrm{SM}}(t)+\mathcal R_{\pi}(t)\approx 0,
\end{align}
the squared-radius process satisfies
\begin{align}
\dd r_t^2
\approx
\left[
-\frac{2\lambda}{s(\theta_t)}r_t^2
-
\frac{2}{s(\theta_t)}
\theta_t^\top\bigl(\bar g_t-\lambda\theta_t\bigr)
+
\eta\,p
\right]\dd t
+
2\sqrt{\eta}\,r_t\,\dd W_t^{(r)} .
\label{eq:transition_squared_radius_start_appendix}
\end{align}

\paragraph{Effective Deterministic Radius Flow.}
The leading deterministic radial flow is obtained from
equation~\eqref{eq:transition_squared_radius_start_appendix} by keeping the
$O(1)$ drift terms and dropping the $O(\eta)$ It\^o correction. Thus,
\begin{align}
\dd r_t^2
&\approx
\left[
-\frac{2\lambda}{s(\theta_t)}r_t^2
-
\frac{2}{s(\theta_t)}
\theta_t^\top\bigl(\bar g_t-\lambda\theta_t\bigr)
\right]\dd t .
\end{align}
Since
\begin{align}
\dd r_t^2
=
2r_t\,\dd r_t
+
O(\eta)\dd t,
\end{align}
the leading deterministic radial drift is
\begin{align}
\dot r_t
&\approx
-\frac{\lambda}{s(\theta_t)}r_t
-
\frac{1}{s(\theta_t)r_t}
\theta_t^\top\bigl(\bar g_t-\lambda\theta_t\bigr)
\nonumber\\
&=
-
\left[
\frac{\lambda}{s(\theta_t)}
+
\frac{
\theta_t^\top\bigl(\bar g_t-\lambda\theta_t\bigr)
}{
s(\theta_t)\|\theta_t\|_2^2
}
\right]r_t .
\label{eq:transition_pathwise_radial_drift_appendix}
\end{align}
For each deterministic late-stage trajectory
$\theta_t^{(0)}(S_0)$ from $\partial M$ to $\partial G$, define its pathwise
effective radial contraction rate by
\begin{align}
\kappa(S_0)
:=
\frac{1}{\tau_{M\to G}^{(0)}(S_0)}
\int_0^{\tau_{M\to G}^{(0)}(S_0)}
\left[
\frac{\lambda}{s(\theta_t^{(0)}(S_0))}
+
\frac{
\theta_t^{(0)\top}(S_0)
\bigl(\bar g(\theta_t^{(0)}(S_0))-\lambda\theta_t^{(0)}(S_0)\bigr)
}{
s(\theta_t^{(0)}(S_0))\|\theta_t^{(0)}(S_0)\|_2^2
}
\right]\dd t .
\label{eq:pathwise_kappa_transition_def_appendix}
\end{align}
The scalar $\bar s$ is defined through the ensemble-averaged effective
contraction rate
\begin{align}
\frac{\lambda}{\bar s}
:=
\mathbb{E}_{S_0 \in \Theta_S}
\left[
\kappa(S_0)
\right].
\label{eq:sbar_transition_def_proof_appendix}
\end{align}
Thus, in the averaged effective radial model, the leading deterministic
late-stage radius flow is
\begin{align}
\dot r_t^{(0)}
=
-\frac{\lambda}{\bar s}r_t^{(0)} .
\label{eq:deterministic_radial_flow_appendix}
\end{align}
Therefore, along the averaged late-stage transition dynamics,
\begin{align}
-\frac{\lambda}{s(\theta_t)}r_t
-
\frac{1}{s(\theta_t)r_t}
\theta_t^\top\bigl(\bar g_t-\lambda\theta_t\bigr)
\approx
-\frac{\lambda}{\bar s}r_t .
\label{eq:effective_radial_scale_def_appendix}
\end{align}

\paragraph{Effective Radius SDE.}
We now pass from the squared-radius SDE to the radius SDE. Applying It\^o's
lemma to $r_t=(r_t^2)^{1/2}$ gives
\begin{align}
\dd r_t
=
\frac{1}{2r_t}\dd r_t^2
-
\frac{1}{8r_t^3}\dd\langle r^2\rangle_t .
\label{eq:ito_from_r2_to_r_appendix}
\end{align}
From the martingale term in equation~\eqref{eq:transition_squared_radius_start_appendix},
\begin{align}
\dd\langle r^2\rangle_t
=
4\eta r_t^2\,\dd t .
\label{eq:r2_quadratic_variation_appendix}
\end{align}
Substituting equations~\eqref{eq:transition_squared_radius_start_appendix}
and~\eqref{eq:r2_quadratic_variation_appendix} into
equation~\eqref{eq:ito_from_r2_to_r_appendix}, and using
equation~\eqref{eq:effective_radial_scale_def_appendix}, yields
\begin{align}
\dd r_t
&=
\left[
-\frac{\lambda}{\bar s}r_t
+
\eta\frac{p}{2r_t}
-
\eta\frac{1}{2r_t}
\right]\dd t
+
\sqrt{\eta}\,\dd W_t^{(r)}
\nonumber\\
&=
\left[
-\frac{\lambda}{\bar s}r_t
+
\eta\frac{p-1}{2r_t}
\right]\dd t
+
\sqrt{\eta}\,\dd W_t^{(r)} .
\label{eq:radial_sde_transition_appendix}
\end{align}
Let
\begin{align}
\kappa:=\frac{\lambda}{\bar s},
\qquad
\varepsilon:=\eta .
\label{eq:kappa_eps_def_appendix}
\end{align}
Then equation~\eqref{eq:radial_sde_transition_appendix} becomes
\begin{align}
\dd r_t
=
\left[
-\kappa r_t
+
\frac{\varepsilon(p-1)}{2r_t}
\right]\dd t
+
\sqrt{\varepsilon}\,\dd W_t^{(r)} .
\label{eq:radial_sde_kappa_eps_appendix}
\end{align}

\paragraph{Mean First-Passage Time PDE.}
Let $T(r)$ be the expected continuous time for the process in
equation~\eqref{eq:radial_sde_kappa_eps_appendix}, initialized at radius $r$, to hit
the absorbing boundary $r_G$, defined as
\begin{align}
T(r)
:=
\mathbb E\!\left[\tau_G\mid r_0=r\right],
\qquad
r>r_G .
\label{eq:mfpt_T_def_appendix}
\end{align}
The transition starts at the memorization boundary and ends at the generalization boundary:
\begin{align}
r_0
&:=
\rho_M,
&
r_G
&:=
\rho_G .
\end{align}
Its infinitesimal generator is
\begin{align}
\mathcal L_r[\bullet]
=
\left(
-\kappa r
+
\frac{\varepsilon(p-1)}{2r}
\right)\frac{\dd}{\dd r}[\bullet]
+
\frac{\varepsilon}{2}\frac{\dd^2}{\dd r^2}[\bullet]
.
\label{eq:radial_generator_appendix}
\end{align}
Then $T(r)$ solves a Dirichlet problem
\begin{align}
\mathcal L_rT(r)
=
-1,
\qquad
T(r_G)=0.
\label{eq:mfpt_backward_equation_appendix}
\end{align}

\paragraph{Solving Absorbing Boundary Dirichlet PDE.}
Equivalently,
\begin{align}
\frac{\varepsilon}{2}T''(r)
+
\left(
-\kappa r+\frac{\varepsilon(p-1)}{2r}
\right)T'(r)
=
-1.
\label{eq:backward_mfpt_pde_appendix}
\end{align}
Writing
\begin{align}
q(r):=T'(r),
\end{align}
we obtain
\begin{align}
q'(r)
+
\left(
-\frac{2\kappa}{\varepsilon}r+\frac{p-1}{r}
\right)q(r)
=
-\frac{2}{\varepsilon}.
\label{eq:q_first_order_ode_appendix}
\end{align}
The integrating factor is
\begin{align}
I(r)
=
r^{p-1}
\exp\!\left(-\frac{\kappa}{\varepsilon}r^2\right).
\label{eq:integrating_factor_appendix}
\end{align}
Using
\begin{align}
I(r)q(r)\to 0
\qquad
\text{as }r\to\infty,
\end{align}
we obtain
\begin{align}
q(r)
=
\frac{2}{\varepsilon}
\frac{1}{I(r)}
\int_r^\infty I(y)\dd y .
\label{eq:q_solution_appendix}
\end{align}
Since $T(r_G)=0$, for an initial radius $r_0>r_G$,
\begin{align}
T(r_0)
=
\frac{2}{\varepsilon}
\int_{r_G}^{r_0}
\frac{1}{I(x)}
\int_x^\infty I(y)\dd y\,\dd x.
\label{eq:mfpt_exact_integral_appendix}
\end{align}

Define
\begin{align}
\beta
:=
\frac{\kappa}{\varepsilon}
=
\frac{\lambda}{\bar s\,\eta}.
\label{eq:beta_def_appendix}
\end{align}
Then
\begin{align}
I(r)
=
r^{p-1}e^{-\beta r^2},
\end{align}
and
\begin{align}
\int_x^\infty y^{p-1}e^{-\beta y^2}\dd y
=
\frac{1}{2}\beta^{-p/2}
\Gamma(p/2,\beta x^2),
\label{eq:inner_incomplete_gamma_appendix}
\end{align}
where
\begin{align}
\Gamma(a,z):=\int_z^\infty t^{a-1}e^{-t}\dd t .
\end{align}
Thus,
\begin{align}
T(r_0)
=
\frac{1}{\varepsilon}\beta^{-p/2}
\int_{r_G}^{r_0}
\frac{e^{\beta x^2}}{x^{p-1}}
\Gamma(p/2,\beta x^2)\dd x .
\label{eq:mfpt_gamma_exact_appendix}
\end{align}

\paragraph{Small-Learning-Rate Regime.}
The learning rate $\eta \to 0$ is small, so
\begin{align}
\beta x^2
=
\frac{\lambda}{\bar s\,\eta}x^2
\gg 1 .
\end{align}
Therefore,
\begin{align}
\Gamma(p/2,\beta x^2)
=
(\beta x^2)^{p/2-1}e^{-\beta x^2}
\left[
1+\frac{p-2}{2\beta x^2}
+
O\!\left((\beta x^2)^{-2}\right)
\right].
\label{eq:incomplete_gamma_asymptotic_appendix}
\end{align}
Substituting equation~\eqref{eq:incomplete_gamma_asymptotic_appendix} into
equation~\eqref{eq:mfpt_gamma_exact_appendix} gives
\begin{align}
T(r_0)
&=
\frac{1}{\varepsilon\beta}
\int_{r_G}^{r_0}
\left[
\frac{1}{x}
+
\frac{p-2}{2\beta x^3}
+
O\!\left((\beta x^2)^{-2}x^{-1}\right)
\right]\dd x
\nonumber\\
&=
\frac{1}{\kappa}\log\frac{r_0}{r_G}
+
\frac{\varepsilon(p-2)}{4\kappa^2}
\left[
\frac{1}{r_G^2}
-
\frac{1}{r_0^2}
\right]
+
O\!\left(\frac{\varepsilon^2}{\kappa^3}\right).
\label{eq:mfpt_expansion_kappa_appendix}
\end{align}
Substituting $\kappa=\lambda/\bar s$ and $\varepsilon=\eta$ yields
\begin{align}
T(r_0)
=
\frac{\bar s}{\lambda}\log\frac{r_0}{r_G}
+
\eta
\frac{\bar s^2(p-2)}{4\lambda^2}
\left[
\frac{1}{r_G^2}
-
\frac{1}{r_0^2}
\right]
+
O\!\left(\frac{\eta^2}{\lambda^3}\right).
\label{eq:mfpt_expansion_continuous_appendix}
\end{align}

\paragraph{Asymptotic Expansions.}
The transition starts at the memorization radius and ends at the generalization radius.
By Theorem~\ref{thm:memorization_radius_scaling_restated},
\begin{align}
\rho_M^2
=
(\rho_M^{(0)})^2
+
\frac{\eta}{b}c_M
+
O\bigl((\eta/b)^2\bigr).
\end{align}
Taking the square root gives
\begin{align}
r_0
=
\rho_M
=
\rho_M^{(0)}
+
\frac{\eta}{b}\frac{c_M}{2\rho_M^{(0)}}
+
O\bigl((\eta/b)^2\bigr).
\label{eq:rho_M_asymptotic_appendix}
\end{align}
Similarly, by Theorem~\ref{thm:generalization_radius_scaling_restated},
\begin{align}
\rho_G^2
=
(\rho_G^{(0)})^2
+
\frac{\eta}{\lambda}c_G
+
O\!\left(\frac{\eta^2}{\lambda^2}\right),
\end{align}
where
\begin{align}
(\rho_G^{(0)})^2
=
\mathbb E_{S_0\in\Theta_S}
\left[
\|\theta^\star(S_0)\|_2^2
\right],
\end{align}
and
\begin{align}
\theta^\star(S_0)
=
\arg\min_{\theta(S_0)}
\left\{
\mathscr L^{*}_f(\theta(S_0))
+
\frac{\lambda}{2}\|\theta(S_0)\|_2^2
\right\}.
\end{align}
Therefore,
\begin{align}
r_G
=
\rho_G
=
\rho_G^{(0)}
+
\frac{\eta}{\lambda}
\frac{c_G}{2\rho_G^{(0)}}
+
O\!\left(\frac{\eta^2}{\lambda^2}\right).
\label{eq:rho_G_asymptotic_appendix}
\end{align}

Using
\begin{align}
\log(x+\Delta x)
=
\log x
+
\frac{\Delta x}{x}
+
O(\Delta x^2),
\label{eq:log_small_perturbation_appendix}
\end{align}
we have
\begin{align}
\log r_0
&=
\log \rho_M^{(0)}
+
\frac{\eta}{b}
\frac{c_M}{2(\rho_M^{(0)})^2}
+
O\bigl((\eta/b)^2\bigr),
\label{eq:log_r0_expansion_appendix}
\\
\log r_G
&=
\log \rho_G^{(0)}
+
\frac{\eta}{\lambda}
\frac{c_G}{2(\rho_G^{(0)})^2}
+
O\!\left(\frac{\eta^2}{\lambda^2}\right).
\label{eq:log_rG_expansion_appendix}
\end{align}
Hence
\begin{align}
\log\frac{r_0}{r_G}
&=
\log\frac{\rho_M^{(0)}}{\rho_G^{(0)}}
+
\frac{\eta}{b}
\frac{c_M}{2(\rho_M^{(0)})^2}
-
\frac{\eta}{\lambda}
\frac{c_G}{2(\rho_G^{(0)})^2}
+
O\!\left(\frac{\eta^2}{\lambda^2}\right).
\label{eq:log_asymptotic_ratio_expansion_appendix}
\end{align}
In the explicit $O(\eta)$ It\^o correction term of
equation~\eqref{eq:mfpt_expansion_continuous_appendix}, it is sufficient to use
the leading asymptotics:
\begin{align}
\frac{1}{r_G^2}-\frac{1}{r_0^2}
=
\frac{1}{(\rho_G^{(0)})^2}
-
\frac{1}{(\rho_M^{(0)})^2}
+
O(\eta/b)+O(\eta/\lambda).
\label{eq:inverse_radius_expansion_appendix}
\end{align}

Substituting equations~\eqref{eq:log_asymptotic_ratio_expansion_appendix}
and~\eqref{eq:inverse_radius_expansion_appendix} into
equation~\eqref{eq:mfpt_expansion_continuous_appendix} gives
\begin{align}
T_{M\to G}
&=
\frac{\bar s}{\lambda}
\log\frac{\rho_M^{(0)}}{\rho_G^{(0)}}
\nonumber\\
&\quad
+
\frac{\eta}{b\lambda}
\bar s
\frac{c_M}{2(\rho_M^{(0)})^2}
\nonumber\\
&\quad
+
\frac{\eta}{\lambda^2}
\left\{
-\bar s
\frac{c_G}{2(\rho_G^{(0)})^2}
+
\frac{\bar s^2(p-2)}{4}
\left[
\frac{1}{(\rho_G^{(0)})^2}
-
\frac{1}{(\rho_M^{(0)})^2}
\right]
\right\}
\nonumber\\
&\quad
+
O\!\left(\frac{\eta^2}{\lambda^3}\right).
\label{eq:mfpt_continuous_final_appendix}
\end{align}
Define
\begin{align}
c_\tau
&:=
\bar s\,\frac{c_M}{2(\rho_M^{(0)})^2},
\nonumber\\
c_\tau^{(2)}
&:=
-\bar s\,\frac{c_G}{2(\rho_G^{(0)})^2}
+
\frac{\bar s^2(p-2)}{4}
\left[
\frac{1}{(\rho_G^{(0)})^2}
-
\frac{1}{(\rho_M^{(0)})^2}
\right].
\end{align}
Then
\begin{align}
T_{M\to G}
\approx
\frac{\bar s}{\lambda}
\log\frac{\rho_M^{(0)}}{\rho_G^{(0)}}
+
\frac{\eta}{b\lambda}c_\tau
+
\frac{\eta}{\lambda^2}c_\tau^{(2)}
+
O\!\left(\frac{\eta^2}{\lambda^3}\right).
\label{eq:mfpt_continuous_compact_appendix}
\end{align}

\paragraph{Computing Number of Iterations.}
We compute the number of iterations, since the continuous-time interpolation is $t=\eta k$:
\begin{align}
\mathbb{E}_{S_0\in\Theta_S}[\tau_{M\to G}]
=
\frac{T_{M\to G}}{\eta}.
\label{eq:continuous_to_iteration_conversion_appendix}
\end{align}
Dividing equation~\eqref{eq:mfpt_continuous_compact_appendix} by $\eta$ gives
\begin{align}
\mathbb{E}_{S_0\in\Theta_S}[\tau_{M\to G}]
\approx
\frac{\bar s}{\eta\,\lambda}
\log\frac{\rho_M^{(0)}}{\rho_G^{(0)}}
+
\frac{c_\tau}{b\,\lambda}
+
\frac{c_\tau^{(2)}}{\lambda^2}
+
O\!\left(\frac{\eta}{\lambda^3}\right),
\end{align}
which proves equation~\eqref{eq:law_transition_time_restated}.

\paragraph{$\ell_2$ Regularization Coefficient Dependence.}
In the late-stage radial SDE, the diffusion scale is $\varepsilon=\eta$ and is independent
of batch size; the leading first-passage time is obtained by setting $\varepsilon=0$ and is
determined only by the deterministic radial drift $-\lambda r/\bar s$. Batch size enters the
leading term through the scalar preconditioning scale $\bar s = O(1/\sqrt{b})$, and the
corrections through the memorization asymptotic $\rho_M$ --- whose $O(\eta/b)$
correction becomes $O(1/b)$ after converting to iteration time --- and through
$c_\tau = O(1/\sqrt{b})$ and $c_\tau^{(2)} = O(1/b)$.
\end{proof}